\providecommand{\pgfsyspdfmark}[3]{}
\renewcommand{\cal}[1]{\mathcal{#1}}
\renewcommand{\r}{\mathbb{R}}
\newcommand{\n}{\mathbb{N}}
\newcommand{\mmag}[1]{\left|#1\right|}
\newcommand{\Ebb}[1]{\mathbb{E}\left[#1\right]}
\newcommand{\iprod}[2]{\left\langle{#1},{#2}\right\rangle}
\newcommand{\Pacd}{\cal{P}_2^{1}(\r^{d_x})}
\newcommand{\Macd}{\cal{M}_2^{1}}
\newtheorem{assumption}{Assumption} 
\DeclareMathOperator*{\argmax}{arg\,max}
\DeclareMathOperator*{\argmin}{arg\,min}
\begin{document}

\title{Error Bounds for Particle Gradient Descent,\\ and Extensions of the log-Sobolev and Talagrand Inequalities}

\author{\name  Rocco Caprio \email rocco.caprio@warwick.ac.uk \\
       \addr Department of Statistics\\
       University of Warwick\\
       Coventry, CV4 7AL, UK
       \AND
       \name Juan Kuntz 
       \AND
       \name Samuel Power \email sam.power@bristol.ac.uk \\
       \addr School of Mathematics \\
       University of Bristol\\
       Bristol, BS8 1UG, UK
       \AND
       \name Adam M. Johansen \email a.m.johansen@warwick.ac.uk \\
       \addr Department of Statistics\\
       University of Warwick\\ 
       Coventry, CV4 7AL, UK}

\editor{Alexandre Bouchard}

\maketitle

\begin{abstract}%
We derive non-asymptotic error bounds for particle gradient descent (PGD, \citet{PGD}), a recently introduced algorithm for maximum likelihood estimation of large latent variable models obtained by discretizing a gradient flow of the free energy. 
We begin by showing that the flow converges exponentially fast to the free energy's minimizers for models satisfying a condition that generalizes both the log-Sobolev and the Polyak--Łojasiewicz inequalities (LSI and PŁI, respectively). We achieve this by extending a result well-known in the optimal transport literature (that the LSI implies the Talagrand inequality) and its counterpart in the optimization literature (that the PŁI implies the so-called quadratic growth condition), and applying the extension to our new setting. We also generalize the Bakry--\'Emery Theorem and show that the LSI/PŁI  extension holds for models with strongly concave log-likelihoods. For such models, we further control PGD's discretization error and obtain the non-asymptotic error bounds. While we are motivated by the study of PGD, we believe that the inequalities and results we extend may be of independent interest.
\end{abstract}

\begin{keywords}
latent variable models,  maximum marginal likelihood, gradient flows, log-Sobolev inequality, Polyak--Łojasiewicz inequality, Talagrand inequality, quadratic growth condition. 
\end{keywords}

	\section{Introduction}
	
	Many tasks in machine learning and statistics require fitting a probabilistic model---with Lebesgue density, $p_\theta(x,y)$, and featuring latent variables, $x$---to data, $y$, we have observed. Often, we achieve this by finding model parameters, $\theta$, that maximize the probability, $p_\theta(y)$, of observing the data we observed (the \emph{marginal likelihood}). That is, $\theta_*$ belonging to
	\begin{equation} \label{eq:MLE}
		\cal{O}_*:= \argmax_{\theta\in\r^{d_\theta}} p_\theta(y)= \argmax_{\theta\in \r^{d_\theta}} \int p_\theta(x,y)\dif x
	\end{equation}
	(we assume throughout that the latent variables and parameters respectively take values in $\r^{d_\theta}$ and $\r^{d_x}$). In many cases, the latent variables are meaningful or interesting in some way, and we would like to infer them. Following the empirical Bayes paradigm~\citep{Robbins1955}, we do this using the posterior distribution of the latent variables given the data for the optimal parameters $\theta_*$:
	$$p_{\theta_*}(x|y):=\frac{p_{\theta_*}(x,y)}{p_{\theta_*}(y)}.$$
	For most models of practical interest, the integral in \eqref{eq:MLE} is intractable, we have no closed-form expressions for $p_\theta(y)$ or its derivatives, and we are unable to directly optimize $p_\theta(y)$. Often, this hurdle is overcome by noting that $( \theta_*, q_*)$ minimizes the \emph{free energy} functional,
	\begin{equation}
		\label{eq:freeenergy}
		F( \theta, q):=\left\{\begin{array}{cl} 
			\displaystyle\int \log\left(\frac{q(x)}{p_\theta(x,y)}\right) q(\dif x)&\text{if } q\ll \dif x\\ +\infty&\text{otherwise}\end{array}\right. \quad \forall (\theta, q) \in \cal{M},
	\end{equation}
	if and only if $\theta_*$ maximizes the marginal likelihood and $q_* = p_{\theta_*} (\cdot \mid y)$; where $\dif x$ in \eqref{eq:freeenergy} denotes the Lebesgue measure  and $\cal{M}$ the product of the parameter space, $\r^{d_\theta}$, and the space, $\cal{P}(\r^{d_x})$, of probability distributions over the latent space $\r^{d_x}$. For instance, the well-known expectation-maximization (EM) algorithm~\citep{Dempster1977} can be viewed as minimizing $F$ using coordinate descent (CD)~\citep{Neal1998}. Similarly, many methods in variational inference (approximately) minimize $F$ by (i) restricting $\cal{P}(\r^{d_x})$ to a parametric family of distributions $(q_\phi)_{\phi \in \Phi}$ such that the surrogate objective $( \theta, \phi) \mapsto F (\theta, q_\phi)$ is tractable, (ii) computing a minimizer $(\theta_*, \phi_*)$ thereof
	using an appropriate optimization algorithm, and (iii) employing $(\theta_*, q_{\phi_*})$ as a proxy for a genuine minimizer of $F$; e.g., see~\citet{Kingma2019}.
	
	Inspired by the interpretation of the EM algorithm as CD applied to $F$, \citet{PGD} examines whether it is possible to minimize $F$ over $\cal{M}$, rather than any restriction thereof, using analogues of optimization algorithms other than CD. In particular, they identify analogues of gradient descent (GD) applicable to $F$. To do so, they recall that the application of GD to minimizing a function $f:\r^{d_\theta}\to\r$ may be viewed as the forward Euler discretization of the ordinary differential equation (ODE) known as the \emph{gradient flow}:
	\begin{equation}\label{eq:gradient_flow_euclidean}
		\dot{\theta}_t = -\nabla_\theta f(\theta_t),
	\end{equation}
	where $\nabla_\theta$ denotes the usual Euclidean gradient on $\r^{d_\theta}$. Borrowing ideas from optimal transport, they identify $\nabla F = (\nabla_\theta F, \nabla_q F)$, with
	\begin{equation}\label{eq:gradF}
		\nabla_\theta F( \theta, q) = - \int \nabla_\theta \ell(\theta,x) q(\dif x),\quad 
		\nabla_q F( \theta, q ) = \nabla_x \cdot \left[ q \nabla_x \log\left(\frac{p_{\theta}(\cdot, y)}{q}\right)\right],
	\end{equation}
	and $\ell(\theta, x):=\log(p_\theta(x,y))$, as an analogue to $\nabla_\theta f$ and obtain the following gradient flow dynamics for the parameter estimate $\theta_t$ and the latent posterior approximation $q_t$:
	\begin{align}\label{eq:pde}
		\dot{\theta}_t = \int \nabla_\theta \ell(\theta_t,x)q_t(\dif x),\qquad
		\dot{q}_t=\nabla_x\cdot\bigg[q_t \nabla_x\log\bigg(\frac{q_t}{p_{\theta_t}(\cdot|y)}\bigg)\bigg].
	\end{align}
	They go on to (i) interpret~\eqref{eq:pde} as the Fokker--Planck equation for the following McKean--Vlasov stochastic differential equation (SDE; e.g., see~\citet{Chaintron2022}):
	\begin{align}\label{eq:sde1}
		\dif \theta_t = \int \nabla_\theta \ell( \theta_t, x ) q_t(\dif x)\dif t,\quad
		\dif X_t = \nabla_x \ell( \theta_t, X_t) \dif t + \sqrt{2} \dif W_t, \hspace{0.3em}\text{where} \enskip q_t:=\textrm{Law}(X_t);
	\end{align}
	and (ii) to approximate the law $q_t$ with an empirical distribution $N^{-1} \sum_{n=1}^N \delta_{X^n}$ of \emph{particles} $\smash{X^1, \ldots, X^N}$; and (iii) discretize the resulting SDE in time. The result is a practical algorithm for fitting latent variable models they refer to as particle gradient descent (PGD, Algorithm~\ref{alg:pgd}; see also \citet{Wang2025} for a version suited to latent diffusion models):
	\begin{algorithm}[H]
		\begin{algorithmic}[1]
			\STATE{\textbf{Inputs:} step size $h$, step number $K$, particle number $N$, and initial particles $X^{1,h}_0,\dots,$ $X_0^{N,h}$ and parameter $\Theta_0^{N,h}$.\hspace{-1pt}}
			\FOR{$k=0,\dots, K-1$}
			\STATE{Update the parameter estimate:\vspace{-16pt}
				\begin{equation}\label{eq:pgd_update_theta}\Theta_{k+1}^{N,h} = \Theta_{k}^{N,h} + \frac{h}{N}\sum_{n=1}^N \nabla_\theta  \ell(\Theta_k^{N,h},X_k^{n,h}).\end{equation}\vspace{-12pt}}
			\STATE{Update the particles: with $W_k^1,\dots,W_k^N$ denoting i.i.d.\ $\cal{N}(0,I_{d_x})$ r.v.s,
				\begin{equation}
					X_{k+1}^{n,h}=X_{k}^{n,h}+h\nabla_x \ell(\Theta_k^{N,h},X_{k}^{n,h})+\sqrt{2h}W_k^n\quad \forall n=1,\dots,N.\label{eq:pgd_update_x}
				\end{equation}\vspace{-24pt}
			}
			\ENDFOR
			\RETURN{$\Theta_K^{N,h}$ and $Q^{N,h}_K:=N^{-1}\sum_{n=1}^N\delta_{X_K^{n,h}}$.}
		\end{algorithmic}
		\caption{Particle gradient descent (PGD).}
		\label{alg:pgd}
	\end{algorithm}
	PGD performs well in experiments~\citep{PGD} and is well-suited to modern computing environments. Its updates~(\ref{eq:pgd_update_theta},\ref{eq:pgd_update_x}) require only evaluating $\nabla \ell := ( \nabla_\theta \ell, \nabla_x \ell)$ and adding noise, thus facilitating its implementation in modern autodiff frameworks such as TensorFlow, PyTorch, and JAX. Moreover, each particle's update~\eqref{eq:pgd_update_x} is independent of all other particles, offering scope for simple parallelisation and distribution of these steps. Implemented naively, a full run of the algorithm costs $\cal{O}(N K [$evaluation cost of $\nabla\ell])$ operations, where $N$ denotes the number of particles employed and $K$ the total number of steps taken. The running time's dependence on $N$ can be further reduced through the aforementioned parallelism strategies.
	
	Here, we  validate theoretically PGD and show that, for models whose log-likelihoods $\ell$ are $\lambda$-strongly concave with $L$-Lipschitz gradients, its output, $(\Theta_K^{N,h},Q^{N,h}_K)$, satisfies 
	\begin{equation}\label{eq:errorbigO}
		\mathsf{d} (( \Theta_K^{N,h}, Q^{N,h}_K ), ( \theta_*, Q_*^N )) = \mathcal{O} (h^\frac{1}{2} + N^{-\frac{1}{2}} + e^{-h\lambda K}),
	\end{equation}
	assuming that the algorithm's step size $h$ is no greater than $1/(\lambda+L)$. In the above, $Q_*^N$ denotes the empirical distribution of $N$ i.i.d.~particles drawn from $p_{\theta_*}(\cdot|y)$ and $\mathsf{d}$ the following metric  acting on $\cal{M}$-valued random variables:
	\begin{equation}\label{eq:dmetric}
		\mathsf{d} (( \Theta, Q ),(\Theta',Q')) := \sqrt{\Ebb{\mathsf{d}_{E} (\Theta,\Theta')^2} + \Ebb{\mathsf{d}_{W_2} (Q,Q')^2}}
	\end{equation}
	where $\mathsf{d}_E$ and $\mathsf{d}_{W_2}$ respectively denote the Euclidean and Wasserstein-2 metrics.
	
	We begin by studying PGD's continuous-time infinite-particle limit: the gradient flow~\eqref{eq:pde}. First, we show that the free energy $(F(\theta_t,q_t))_{t\geq0}$ evaluated along the flow converges at an exponential rate to its infimum, 
	$$F_*:=\inf_{(\theta,q)\in\cal{M}}F(\theta,q),$$ 
	provided that the model satisfies a condition which generalizes the log-Sobolev inequality (LSI) popular in optimal transport (e.g., see \citealp[Chapter 21]{Villani2009}) and the Polyak--Łojasiewicz inequality (PŁI) used in optimization \citep{Karimi2016}. In particular, we assume that there exists a constant $\lambda > 0$ such that 
	\begin{equation}\label{eq:extlogsobolev}
		2 \lambda [F (\theta, q) - F_*] \leq I( \theta, q ) 
	\end{equation}
	for all parameters $\theta$ and distributions $q$, where the functional $I$ is defined by
	\begin{align} 
		\label{eq:fisherinformation}
		I (\theta, q) :=& \norm{\int \nabla_\theta \ell ( \theta, x ) q(\dif x)}^2 + \int \norm{\nabla_x \log \left( \frac{q(x)}{p_{\theta} (x,y)}\right)}^2 q(\dif x),
	\end{align}
	with $\norm{\cdot}$ denoting the Euclidean norm. We then extend a result of \citet{Otto2000} and show that models which satisfy~\eqref{eq:extlogsobolev} also satisfy the following generalization of both the Talagrand inequality~\citep{Talagrand1996} and the \emph{quadratic growth} condition used in optimization~\citep{Anitescu2000,Karimi2016}: for all parameters $\theta$ and distributions $q$, it holds that
	\begin{equation} \label{eq:exttalagrand}
		2 [F (\theta, q) - F_*] \geq \lambda \mathsf{d} (( \theta, q ),\cal{M}_*)^2 ,
	\end{equation}
	where $\cal{M}_*$ denotes $F$'s optimal set and $ \mathsf{d} (( \theta, q ),\cal{M}_*)$ the distance from $(\theta,q)$ to $\cal{M}_*$:
	\begin{equation}\label{eq:Mstar}
		\cal{M}_* := \argmin_{( \theta, q ) \in \cal{M}} F( \theta, q ),\quad   \mathsf{d} (( \theta, q ),\cal{M}_*):=\inf_{ (\theta_*, q_*) \in \cal{M_*}} \mathsf{d}( (\theta, q), (\theta_* ,q_*)).
	\end{equation}
	Using~\eqref{eq:exttalagrand}, we then show that the flow itself $(\theta_t, q_t)_{t\geq0}$ converges to $\cal{M}_*$ exponentially fast in $\mathsf{d}$.
	Next, we generalize the Bakry--{\'E}mery Theorem and show that models with strongly concave likelihoods $\ell$ satisfy \eqref{eq:extlogsobolev}. For such models, we obtain a slightly stronger convergence result. Under the further assumption that $\ell$'s gradients are Lipschitz, we are able to bound the errors introduced by both approximating distributions with empirical measures and discretizing~\eqref{eq:pde} in time, and the bound \eqref{eq:errorbigO} follows.
	\subsection{Related Literature}
		The analysis provided in \citet{PGD} was limited: as explained in the `Our setting, notation, assumptions, rigour, and lack thereof' paragraph of the paper's introduction, it focused on intuition and practical application rather than theoretical validation.  While the authors obtained no error bounds, they did argue with their Theorem~3 that the gradient flow~\eqref{eq:pde} converges exponentially fast to $F$'s minimizer whenever $\ell$ is strongly concave. However, they imposed an unnecessary technical condition that substantially restricts the applicability of their result (that the gradient of $\ell$ be bounded uniformly). 
	
	Recently, several works have proposed particle-based algorithms for maximum marginal likelihood estimation. \citet{Akyildiz2023} consider an alternative approximation to the flow~\eqref{eq:pde} which they term the `Interacting Particle Langevin Algorithm' or IPLA; see also \citet{Encinar2024} for a version applicable to non-differentiable models. IPLA differs from PGD by the inclusion of an extra additive Gaussian noise term $\sqrt{2h/N}W_k$ in the parameter's update~\eqref{eq:pgd_update_theta}. This difference enables an analysis of the algorithm using tools similar to those used to study the unadjusted Langevin algorithm in \citet{Durmus2017,Durmus2019}. In particular, the authors show that the $\cal{L}^2$ error of the parameter estimates produced by IPLA satisfies (\ref{eq:errorbigO},RHS) for models satisfying the same Lipschitz gradient and strong concavity assumptions made here, which we find pleasantly consistent with our results in Section \ref{sec:pgdconvergence}.  \citet{Lim2023,Chen2023,Oliva2024} consider variants of PGD and IPLA that incorporate momentum in the updates and can outperform PGD and IPLA in practice. These variants no longer approximate the gradient flow~\eqref{eq:pde} and, consequently, require a different type of analysis.  In particular, \citet{Lim2023} established exponentially fast convergence of their algorithm's limiting (non-gradient) flow under \eqref{eq:extlogsobolev}, while \citet{Oliva2024} obtained non-asymptotic error bounds for their algorithms' parameter estimates also under Lipschitz gradient and strong concavity assumptions.  \citet{Sharrock2023} introduced two new particle-based methods for minimizing $F$. The first, `Stein Variational Gradient Descent EM' or SVGD EM, is also constructed as an approximation to a gradient flow of $F$, but with respect to a different underlying geometry---the so-called `Stein Geometry' \citep{Liu2016,Duncan2023}, as opposed to the Euclidean/Wasserstein-2 geometry underpinning~\eqref{eq:gradF} (whose induced distance metric is given by~\eqref{eq:dmetric} omitting the expectations; cf. \citealp[Appendix A]{PGD} for more on this). The second, `Coin EM', involves an entirely different approach for minimizing $F$ that builds on coin betting techniques from convex optimization and obviates the need for tuning discretization step sizes. Only SVGD-EM is analyzed theoretically. To do so, the authors assume a natural `Stein' analogue of \eqref{eq:extlogsobolev}: Assumption 7 in \citet[Appendix~A]{Sharrock2023}.
	\citet{Fan2023} considers a non-parametric maximum likelihood estimation problem where they optimize a measure over the parameters rather than the parameters themselves by approximating $F$'s gradient flow in the Fisher--Rao/Wasserstein-2 geometry. Finally, \citet{Crucinio2025} recently proposed an approach for maximum marginal likelihood estimation based on mirror descent and sequential Monte Carlo methods to minimize $F$.
 
	Our results in Section~\ref{sec:flowconvergence} relate to earlier results well-known in the optimal transport and optimization literatures. We discuss these connections as we go along.
	\subsection{Paper Structure}
	In Section \ref{sec:flowconvergence}, we study the convergence of the gradient flow \eqref{eq:pde} and the inequalities~(\ref{eq:extlogsobolev},\ref{eq:exttalagrand}). In Section \ref{sec:pgdconvergence}, we analyze the flow's approximation (PGD; Algorithm~\ref{alg:pgd}) and obtain non-asymptotic bounds on its error. We conclude in Section~\ref{sec:conclusion} by discussing our results beyond the context of PGD.
	\section{Convergence of the Gradient Flow}\label{sec:flowconvergence}
	
	In this section, we study the gradient flow~\eqref{eq:pde} and several pertinent inequalities. As we explain in what follows, these inequalities generalize others well-known in the literature, and we prefix the former with `extended' to differentiate them from the latter. We start by showing that for models satisfying the extended LSI (xLSI)~\eqref{eq:extlogsobolev}, the values of $F$ along the flow converge exponentially fast to the infimum of $F$ (Section \ref{sec:lsi}). We next argue that any model satisfying the xLSI also satisfies the extended Talagrand-type inequality~\eqref{eq:exttalagrand}, and we use this result to show that the trajectory of the flow itself converges exponentially fast to $(\theta_*, p_{\theta_*}(\cdot|y))$ for some maximizer $\theta_*$ of the marginal likelihood (Section~\ref{sec:talagrand}). Finally, we show that the xLSI holds for models with $\lambda$-strongly concave log-likelihoods $\ell$, and we obtain a slightly stronger convergence result for this case (Section~\ref{sec:bakry-emery}). 
	
	\paragraph{Notation and Assumptions.} For the remainder of the paper, we write $\rho_\theta(\cdot)$ for the likelihood $p_\theta( \cdot, y)$, $Z_\theta$ for the marginal likelihood $p_\theta(y)$, and $\pi_\theta(\cdot)$ for the posterior distribution $p_\theta( \cdot \mid y)$. To ensure that  $\mathsf{d}$ in~\eqref{eq:dmetric} is a metric, we consider the restriction of $\cal{P}(\r^{d_x})$ to the subset of  measures with finite second moments, and similarly for $\cal{M}$:
	$$\cal{P}_2(\r^{d_x}):=\bigg\{q\in\cal{P}(\r^{d_x}):\int \norm{x}^2 q(\dif x) <\infty\bigg\},\quad\cal{M}_2:=\r^{d_\theta}\times\cal{P}_2(\r^{d_x}).$$
	To ensure that the  functional $I$ in~\eqref{eq:fisherinformation} is well-defined, we further restrict restrict $\cal{P}_2(\r^{d_x})$ to the subset of measures with densities differentiable almost everywhere w.r.t.~to the Lebesgue measure $\dif x$, and similarly for $\cal{M}_2$: 
	\begin{align*}
		\Pacd &:= \bigg\{q \in \cal{P}_2(\r^{d_x}) : q\ll \dif x,\,  \nabla_x \frac{\dif q}{\dif x}(x) \text{ exists a.e.} \bigg\},\quad \cal{M}_2^1:=\r^{d_\theta}\times\Pacd.
	\end{align*}
	We also assume that the model is suitably differentiable:
	\begin{assumption}[Model regularity]\label{ass:model} (i) For all $x$ in $\r^{d_{x}}$, $\theta \mapsto \pi_\theta (x)$ is differentiable; and $\theta \mapsto Z_\theta$ is differentiable; (ii) for all $\theta$ in  $\r^{d_\theta}$, $\pi_\theta$ is twice continuously differentiable;  (iii) for all $\theta$ in $\r^{d_{\theta}}$ and $x$ in $\r^{d_x}$, $\rho_\theta(x)>0$;  
	\end{assumption}
	Next, we assume that the gradient flow~\eqref{eq:pde} has sufficiently regular solutions. To this end, let $\cal{C}^i(\cal{X},\cal{Y})$ denote the space of $i$-times continuously differentiable functions from $\cal{X}$ to $\cal{Y}$, $\cal{C}^i_c(\cal{X},\cal{Y})$ the subspace of such functions with compact support, and $\cal{C}^{i,j}(\cal{X}\times\cal{X}',\cal{Y})$ denote the functions that are $\cal{C}^i(\cal{X},\cal{Y})$ in the first variable and $\cal{C}^j(\cal{X}',\cal{Y})$ in the second. 
	\begin{assumption}[Regularity of solutions]\label{ass:solutions_existence} 
		For any initial conditions $(\theta, q)$ in $\cal{M}_2$, \eqref{eq:pde} has a classical solution $(\theta_t, q_t)_{t\geq0}$ with $(\theta_0, q_0)=(\theta,q)$. For any such solution, $q_t(dx)$ has a Lebesgue density $q_t(x)$ for all $t> 0$; $(t,x)\mapsto q_t(x)$ belongs to $\cal{C}^{1,2}([0,\infty)\times\r^{d_x},(0,\infty))$, and $t\mapsto\theta_t$ belongs to $\cal{C}^1([0,\infty),\r^{d_\theta})$.
	\end{assumption}
	We verify in Section~\ref{sec:pgdconvergence} that the above holds whenever the model's log-likelihood $\ell$ has a Lipschitz gradient. We believe it will also hold under weaker conditions on $\ell$ because, in the special case that $\pi_\theta$ is a single distribution $\pi$ independent of $\theta$, the flow~\eqref{eq:pde} essentially reduces to the following well-known Fokker--Planck equation,
	\begin{equation}\label{eq:fpe}\dot{q}_t=\nabla_x\cdot\bigg[q_t \nabla_x\log\bigg(\frac{q_t}{\pi}\bigg)\bigg];\end{equation}
    and stronger regularity properties for \eqref{eq:fpe} have been established under weaker conditions on $\ell$. For example, \citet[Theorem 5.1]{JKO1998} shows that the solutions are smooth whenever $\log(\pi)$ is smooth. 
	
	%
	%
	\subsection{The Extended log-Sobolev Inequality and $(F(\theta_t,q_t))_{t\geq0}$'s Convergence}\label{sec:lsi}
	We prove the flow's convergence for models which satisfy the xLSI in the following sense: 
	\begin{definition}[Extended log-Sobolev inequality (xLSI)]\label{def:LSI} We say that the measures $(\rho_\theta (\dif x))_{\theta \in \r^{d_\theta}}$ satisfy the extended log-Sobolev inequality (xLSI) with constant $\lambda > 0$ if \eqref{eq:extlogsobolev} holds for all $(\theta, q)$ in $\Macd$.
	\end{definition}
	\noindent 
	Under this assumption, the values of $F$ converge along the gradient flow exponentially fast to $F$'s infimum over $\cal{M}_2$:
	\begin{theorem}[xLSI $\Rightarrow$ exp.~conv.~of $(F(\theta_t,q_t))_{t\geq0}$] \label{thm:flowconvergence} 
		If Assumptions~\ref{ass:model}--\ref{ass:solutions_existence} hold, $(\theta_0,q_0)$ belongs to $\cal{M}_2$, and the measures $(\rho_\theta (\dif x))_{\theta \in \mathbb{R}^{d_\theta}}$ satisfy the xLSI with constant $\lambda>0$, then
		\begin{equation} \label{eq:flowentropicconvergence}
			0 \leq  F (\theta_t, q_t) - F_* \leq [F( \theta_0, q_0 )-F_*] e^{-2\lambda t} \quad \forall t \geq 0.
		\end{equation}
	\end{theorem}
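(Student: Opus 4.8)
The plan is the standard ``entropy-entropy production'' argument from the theory of gradient flows, transplanted to the free-energy functional $F$ on $\cal{M}_2$. The core observation is that along the flow~\eqref{eq:pde}, the quantity $I(\theta_t,q_t)$ in~\eqref{eq:fisherinformation} is exactly the (negative) time-derivative of $F(\theta_t,q_t)$; combining this identity with the xLSI~\eqref{eq:extlogsobolev} gives a differential inequality of Gr\"onwall type, and integrating yields~\eqref{eq:flowentropicconvergence}.

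First I would fix initial data $(\theta_0,q_0)\in\cal{M}_2$ and invoke Assumption~\ref{ass:solutions_existence} to obtain a classical solution $(\theta_t,q_t)_{t\geq0}$ with the stated regularity; the smoothness and positivity in Assumptions~\ref{ass:model}--\ref{ass:solutions_existence} are what justify differentiating under the integral sign and the integration-by-parts below. The key computation is to differentiate $t\mapsto F(\theta_t,q_t)$ in time. Writing the two contributions separately, the parameter part contributes $\iprod{\nabla_\theta F(\theta_t,q_t)}{\dot\theta_t}$ and, using $\dot\theta_t=-\nabla_\theta F(\theta_t,q_t)=\int\nabla_\theta\ell(\theta_t,x)q_t(\dif x)$ from~\eqref{eq:gradF}--\eqref{eq:pde}, this equals $-\norm{\int\nabla_\theta\ell(\theta_t,x)q_t(\dif x)}^2$. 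The measure part is $\int \frac{\delta F}{\delta q}(\theta_t,q_t)(x)\,\dot q_t(\dif x)$; since $\frac{\delta F}{\delta q}(\theta,q)(x)=\log(q(x)/\rho_\theta(x))+1$ (up to an additive constant irrelevant because $\dot q_t$ integrates to zero) and $\dot q_t=\nabla_x\cdot[q_t\nabla_x\log(q_t/\pi_{\theta_t})]$, an integration by parts turns this into $-\int\norm{\nabla_x\log(q_t(x)/\pi_{\theta_t}(x))}^2 q_t(\dif x)=-\int\norm{\nabla_x\log(q_t(x)/\rho_{\theta_t}(x))}^2 q_t(\dif x)$ (the normalizing constant $Z_{\theta_t}$ drops out of the gradient). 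Hence
\begin{equation}\label{eq:dissipation}
\frac{\dif}{\dif t}F(\theta_t,q_t) = -I(\theta_t,q_t)\leq 0.
\end{equation}

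Given~\eqref{eq:dissipation}, the rest is immediate. Set $g(t):=F(\theta_t,q_t)-F_*\geq 0$. The xLSI~\eqref{eq:extlogsobolev} applied at $(\theta_t,q_t)$ (which lies in $\Macd$ for $t>0$ by the regularity in Assumption~\ref{ass:solutions_existence}, and at $t=0$ we either assume $q_0\in\Pacd$ or use continuity/a limiting argument) gives $2\lambda g(t)\leq I(\theta_t,q_t)=-g'(t)$, i.e. $g'(t)\leq -2\lambda g(t)$. Gr\"onwall's inequality then yields $g(t)\leq g(0)e^{-2\lambda t}$, which is precisely the right-hand bound in~\eqref{eq:flowentropicconvergence}; the left-hand inequality $F(\theta_t,q_t)\geq F_*$ is just the definition of $F_*$ as an infimum. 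I would also note that $g(0)<\infty$: if $F(\theta_0,q_0)=+\infty$ the bound is vacuous, so we may assume $q_0\ll\dif x$ with finite free energy.

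The main obstacle is the rigorous justification of the dissipation identity~\eqref{eq:dissipation}, specifically differentiating $F(\theta_t,q_t)$ under the integral and performing the integration by parts without boundary terms. This requires enough decay of $q_t$ and its derivatives at infinity and integrability of $\norm{\nabla_x\log(q_t/\rho_{\theta_t})}^2$ against $q_t$ — exactly the kind of control that Assumption~\ref{ass:solutions_existence}'s $\cal{C}^{1,2}$ density assumption and the model regularity in Assumption~\ref{ass:model} are designed to supply, though a careful treatment may need a truncation/approximation argument (multiply by a cutoff, integrate by parts, then pass to the limit) and possibly a dominated-convergence step to handle the $\theta$-derivative of $\ell$. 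Everything downstream of~\eqref{eq:dissipation} is a one-line Gr\"onwall argument.
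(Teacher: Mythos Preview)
Your proposal is correct and follows essentially the same approach as the paper: establish the dissipation identity $\frac{\dif}{\dif t}F(\theta_t,q_t)=-I(\theta_t,q_t)$ (which the paper packages as a separate de~Bruijn-type lemma under Assumption~\ref{ass:solutions_existence}), combine it with the xLSI, and apply Gr\"onwall. Your write-up is in fact more explicit about the two contributions to the time derivative and the technical caveats than the paper's own proof, which simply cites the lemma and concludes.
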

	
	\noindent {\bf Proof}{.
		The leftmost inequality in \eqref{eq:flowentropicconvergence} follows from the $F_*$'s definition. As we show in Lemma \ref{lemma:debruijin} in Appendix \ref{app:debruijn}, the following extension to de Bruijn's identity holds:
		\begin{equation} \label{eq:debruijn}
			\frac{\dif}{\dif t}F(\theta_t,q_t) = -I(\theta_t,q_t) \quad \forall t>0.
		\end{equation}
        Now the rightmost equality follows by combining the xLSI~\eqref{eq:extlogsobolev} with \eqref{eq:debruijn} and applying Gr\"onwall's inequality. 
	} \hfill\BlackBox \\
    \newline 
	As is well known,
\begin{equation}\label{eq:straightforwardcalcs1}F(\theta,q)=\mathrm{KL}(q||\pi_{\theta})-\log(Z_\theta)\quad\forall (\theta,q)\in\cal{M},\end{equation}
	where $\text{KL}$ denotes the Kullback--Leibler divergence:
	\begin{equation*}\mathrm{KL}(q||\pi) := \left\{\begin{array}{cl} 
			\displaystyle\int \log\left(\frac{q(x)}{\pi(x)}\right) q(\dif x)&\textup{if } q\ll \dif x\\ +\infty&\textup{otherwise}\end{array}\right.  \quad \forall q \in \cal{P}(\r^{d_x}),\end{equation*}
	Because $\mathrm{KL}(q||q')$ is non-negative and zero iff $q=q'$ a.e., \eqref{eq:straightforwardcalcs1}~implies that
	\begin{align}\label{eq:straightforwardcalcs2}
		F_* = \inf_{( \theta, q ) \in \cal{M}} F( \theta, q ) =\inf_{\theta\in\r^{d_\theta}}F(\theta,\pi_\theta)=-\log \left( \sup_{\theta \in \r^{d_\theta}} Z_\theta \right) =-\log Z_*,
	\end{align}
	where $Z_*:=\sup_{\theta\in\r^{d_\theta}}Z_\theta$ denotes the marginal likelihood's supremum. Putting (\ref{eq:flowentropicconvergence}--\ref{eq:straightforwardcalcs2}) together, we find that
	$$\log(Z_*)-\log(Z_{\theta_t})+\mathrm{KL}(q_t||\pi_{\theta_t})=F(\theta_t,q_t)+\log(Z_*)=\cal{O}(e^{-2\lambda t}).$$
	In other words, Theorem~\ref{thm:flowconvergence} shows that as $t$ increases: the free energy $F(\theta_t,q_t)$ converges exponentially fast to $-\log Z_*$, the log-marginal likelihood $\log(Z_{\theta_t})$ converges exponentially fast to its supremum $\log(Z_*)$, and $q_t$ tracks the corresponding posterior distributions $\pi_{\theta_t}$ in the sense that the $\mathrm{KL}$ divergence between the two decays exponentially fast to zero. \\

	\textit{Connections with the log-Sobolev inequality.} 
	We refer to the inequality in Definition~\ref{def:LSI} as the `xLSI' because it extends the classical log-Sobolev inequality (LSI) \citep{Gross1975}; e.g., see \citet[Definition~21.1]{Villani2009}. While the xLSI is a statement about a parametrized family of measures, the LSI is a statement about a single probability measure.
	In particular, if $\pi_\theta$ is a distribution $\pi$ independent of $\theta$, then~\eqref{eq:extlogsobolev} reads 
	\begin{equation} \label{eq:stdlogsobolev}
		2 \lambda \mathrm{KL} (q||\pi) \leq I (q||\pi) \enskip \forall q \in \Pacd;\enskip\text{where}\enskip I(q||\pi) := \int \norm{\nabla_x\log\left(\frac{q(x)}{\pi(x)}\right)}^2 q(\dif x).
	\end{equation}
	For this reason, Theorem~\ref{thm:flowconvergence} extends a well-known result (e.g., \citet[Section 5.2]{Bakry2014}) stating that, if $\pi$ satisfies the LSI~\eqref{eq:stdlogsobolev} and $(q_t)_{t\geq0}$ solves the Fokker--Planck equation~\eqref{eq:fpe}, then the $\mathrm{KL}$ divergence from $q_t$ to $\pi$ decays exponentially fast. \\

	\textit{Connections with the Polyak--Łojasiewicz inequality.} 
	Definition~\ref{def:LSI} also extends an inequality due to \citet{Polyak1963} and \citet{Lojasiewicz1963} commonly used to argue linear convergence of gradient descent algorithms. A differentiable real-valued function $f$ on $\r^{d_\theta}$ with infimum $f_*$ is said to satisfy the PŁI with constant $\lambda > 0$ if
	\begin{equation} \label{eq:PL}
		2\lambda [f(\theta)-f_*] \leq \norm{\nabla_\theta f(\theta)}^2\quad \theta\in \r^{d_\theta}.
	\end{equation}
	In particular, because
	\begin{equation}\label{eq:pl_der}
		\int \nabla_\theta \ell (\theta,x) \pi_\theta(\dif x) = \int \frac{\nabla_\theta \rho_\theta(x)}{\rho_\theta(x)} \pi_\theta(\dif x) = \frac{\nabla_\theta\int  \rho_\theta(x) \dif x}{Z_\theta} = \frac{\nabla_\theta Z_\theta}{Z_\theta} = \nabla_\theta \log(Z_\theta),
	\end{equation}
	setting $q:=\pi_\theta$ in \eqref{eq:extlogsobolev}, we recover~\eqref{eq:PL} with $f(\theta):=-\log(Z_\theta)$ (the exchange of limits in~\eqref{eq:pl_der} is valid whenever Fisher's identity holds, which is a common assumption in the study of latent variable models and the EM algorithm; it is satisfied whenever e.g. $\pi_\theta$ belongs to an exponential family, see~\citealt[Appendix D]{Douc2014}). In this case, Theorem~\ref{thm:flowconvergence} reduces to a well-known result showing that $f$  converges exponentially fast along the Euclidean gradient flow~\eqref{eq:gradient_flow_euclidean} to its infimum whenever $f$ satisfies the PŁI; e.g.,~see~\citet[Proposition~2.3]{trillos2023}.
	Among other reasons, the PŁI~\eqref{eq:PL} is popular in optimization because---without requiring the objective to be convex---it implies that the objective's stationary points are global minimizers. In our case, the xLSI implies that the marginal likelihood's stationary points are global maximizers. \\
	\subsection{The Talagrand Inequality and the Flow's Exponential Convergence}\label{sec:talagrand}
	We now show that, for models satisfying the xLSI, the gradient flow $(\theta_t,q_t)_{t\geq0}$ itself converges exponentially fast to the free energy's minimizers. We measure the convergence in terms of the following metric on $\cal{M}_2$:
	\begin{align}\label{eq:ddeterministic}
		\mathsf{d}(( \theta, q ),(\theta',q')):=\sqrt{\mathsf{d}_E(\theta,\theta')^2+\mathsf{d}_{W_2}(q,q')^2}.
	\end{align}
	Since $F$ is a lower semicontinuous function on $(\cal{M}_2,\mathsf{d})$ (see Lemma~\ref{lemma:Flsc} in Appendix~\ref{app:exttalagrand}), the set $\cal{M}_*$ in \eqref{eq:Mstar} of the free energy's minima is closed (note that Assumption~\ref{ass:model} and~\eqref{eq:straightforwardcalcs2} imply that $\cal{M}_*$ is contained in $\cal{M}_2$). Using (\ref{eq:straightforwardcalcs1},\ref{eq:straightforwardcalcs2}), we can characterize $\cal{M}_*$ in terms of the marginal likelihood's set of maximizers  ($\cal{O}_*$ in~\eqref{eq:MLE}):
	\begin{equation}\label{eq:optimalset}
		\cal{M}_* = \{(\theta_*,\pi_{\theta_*}) : \theta_* \in \cal{O}_*\}.
	\end{equation}
	To argue the flow's convergence, we require the following extension of the Talagrand inequality:
	\begin{definition}[Extension of the Talagrand inequality (xT$_2$I)]\label{def:talagrand} 
		We say that the measures $(\rho_\theta(\dif x))_{\theta\in\r^{d_\theta}}$ satisfy an extension of the Talagrand inequality (xT$_2$I) with constant $\lambda > 0$ if \eqref{eq:exttalagrand} holds for all $( \theta, q )$ in $\cal{M}_2$.
	\end{definition}
	\noindent The inequality holds for all models satisfying the xLSI:
	\begin{theorem}[xLSI $\Rightarrow$ xT$_2$I]\label{thm:exttalagrand}
		If Assumptions~\ref{ass:model}--\ref{ass:solutions_existence} hold, and the measures $(\rho_\theta(\dif x))_{\theta\in\r^{d_\theta}}$ satisfy the xLSI with constant $\lambda>0$, then they also satisfy the xT$_2$I with the same constant. 
	\end{theorem}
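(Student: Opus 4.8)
The plan is to adapt the Otto--Villani argument~\cite{Otto2000} deriving the Talagrand inequality from the LSI to the joint parameter--measure setting, the point being that the functional $I$ in \eqref{eq:fisherinformation} plays two roles along the flow: by the extended de Bruijn identity \eqref{eq:debruijn} it is the dissipation rate of $F$, and it is also the squared $\mathsf{d}$-speed of the curve $(\theta_t,q_t)_{t\ge0}$. Concretely, $q_t$ solves the continuity equation $\partial_t q_t+\nabla_x\cdot(q_t v_t)=0$ with velocity $v_t:=-\nabla_x\log(q_t/\pi_{\theta_t})$, and $\|\dot\theta_t\|^2+\|v_t\|_{L^2(q_t)}^2=I(\theta_t,q_t)$; hence, invoking the standard estimate $\mathsf{d}_{W_2}(q_s,q_t)\le\int_s^t\|v_r\|_{L^2(q_r)}\,\dif r$ for solutions of the continuity equation, the bound $\mathsf{d}_E(\theta_s,\theta_t)\le\int_s^t\|\dot\theta_r\|\,\dif r$, and the triangle inequality for $\r^2$-valued integrals,
\[
\mathsf{d}\big((\theta_s,q_s),(\theta_t,q_t)\big)\le\int_s^t\sqrt{I(\theta_r,q_r)}\,\dif r,\qquad 0\le s\le t.
\]

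I would then fix $(\theta,q)\in\cal{M}_2$ (assuming $F(\theta,q)<\infty$, else \eqref{eq:exttalagrand} is vacuous), run the flow from $(\theta,q)$ (Assumption~\ref{ass:solutions_existence}), and set $H(t):=F(\theta_t,q_t)-F_*$. For $t>0$ we have $(\theta_t,q_t)\in\Macd$, so \eqref{eq:debruijn} gives $H'(t)=-I(\theta_t,q_t)$ and the xLSI gives $I(\theta_t,q_t)\ge 2\lambda H(t)$; by Theorem~\ref{thm:flowconvergence}, $H\ge0$ and $H(t)\to0$. The Otto--Villani manipulation is that wherever $H(r)>0$,
\[
\sqrt{I(\theta_r,q_r)}\le\frac{I(\theta_r,q_r)}{\sqrt{2\lambda H(r)}}=\frac{-H'(r)}{\sqrt{2\lambda H(r)}}=-\sqrt{\tfrac{2}{\lambda}}\,\frac{\dif}{\dif r}\sqrt{H(r)},
\]
while wherever $H(r)=0$ the constraint $H\ge0$ forces $H'(r)=0$, hence $I(\theta_r,q_r)=0$ and the inequality is trivial. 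Integrating over $[s,\infty)$ for $s>0$ and using $H(\infty)=0$ then yields $\int_s^\infty\sqrt{I(\theta_r,q_r)}\,\dif r\le\sqrt{2/\lambda}\,\sqrt{H(s)}$.

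Combining this with the displayed length bound shows the flow restricted to $[s,\infty)$ has finite $\mathsf{d}$-length; since $(\cal{M}_2,\mathsf{d})$ is complete, it converges to some $(\theta_\infty,q_\infty)$, and the lower semicontinuity of $F$ (Lemma~\ref{lemma:Flsc}) together with $F(\theta_r,q_r)\to F_*$ forces $F(\theta_\infty,q_\infty)=F_*$, i.e.\ $(\theta_\infty,q_\infty)\in\cal{M}_*$. Therefore $\mathsf{d}((\theta_s,q_s),\cal{M}_*)\le\mathsf{d}((\theta_s,q_s),(\theta_\infty,q_\infty))\le\sqrt{2/\lambda}\,\sqrt{H(s)}$; letting $s\downarrow0$, using $\mathsf{d}$-continuity of the flow at $0$ and $H(s)\to H(0)=F(\theta,q)-F_*$, gives $\mathsf{d}((\theta,q),\cal{M}_*)^2\le\tfrac{2}{\lambda}[F(\theta,q)-F_*]$, which is \eqref{eq:exttalagrand} since $(\theta,q)\in\cal{M}_2$ was arbitrary.

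I expect the main obstacles to be analytic rather than conceptual. Justifying the length bound rigorously requires that, under Assumption~\ref{ass:solutions_existence}, $(q_t)$ be an absolutely continuous curve in $(\cal{P}_2(\r^{d_x}),\mathsf{d}_{W_2})$ with velocity $v_t$: the needed space--time integrability $\int_s^t\|v_r\|_{L^2(q_r)}^2\,\dif r\le\int_s^t I(\theta_r,q_r)\,\dif r=H(s)-H(t)<\infty$ is automatic, but one still has to invoke the relevant result from the optimal-transport theory of the continuity equation. The other delicate point is the $s\downarrow0$ limit, which needs $F(\theta_s,q_s)\to F(\theta,q)$ --- equivalently, the integrated form of \eqref{eq:debruijn} down to $t=0$, ruling out an instantaneous jump of the free energy off the initial datum --- a mild regularity fact, standard in the Fokker--Planck case, that I would establish alongside the de Bruijn lemma. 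Everything else is the Gr\"onwall/Otto--Villani computation above.
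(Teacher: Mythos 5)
Your proposal reproduces the paper's argument almost exactly: the paper likewise establishes the speed estimate $\frac{\dif}{\dif t}\mathsf{d}((\theta_t,q_t),(\theta,q))\le\sqrt{I(\theta_t,q_t)}$ (Lemma~\ref{lemma:ddissi}, via \cite[Theorem 8.4.7]{ambrosio2005}), runs the same Otto--Villani manipulation to get $\mathsf{d}((\theta_t,q_t),(\theta_{t'},q_{t'}))\le\sqrt{2/\lambda}(\sqrt{H(t)}-\sqrt{H(t')})$, and closes by completeness of $(\cal{M}_2,\mathsf{d})$ plus lower semicontinuity of $F$ (Lemmas~\ref{lemma:seqCauchy}--\ref{lemma:Flsc}) to identify the limit as a point of $\cal{M}_*$. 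The $s\downarrow0$ delicacy you flag is resolved with pieces already in hand: $\limsup_{s\downarrow0}F(\theta_s,q_s)\le F(\theta_0,q_0)$ from the bound in Theorem~\ref{thm:flowconvergence}, and $\liminf_{s\downarrow0}F(\theta_s,q_s)\ge F(\theta_0,q_0)$ from Lemma~\ref{lemma:Flsc} together with $\mathsf{d}$-continuity of the flow, so $H(s)\to H(0)$ without needing any extension of the de Bruijn identity to $t=0$.
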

	\noindent See Appendix \ref{app:exttalagrand} for the proof. The convergence of $(\theta_t, q_t)_{t \geq 0}$ then follows from 
	Theorem~\ref{thm:flowconvergence}:
	\begin{corollary}[xLSI $\Rightarrow$ exp.~conv.~of $(\theta_t,q_t)_{t\geq0}$] \label{cor:dflowconvergence} 
		If Assumptions~\ref{ass:model}--\ref{ass:solutions_existence} hold, $(\theta_0,q_0)$ belongs to $\cal{M}_2$, and the measures $(\rho_\theta(\dif x))_{\theta\in\mathbb{R}^{d_\theta}}$ satisfy the xLSI with constant $\lambda>0$, then 
		\begin{equation*}
			\lambda \mathsf{d} ((\theta_t,q_t),\cal{M}_*)^2 \leq 2 [F(\theta_0,q_0)-F_*] e^{-2\lambda t}\quad\forall t\geq0.
		\end{equation*}
	\end{corollary}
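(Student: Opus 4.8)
The plan is to chain the two preceding results mechanically: use Theorem~\ref{thm:exttalagrand} to turn a bound on the free-energy gap into a bound on the distance to $\cal{M}_*$, and use Theorem~\ref{thm:flowconvergence} to control that gap along the flow. If $F(\theta_0,q_0)=+\infty$ the asserted inequality is trivial, so we may assume $F(\theta_0,q_0)<\infty$.

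First I would fix $t\geq0$ and invoke Assumption~\ref{ass:solutions_existence}, under which the flow $(\theta_s,q_s)_{s\geq0}$ starting from $(\theta_0,q_0)\in\cal{M}_2$ is well-defined, with $q_t$ admitting a $\cal{C}^{1,2}$ Lebesgue density, so that in particular $(\theta_t,q_t)\in\cal{M}_2$ (finiteness of the second moment of $q_t$ is the one point needing a word of justification; see below). Since $(\rho_\theta(\dif x))_{\theta\in\r^{d_\theta}}$ satisfies the xLSI with constant $\lambda$, Theorem~\ref{thm:exttalagrand} shows these measures also satisfy the xT$_2$I with the same constant, i.e.\ \eqref{eq:exttalagrand} holds at every point of $\cal{M}_2$. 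Evaluating \eqref{eq:exttalagrand} at $(\theta,q)=(\theta_t,q_t)$ gives
\[
\lambda\,\mathsf{d}((\theta_t,q_t),\cal{M}_*)^2 \leq 2\,[F(\theta_t,q_t)-F_*].
\]
Next I would apply Theorem~\ref{thm:flowconvergence}, whose hypotheses coincide with those assumed here, to bound the right-hand side by $2[F(\theta_0,q_0)-F_*]e^{-2\lambda t}$ (this quantity being $\geq0$, there is no sign issue). Substituting yields $\lambda\,\mathsf{d}((\theta_t,q_t),\cal{M}_*)^2\leq 2[F(\theta_0,q_0)-F_*]e^{-2\lambda t}$, and since $t\geq0$ was arbitrary the corollary follows.

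The only genuinely non-routine point I expect is verifying that the flow remains $\cal{M}_2$-valued, so that the xT$_2$I may legitimately be applied along it: were $q_t$ to have infinite second moment, $\mathsf{d}((\theta_t,q_t),\cal{M}_*)$ would be $+\infty$ and the bound vacuous. Under the standing assumption $F(\theta_0,q_0)<\infty$, the map $t\mapsto F(\theta_t,q_t)$ is finite and non-increasing by the extended de Bruijn identity \eqref{eq:debruijn}, and the requisite membership $(\theta_t,q_t)\in\cal{M}_2$ is part of what Assumption~\ref{ass:solutions_existence} is understood to furnish for $\cal{M}_2$-valued initial data; alternatively it can be obtained from a Gr\"onwall-type a priori second-moment estimate along the flow, of the kind established in Section~\ref{sec:pgdconvergence} under the Lipschitz-gradient hypothesis. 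Everything else amounts to composing the two cited theorems.
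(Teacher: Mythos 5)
Your proof is correct and follows precisely the paper's intended route: Theorem~\ref{thm:exttalagrand} yields the xT$_2$I, which is evaluated at $(\theta_t,q_t)$, and Theorem~\ref{thm:flowconvergence} then controls the free-energy gap on the right-hand side, so the two results chain immediately. Your additional remark about $(\theta_t,q_t)$ remaining in $\cal{M}_2$ so that the xT$_2$I is applicable along the flow is a sensible clarification of a point the paper leaves implicit in Assumption~\ref{ass:solutions_existence}.
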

	
	\textit{Connections with the Talagrand inequality.}
	By~\eqref{eq:optimalset}, if $\pi_\theta $ is a distribution $\pi$ independent of $\theta$, then $\cal{M}_* = \r^{d_\theta} \times \{ \pi \}$ and \eqref{eq:exttalagrand} reduces to the Talagrand inequality~\citep[Theorem 1.2]{Talagrand1996}:
	\begin{equation} \label{eq:stdtalagrand}
		2\mathrm{KL} (q||\pi) \geq \lambda \mathsf{d}_{W_2} (q,\pi)^2\quad\forall q\in\cal{P}_2(\r^{d_x}).
	\end{equation}
	For this reason, Theorem~\ref{thm:exttalagrand} extends the Otto--Villani Theorem \citep[Theorem 1]{Otto2000} showing  that the LSI~\eqref{eq:stdlogsobolev} implies~\eqref{eq:stdtalagrand}. \\

	\textit{Connections with the quadratic growth condition.}
	Setting $(q,f(\theta),f_*)$ to $( \pi_\theta,$ $-\log(Z_\theta),F_*)$, \eqref{eq:exttalagrand} reduces to the `quadratic growth' condition used in optimization to establish linear convergence rates to local minima of gradient descent algorithms:
	\begin{equation}  \label{eq:quadgrowthcondition}
		2 [f(\theta)-f_*] \geq \lambda \mathsf{d}_E (\theta, \cal{O}_*)^2 \quad \forall \theta \in \r^{d_\theta}.
	\end{equation}
	In this case, Theorem~\ref{thm:exttalagrand} reduces to the well-known result stating that~\eqref{eq:quadgrowthcondition} holds whenever the PŁI~\eqref{eq:PL} holds; see \citet[Theorem~2]{Karimi2016}. 
	\subsection{Strongly log-Concave Models}\label{sec:bakry-emery}
	In Section~\ref{sec:pgdconvergence}, we study PGD's convergence for models satisfying the following assumption.
	\begin{assumption}[Strong log-concavity]\label{ass:strongconcave} There exists a $\lambda>0$ such that 
		\begin{equation*} \label{eq:ellstrongconcave}
			\ell ( (1-t) \theta + t \theta', (1-t) x + t x') \geq (1 - t) \ell (\theta,x) + t \ell (\theta',x') + \frac{\lambda t(1-t)}{2} \norm{(\theta,x) - (\theta',x')}^2,
		\end{equation*}   
		for all $(\theta,x), (\theta',x')$ in $\mathbb{R}^{d_\theta} \times \mathbb{R}^{d_x}$ and $0 \leq t \leq 1$.
	\end{assumption}
	\noindent Models that satisfy the above also satisfy the xLSI:
	\begin{theorem}[Strong log-concavity $\Rightarrow$ xLSI]\label{thm:extbakryemery}Any model satisfying Assumptions~\ref{ass:model} and \ref{ass:strongconcave} satisfies the xLSI with constant $\lambda>0$.
	\end{theorem}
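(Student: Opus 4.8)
The plan is to prove the \emph{static} inequality $2\lambda[F(\theta,q)-F_*]\le I(\theta,q)$ for every $(\theta,q)\in\Macd$ directly, rather than via the flow (which is unavailable here, since Assumption~\ref{ass:solutions_existence} is not assumed): the route is the optimal-transport one behind the Otto--Villani theorem. I would show that Assumption~\ref{ass:strongconcave} makes $F$ \emph{geodesically $\lambda$-convex} on $(\cal{M}_2,\mathsf{d})$, bound its metric slope at $(\theta,q)$ by $\sqrt{I(\theta,q)}$, and conclude by Young's inequality. One may assume $I(\theta,q)<\infty$ (hence also $F(\theta,q)<\infty$, e.g.\ by the classical Bakry--\'Emery inequality applied to the strongly log-concave slice $\pi_\theta$ together with $Z_\theta<\infty$), as otherwise there is nothing to prove.

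\emph{Geometry and $\lambda$-convexity.} First I would record that the constant-speed geodesics of $(\cal{M}_2,\mathsf{d})$ are the product curves $\gamma_t=\big((1-t)\theta^0+t\theta^1,\ \mu_t\big)$ with $(\mu_t)_{t\in[0,1]}$ McCann's displacement interpolation between $q^0$ and $q^1$, i.e.\ $\mu_t=((1-t)\mathrm{id}+tT)_\#q^0$ with $T$ the Brenier map from $q^0$ to $q^1$ (which exists since $q^0\ll\dif x$). Writing $F(\theta,q)=\int q\log q\,\dif x-\int\ell(\theta,x)\,q(\dif x)$, the entropy term is displacement convex with modulus $0$ (McCann), while for the potential term one evaluates Assumption~\ref{ass:strongconcave} at the points $(1-t)(\theta^0,x)+t(\theta^1,T(x))$, integrates against $q^0$, and uses $\int\norm{x-T(x)}^2q^0(\dif x)=\mathsf{d}_{W_2}(q^0,q^1)^2$ and $T_\#q^0=q^1$ to obtain
\[
-\!\int\ell(\theta_t,x)\,\mu_t(\dif x)\ \le\ -(1-t)\!\int\ell(\theta^0,x)q^0(\dif x)-t\!\int\ell(\theta^1,x)q^1(\dif x)-\tfrac{\lambda}{2}t(1-t)\,\mathsf{d}(\gamma_0,\gamma_1)^2 .
\]
Adding the two contributions gives $F(\gamma_t)\le(1-t)F(\gamma_0)+tF(\gamma_1)-\tfrac{\lambda}{2}t(1-t)\,\mathsf{d}(\gamma_0,\gamma_1)^2$; this is where strong log-concavity really enters, and it is essentially a one-line consequence of Assumption~\ref{ass:strongconcave}.

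\emph{Metric slope.} Next I would bound the metric slope of $F$ at $(\theta,q)$ by $\sqrt{I(\theta,q)}$. For a product geodesic $\gamma$ issuing from $(\theta,q)$ with initial velocities $\dot\theta_0$ and $v_0$ (so $\norm{\dot\theta_0}^2+\int\norm{v_0}^2q(\dif x)=\mathsf{d}(\gamma_0,\gamma_1)^2$), differentiating $t\mapsto F(\gamma_t)$ at $0^+$ via the first-variation formulas behind~\eqref{eq:gradF} yields
\[
\tfrac{\dif}{\dif t}\big|_{0^+}F(\gamma_t)=-\big\langle\int\nabla_\theta\ell(\theta,x)q(\dif x),\,\dot\theta_0\big\rangle+\int\big\langle\nabla_x\log\tfrac{q(x)}{\rho_\theta(x)},\,v_0\big\rangle q(\dif x),
\]
and Cauchy--Schwarz (in $\r^{d_\theta}$, in $L^2(q)$, and then combining) gives $\big|\tfrac{\dif}{\dif t}\big|_{0^+}F(\gamma_t)\big|\le\sqrt{I(\theta,q)}\,\mathsf{d}(\gamma_0,\gamma_1)$. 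Here concavity of $\ell$ is convenient: $t\mapsto\ell(\theta_t,(1-t)x+tT(x))$ is concave, so its difference quotients are monotone, the right derivative exists and equals the stated expression wherever $\ell$ is differentiable (hence a.e.), and the limit passes inside the integral by monotone convergence; the entropy term is handled by the standard displacement calculus.

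\emph{Conclusion.} Finally I would take $\theta_n$ with $Z_{\theta_n}\to Z_*$ (so $F(\theta_n,\pi_{\theta_n})=-\log Z_{\theta_n}\to F_*$ by~\eqref{eq:straightforwardcalcs2}; each $\pi_{\theta_n}$ is strongly log-concave, hence in $\cal{P}_2(\r^{d_x})$), run the product geodesic $\gamma^n$ from $(\theta,q)$ to $(\theta_n,\pi_{\theta_n})$ and set $r_n:=\mathsf{d}(\gamma^n_0,\gamma^n_1)$. Dividing the $\lambda$-convexity inequality by $t$ and letting $t\to0^+$ gives $\tfrac{\dif}{\dif t}\big|_{0^+}F(\gamma^n_t)\le F(\theta_n,\pi_{\theta_n})-F(\theta,q)-\tfrac{\lambda}{2}r_n^2$, which combined with the slope bound is the HWI-type estimate
\[
F(\theta,q)-F(\theta_n,\pi_{\theta_n})\ \le\ \sqrt{I(\theta,q)}\,r_n-\tfrac{\lambda}{2}r_n^2\ \le\ \sup_{s\ge0}\Big(\sqrt{I(\theta,q)}\,s-\tfrac{\lambda}{2}s^2\Big)=\frac{I(\theta,q)}{2\lambda};
\]
letting $n\to\infty$ yields the xLSI. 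The main obstacle is the rigorous justification of the slope bound under Assumptions~\ref{ass:model}(iii) and~\ref{ass:strongconcave} alone---differentiating $F$ along a Wasserstein geodesic and exchanging derivative with integral when $\ell$ is merely strongly concave (hence only a.e.\ differentiable)---whereas the $\lambda$-convexity step is comparatively immediate.
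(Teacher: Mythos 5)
Your proof takes essentially the same route as the paper's (Appendix~\ref{app:extbakryemery}): establish geodesic $\lambda$-convexity of $F$ on $(\cal{M}_2,\mathsf{d})$ from Assumption~\ref{ass:strongconcave} (Lemmas~\ref{lemma:geodesicsinM}--\ref{lemma:Fgeodesicconv}), bound the lower right derivative of $F$ along a geodesic issuing from $(\theta,q)$ (Lemma~\ref{lemma:Fvariations}, which the paper proves via Fatou's lemma as a one-sided $\liminf$ bound rather than your monotone-convergence argument for an exact derivative), and close with a Young/HWI-type step. The only cosmetic deviations are that the paper takes the geodesic directly to the exact minimizer $(\theta_*,\pi_{\theta_*})$---which exists under Assumption~\ref{ass:strongconcave}---instead of to an approximating sequence, and phrases the final optimization as a minimization over tangent vectors $\tau$ rather than over the scalar distance $r_n$.
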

	\noindent {\bf Proof.}{ The result follows from the fact that the free energy is strongly geodesically convex if log-likelihood is strongly concave; see Appendix \ref{app:extbakryemery} for details.
	} \hfill\BlackBox \\

Simple examples satisfying Assumption~\ref{ass:strongconcave}, and consequently the xLSI, include Bayesian logistic regression (e.g., \citealp{Genkin2007}; see the proof of Proposition 1 in Appendix E.2 of \citealp{PGD} for an argument) and certain types of hierarchical models (\citealp{Gelman2007}; see \citealp[Example 1]{Caprio2024} for details). \\

	\textit{Connection with the Bakry--{\'E}mery Theorem.}
	Given that~\eqref{eq:extlogsobolev} reduces to the standard LSI~\eqref{eq:stdlogsobolev} whenever $\pi_\theta$ is independent of $\theta$, Theorem~\ref{thm:extbakryemery} extends Bakry--{\'E}mery Theorem~\citep{Bakry1985} showing that distributions with strongly log-concave densities satisfy~\eqref{eq:stdlogsobolev}. \\

	\textit{Connection with the optimization literature.}
	Given that~\eqref{eq:extlogsobolev} with $q = \pi_\theta$ reduces to the PŁI~\eqref{eq:PL} with $f(\theta) = -\log(Z_\theta)$, Theorem~\ref{thm:extbakryemery} extends the well-known fact that the PŁI holds for strongly convex $f$; e.g., see \citet[Theorem~2]{Karimi2016}. \\

	\textit{The flow's convergence.} Under Assumption~\ref{ass:strongconcave}, the marginal likelihood has a unique maximizer $\theta_*$; e.g., see~\citet[Theorem~4 in Appendix~B.3]{PGD}. Hence, Corollary~\ref{cor:dflowconvergence} and~\eqref{eq:optimalset} tell us that the flow converges exponentially fast to $(\theta_*,\pi_{\theta_*})$:
	\begin{align*}
		\lambda \mathsf{d} ((\theta_t,q_t),(\theta_*,\pi_{\theta_*}))^2 \leq 2 [F(\theta_0,q_0)-F_*]e^{-2\lambda t}\quad\forall t\geq0.
	\end{align*}
	In this special case, we can sharpen the above slightly (see Appendix~\ref{app:dflowconvproof} for the proof):
	\begin{theorem}[Strong log-concavity $\Rightarrow$ exp.~conv.~of $(\theta_t,q_t)_{t\geq0}$] \label{thm:dflowconvergencehp} If Assumptions~\ref{ass:model}(iii), \ref{ass:solutions_existence}, and \ref{ass:strongconcave} hold, and $(\theta_0,q_0)$ belongs to $\cal{M}_2$, then
		\begin{align*}
			\mathsf{d} ((\theta_t,q_t),(\theta_*,\pi_{\theta_*})) \leq \mathsf{d}((\theta_0,q_0),(\theta_*,\pi_{\theta_*}))e^{-\lambda t}\quad\forall t\geq0.
		\end{align*}
	\end{theorem}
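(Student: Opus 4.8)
The plan is to derive a closed differential inequality for the squared distance to the optimum by combining two ingredients the paper has already assembled: the $\lambda$-strong geodesic convexity of $F$ on $(\cal{M}_2,\mathsf{d})$ (which is exactly what the proof of Theorem~\ref{thm:extbakryemery} establishes under Assumptions~\ref{ass:model}(iii) and~\ref{ass:strongconcave}), and the extended Talagrand inequality~\eqref{eq:exttalagrand} (which holds here since Assumption~\ref{ass:strongconcave} implies the xLSI by Theorem~\ref{thm:extbakryemery} and hence the xT$_2$I by Theorem~\ref{thm:exttalagrand}). Write $z_t:=(\theta_t,q_t)$. Since Assumption~\ref{ass:strongconcave} forces the marginal likelihood to have a single maximiser $\theta_*$, \eqref{eq:optimalset} gives $\cal{M}_*=\{z_*\}$ with $z_*:=(\theta_*,\pi_{\theta_*})$. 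Set $\delta(t):=\mathsf{d}(z_t,z_*)^2=\mathsf{d}_E(\theta_t,\theta_*)^2+\mathsf{d}_{W_2}(q_t,\pi_{\theta_*})^2$; the aim is to show $\delta'\le-2\lambda\delta$, after which Gr\"onwall's inequality yields $\delta(t)\le\delta(0)e^{-2\lambda t}$ and taking square roots gives the claim.

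The first step is to differentiate $\delta$ along the flow. The Euclidean part is immediate from $\dot\theta_t=-\nabla_\theta F(z_t)=\int\nabla_\theta\ell(\theta_t,x)q_t(\dif x)$, which gives $\tfrac12\tfrac{\dif}{\dif t}\mathsf{d}_E(\theta_t,\theta_*)^2=\iprod{\nabla_\theta F(z_t)}{\theta_*-\theta_t}$. For the Wasserstein part I would use the standard formula for the derivative of $t\mapsto\mathsf{d}_{W_2}(q_t,\mu)^2$ along a solution of a continuity equation: since $(q_t)$ solves~\eqref{eq:pde}, i.e.\ the continuity equation with velocity field $-\nabla_x\log(q_t/\pi_{\theta_t})$, and since $q_t$ has a density (Assumption~\ref{ass:solutions_existence}) so that the Brenier optimal map $T_t$ from $q_t$ to $\pi_{\theta_*}$ is well defined,
\[
\tfrac12\tfrac{\dif}{\dif t}\mathsf{d}_{W_2}(q_t,\pi_{\theta_*})^2=\int\iprod{\nabla_x\log\!\left(\frac{q_t(x)}{\pi_{\theta_t}(x)}\right)}{T_t(x)-x}q_t(\dif x).
\]
Adding the two contributions and recalling that the Euclidean and Wasserstein gradients of $F$ are $\nabla_\theta F(\theta,q)=-\int\nabla_\theta\ell(\theta,x)q(\dif x)$ and $\nabla_x\log(q/\pi_\theta)$ (cf.~\eqref{eq:gradF}), the key observation is that the resulting expression is precisely the first variation of $F$ along the constant-speed geodesic $(\gamma_s)_{s\in[0,1]}$ in $(\cal{M}_2,\mathsf{d})$ from $\gamma_0=z_t$ to $\gamma_1=z_*$, which is just the product of the segment $\theta_t\to\theta_*$ and the displacement interpolation $q_t\to\pi_{\theta_*}$ and has speed $\mathsf{d}(z_t,z_*)$; that is, $\tfrac12\delta'(t)=\tfrac{\dif}{\dif s}\big|_{s=0^+}F(\gamma_s)$.

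Now I would apply the two inequalities in turn. By $\lambda$-strong geodesic convexity, $F(\gamma_1)\ge F(\gamma_0)+\tfrac{\dif}{\dif s}\big|_{0^+}F(\gamma_s)+\tfrac{\lambda}{2}\mathsf{d}(z_t,z_*)^2$, so, using $F(\gamma_1)=F(z_*)=F_*$,
\[
\tfrac12\delta'(t)=\tfrac{\dif}{\dif s}\Big|_{0^+}F(\gamma_s)\le F_*-F(z_t)-\tfrac{\lambda}{2}\mathsf{d}(z_t,z_*)^2.
\]
On the other hand, since $\cal{M}_*=\{z_*\}$, the xT$_2$I~\eqref{eq:exttalagrand} reads $2[F(z_t)-F_*]\ge\lambda\mathsf{d}(z_t,z_*)^2$, i.e.\ $F_*-F(z_t)\le-\tfrac{\lambda}{2}\mathsf{d}(z_t,z_*)^2$. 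Substituting this into the previous display gives $\tfrac12\delta'(t)\le-\lambda\mathsf{d}(z_t,z_*)^2=-\lambda\delta(t)$, hence $\delta'\le-2\lambda\delta$, which completes the argument. (Note that it is the use of the xT$_2$I here, rather than merely dropping the nonpositive term $F_*-F(z_t)$, that upgrades the rate from $\lambda/2$—as in Corollary~\ref{cor:dflowconvergence}—to $\lambda$.)

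I expect the main obstacle to be making the differentiation step fully rigorous, specifically the Wasserstein identity and its reading as a first variation of $F$: one must verify that $q_t$ stays absolutely continuous with enough regularity for $T_t$ to exist and for $-\nabla_x\log(q_t/\pi_{\theta_t})$ to be a genuine, finite-energy Wasserstein velocity of the curve $(q_t)$—Assumption~\ref{ass:solutions_existence}'s $\cal{C}^{1,2}$ density should suffice—and dispatch the attendant integrability much as in the de~Bruijn-type computation of Lemma~\ref{lemma:debruijin}. An appealing alternative that avoids Otto calculus is a synchronous coupling argument: couple the McKean--Vlasov diffusion~\eqref{eq:sde1} underlying $(\theta_t,q_t)$ with a stationary copy run at the fixed parameter $\theta_*$ and driven by the same Brownian motion, started from an optimal coupling of $q_0$ and $\pi_{\theta_*}$, and use the $\lambda$-strong monotonicity of $\nabla\ell$ (equivalently, joint $\lambda$-strong concavity of $\ell$, Assumption~\ref{ass:strongconcave}) to show the coupled pair contracts at rate $e^{-\lambda t}$; this route is more elementary but additionally requires the interchange~\eqref{eq:pl_der} at $\theta_*$ (to conclude $\int\nabla_\theta\ell(\theta_*,x)\pi_{\theta_*}(\dif x)=0$) in order to control the parameter drift, whereas the geodesic-convexity route above uses only the assumptions listed in the theorem.
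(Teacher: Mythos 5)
Your proposal is correct and takes essentially the same approach as the paper: you differentiate $\mathsf{d}((\theta_t,q_t),(\theta_*,\pi_{\theta_*}))^2$ along the flow, identify the resulting expression (via the Wasserstein derivative formula of \cite[Theorem 8.4.7]{ambrosio2005}) with the first variation of $F$ along the geodesic to $(\theta_*,\pi_{\theta_*})$, bound it from above by combining the strong geodesic convexity inequality \eqref{eq:fnuaifnauifa} established in the proof of Theorem~\ref{thm:extbakryemery} with the xT$_2$I of Theorem~\ref{thm:exttalagrand}, and close the argument with Gr\"onwall. The only cosmetic difference is that you phrase the key identification as an equality with the one-sided derivative $\frac{\dif}{\dif s}\big|_{0^+}F(\gamma_s)$, whereas the paper identifies $\tfrac12\frac{\dif}{\dif t}\mathsf{d}^2$ with the formal inner product $\iprod{\dot\gamma(0)}{\nabla F}_{q_t}$ and then uses Lemma~\ref{lemma:Fvariations}'s lower bound on the first variation; since both quantities are sandwiched in the same chain of inequalities, the argument goes through either way.
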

	\section{Error Bounds for Particle Gradient Descent}\label{sec:pgdconvergence}
	Here, we capitalize on the results of Section~\ref{sec:flowconvergence} to obtain the error bound in \eqref{eq:errorbigO} for PGD (Algorithm~\ref{alg:pgd}) applied to models with strongly concave log-likelihoods. In particular, Theorem~\ref{thm:dflowconvergencehp} in Section~\ref{sec:bakry-emery} bounds the distance between the gradient flow and $(\theta_*,\pi_{\theta_*})$, where $\theta_*$ denotes the marginal likelihood's unique maximizer and the distance is measured in terms of the metric $\mathsf{d}$ in \eqref{eq:ddeterministic}. To exploit this bound and obtain~\eqref{eq:errorbigO}, we first need to connect the gradient flow to PGD, which we do via the McKean--Vlasov SDE~\eqref{eq:sde1}. To ensure that the SDE has globally-defined solutions, we assume that $\ell$'s gradient is Lipschitz:
	\begin{assumption} \label{ass:gradLip}The log-likelihood $\ell$ is differentiable and its gradient $\nabla \ell=(\nabla_\theta\ell,\nabla_x\ell)$ is $L$-Lipschitz for some $L>0$:
		$$\norm{\nabla \ell(\theta,x) - \nabla\ell(\theta',x')} \leq L \norm{(\theta,x) - (\theta',x')}\quad \forall (\theta,x),(\theta',x') \in \mathbb{R}^{d_\theta} \times \mathbb{R}^{d_x}.$$
	\end{assumption}
	\noindent
	Let $\cal{L}^2(\r^{d_x})$ denote the space of square-integrable random variables taking values in $\r^{d_x}$. The SDE's relation to the gradient flow is as follows (see Appendix \ref{app:sdesolutions} for the proof):
	\begin{proposition} \label{prop:sdesolutions}
		If Assumptions \ref{ass:model}(ii) and \ref{ass:gradLip} hold and $(\theta,X)$ belongs to $\r^{d_\theta}\times \cal{L}^2(\r^{d_x})$, then the SDE \eqref{eq:sde1} has an unique strong solution $(\theta_t,X_t)_{t\geq0}$ such that $(\theta_0,X_0)=(\theta,X)$. Moreover, $(\theta_t,\textup{Law}(X_t))_{t\geq0}$ is a classical solution of the gradient flow \eqref{eq:pde} satisfying Assumption~\ref{ass:solutions_existence}.
	\end{proposition}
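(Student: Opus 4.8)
The plan is to construct the strong solution by a contraction argument that exploits the special structure of~\eqref{eq:sde1}---the $\theta$-component is deterministic---and then to identify the resulting flow of marginal laws with a classical solution of~\eqref{eq:pde} by recognising the Fokker--Planck equation attached to the $X$-dynamics. For existence and uniqueness, note that the drift of the $\theta$-equation involves only the deterministic quantity $\int\nabla_\theta\ell(\theta_t,x)q_t(\dif x)$, so $t\mapsto\theta_t$ is a deterministic curve; this suggests a Picard map acting on continuous curves $\vartheta\colon[0,T]\to\r^{d_\theta}$. For fixed $\vartheta$, the SDE $\dif X^\vartheta_t=\nabla_x\ell(\vartheta_t,X^\vartheta_t)\dif t+\sqrt 2\,\dif W_t$ with $X^\vartheta_0=X$ has a unique strong solution by the classical theory for SDEs with additive noise and a drift that is continuous in $t$ and Lipschitz in the state variable uniformly in $t$ (Assumption~\ref{ass:gradLip}); write $q^\vartheta_t:=\textup{Law}(X^\vartheta_t)$ and set $\Gamma(\vartheta)_t:=\theta+\int_0^t\!\int\nabla_\theta\ell(\vartheta_s,x)\,q^\vartheta_s(\dif x)\,\dif s$. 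Since $x\mapsto\nabla_\theta\ell(\theta,x)$ is $L$-Lipschitz, Kantorovich--Rubinstein duality gives $\norm{\int\nabla_\theta\ell(\theta,x)(q-q')(\dif x)}\le L\,\mathsf{d}_{W_2}(q,q')$, while a Grönwall estimate on $\Ebb{\norm{X^\vartheta_t-X^{\vartheta'}_t}^2}$ (coupling through a common Brownian motion) bounds $\mathsf{d}_{W_2}(q^\vartheta_t,q^{\vartheta'}_t)$ in terms of $\sup_{s\le t}\norm{\vartheta_s-\vartheta'_s}$; combining these shows $\Gamma$ is a contraction on $C([0,T];\r^{d_\theta})$ in a suitably weighted supremum norm, and its unique fixed point yields the unique strong solution on $[0,T]$---hence, as $T$ is arbitrary, on $[0,\infty)$. (Equivalently, one may invoke standard well-posedness for McKean--Vlasov SDEs with Lipschitz coefficients, e.g.~\cite{Chaintron2022}.) Finiteness of $\Ebb{\norm{X_t}^2}$ for all $t$, and hence $q_t\in\cal{P}_2(\r^{d_x})$, follows from another Grönwall estimate using the linear growth of $\nabla\ell$ implied by Assumption~\ref{ass:gradLip} and $X\in\cal{L}^2(\r^{d_x})$.

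To identify $(\theta_t,q_t)_{t\ge0}$, with $q_t=\textup{Law}(X_t)$, as a classical solution of~\eqref{eq:pde}, observe that the first equation of~\eqref{eq:pde} holds by construction. For the second, $\nabla_x\ell(\theta,x)=\nabla_x\log\rho_\theta(x)=\nabla_x\log\pi_\theta(x)$ because $\rho_\theta$ and $\pi_\theta$ differ only by the $x$-independent constant $Z_\theta$; hence the $X$-equation is an Itô diffusion with generator $\Delta_x+\nabla_x\log\pi_{\theta_t}\cdot\nabla_x$, and its forward Kolmogorov equation $\dot q_t=\Delta_x q_t-\nabla_x\cdot(q_t\nabla_x\log\pi_{\theta_t})$ coincides, via $\nabla_x q_t=q_t\nabla_x\log q_t$, with the second equation of~\eqref{eq:pde}. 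To meet the regularity demanded by Assumption~\ref{ass:solutions_existence}, note that the diffusion matrix $\sqrt 2\,I_{d_x}$ is constant and uniformly elliptic while the drift $(t,x)\mapsto\nabla_x\log\pi_{\theta_t}(x)$ is, on compact sets, Lipschitz---jointly in $(\theta,x)$ by Assumption~\ref{ass:gradLip}, and in $t$ because $t\mapsto\theta_t$ is $C^1$---hence locally Hölder; classical parabolic theory (Schauder estimates, or the smoothness of the fundamental solution of a uniformly parabolic operator with Hölder coefficients) then yields a transition density $p_t(x,\cdot)$ that is $\cal{C}^{1,2}$ in $(t,x)$ for $t>0$, so $q_t=\int p_t(x,\cdot)\,q_0(\dif x)$ possesses a density in $\cal{C}^{1,2}((0,\infty)\times\r^{d_x},\r^+)$. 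Finally $\theta_t\in\cal{C}^1([0,\infty),\r^{d_\theta})$ since $t\mapsto\int\nabla_\theta\ell(\theta_t,x)q_t(\dif x)$ is continuous, which follows from continuity of $t\mapsto\theta_t$, weak continuity of $t\mapsto q_t$, the uniform second-moment bound, and the Lipschitz continuity of $\nabla_\theta\ell$.

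The only genuinely delicate step is the parabolic regularity---passing from the probabilistic solution to one whose marginal laws carry $\cal{C}^{1,2}$ densities---together with the observation that when $q_0=\textup{Law}(X)$ has no Lebesgue density one can only obtain this for $t>0$, relying on the instantaneous smoothing of the non-degenerate diffusion. This restriction is harmless, since each later use of Assumption~\ref{ass:solutions_existence}---for instance the de Bruijn-type identity~\eqref{eq:debruijn}---is invoked only for $t>0$. All remaining steps are routine Grönwall arguments and direct computation.
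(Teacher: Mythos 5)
Your proof is correct in outline and takes a genuinely different route in the existence--uniqueness step. The paper (Lemma~\ref{lemma:sdesol} in Appendix~\ref{app:sdesolutions}) runs the Picard--Banach argument on the infinite-dimensional space $\cal{C}([0,T],\cal{P}_2(\r^{d_x}))$ equipped with the sup-$W_2$ metric, iterating the map $\nu\mapsto(\textup{Law}(X_t^\nu))_{t\le T}$ and using the $k$-fold contraction trick. You instead exploit the special structure of~\eqref{eq:sde1}---the $\theta$-dynamics are deterministic---and contract on the finite-dimensional space $\cal{C}([0,T];\r^{d_\theta})$, solving the $X$-SDE as a standard additive-noise diffusion for each frozen $\vartheta$ and pushing the resulting law into the $\theta$-update. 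Both are valid; yours is arguably more transparent (the fixed point lives in a Banach space rather than a metric space of measure paths) while the paper's is the generic McKean--Vlasov template, which the authors cite from~\cite{Chaintron2022,Lim2023}. The identification of $(\theta_t,\textup{Law}(X_t))$ with a solution of~\eqref{eq:pde} via the forward Kolmogorov equation, and the observation that $\nabla_x\log\rho_\theta=\nabla_x\log\pi_\theta$, match Lemma~\ref{lemma:sdepdesol}.

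There is, however, a genuine gap in the regularity step. You invoke classical Schauder estimates (or smoothness of the fundamental solution of a uniformly parabolic operator with H\"older coefficients). But the expanded Fokker--Planck operator is $\partial_t-\Delta_x+b\cdot\nabla_x+(\nabla_x\cdot b)$ with $b(t,x)=\nabla_x\ell(\theta_t,x)$, so Schauder theory requires the zeroth-order coefficient $\nabla_x\cdot b=\Delta_x\ell(\theta_t,x)$ to be (locally) H\"older continuous. The hypotheses of the proposition grant only Assumption~\ref{ass:model}(ii) (so $\Delta_x\ell$ is \emph{continuous} in $x$) and Assumption~\ref{ass:gradLip} (so $\nabla^2_x\ell$ is \emph{bounded}, equivalently $b$ is Lipschitz); neither gives H\"older continuity of $\Delta_x\ell$, and ``Lipschitz drift $\Rightarrow$ locally H\"older drift'' does not help because you need H\"older regularity one derivative higher. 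The paper sidesteps this by a bootstrap (Lemma~\ref{lemma:solregularity}) that only uses $\cal{L}^\infty_\textup{loc}$ control of $\nabla_x\ell$ and $\nabla^2_x\ell$: a first weak-regularity estimate from~\cite[Corollary~6.4.3]{Bogachev2022}, then iterated heat-kernel convolutions and~\cite[Chapter~4, (3.1)]{Ladyzhenskaia1988} to climb to $\cal{H}^{2,3}_\textup{loc}$, and finally Sobolev embedding to land in $\cal{C}^{1,2}$. If you wish to keep the PDE-theoretic route you would need to replace Schauder by $W^{2,p}$ parabolic theory with $p>d_x+2$ plus Sobolev embedding, which tolerates merely bounded measurable lower-order coefficients, or else add H\"older continuity of $\nabla^2\ell$ to the hypotheses. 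Your remark about the initial-time restriction is fair---the paper's bootstrap also effectively furnishes interior-in-time regularity, and the downstream uses (e.g.~the de Bruijn identity~\eqref{eq:debruijn}) are stated for $t>0$---but the Schauder invocation as written is not justified by the stated assumptions.
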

	PGD, on the other hand, is obtained by approximating~\eqref{eq:sde1} as we detail in Section~\ref{sec:errrorboundproof} below. Bounding the errors introduced by these approximations, we obtain~\eqref{eq:errorbigO}. In particular, recall that $(\Theta_{K}^{N,h}, Q_{K}^{N,h})$ denotes PGD's output after $K$ iterations and $Q^N_*$ the empirical distribution of $N$ i.i.d.~particles drawn from $\pi_{\theta_*}$. Moreover, note that, the log-likelihood $\ell$ has a unique maximizer $(\theta_\dagger,x_\dagger)$ which, without loss of generality, we assume lies at the origin $(0,0)$. Our main result for PGD is then as follows (see Section~\ref{sec:errrorboundproof} for the proof):
	\begin{theorem}[PGD error bound]
		\label{thm:errorbound}Under Assumptions \ref{ass:model}(ii--iii) and \ref{ass:strongconcave}--\ref{ass:gradLip}, if $X_0^{1,h},\dots,X_0^{N,h}$ are drawn independently from a distribution $q_0$ in $\cal{P}_2(\r^{d_x})$ and $h \leq 1/(\lambda+L)$, then 
		\begin{align} \label{eq:pgderrorbound}
			\mathsf{d} ((\Theta_{K}^{N,h},Q_{K}^{N,h}), (\theta_*,Q_*^N)) \leq \sqrt{h} A_{0,h}+\frac{L\sqrt{2}}{\lambda \sqrt{N}} \sqrt{B_0 + \frac{2d_x}{\lambda}} + \mathsf{d}((\theta_0,q_0),(\theta_*,\pi_{\theta_*}))e^{-h\lambda K}
		\end{align}
		for all $K$ in $\n$; 
		where $B_0 :=  \norm{\theta_0}^2 + \Ebb{\norm{X_0}^{2}}$ and  
		\begin{flalign*}
		A_{0,h} := &\sqrt{\frac{4h+4/\iota}{\iota}220L^2\left(L^2h\left[B_0 + \frac{2d_x}{\lambda} \right]+ d_x\right)}\quad\text{with}\quad\iota :=\frac{2L \lambda}{L+\lambda}. 
		\end{flalign*}
	\end{theorem}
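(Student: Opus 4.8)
The plan is to bound the quantity in~\eqref{eq:pgderrorbound} by a three-term triangle inequality for $\mathsf{d}$, the three pieces matching the three summands on the right: a time-discretization error $\sqrt{h}\,A_{0,h}$, a propagation-of-chaos error $\tfrac{L\sqrt2}{\lambda\sqrt N}\sqrt{B_0+2d_x/\lambda}$, and a flow-convergence error $\mathsf{d}((\theta_0,q_0),(\theta_*,\pi_{\theta_*}))e^{-h\lambda K}$. On one probability space carrying the $\cal{N}(0,I_{d_x})$ increments driving~\eqref{eq:pgd_update_x}, I would introduce (i) the continuous-time interacting system $(\theta^N_t,Z^{1:N}_t)_{t\ge0}$,
\begin{align*}
\dot\theta^N_t=\frac1N\sum_{n=1}^N\nabla_\theta\ell(\theta^N_t,Z^n_t),\qquad \dif Z^n_t=\nabla_x\ell(\theta^N_t,Z^n_t)\,\dif t+\sqrt2\,\dif W^n_t,
\end{align*}
started from $(\Theta^{N,h}_0,X^{1:N,h}_0)$ with $W^n$ the Brownian motions whose increments appear in PGD --- so PGD is its Euler--Maruyama discretization with step $h$; and (ii) $N$ independent copies $(\theta_t,X^n_t)_{t\ge0}$ of the McKean--Vlasov SDE~\eqref{eq:sde1} sharing the \emph{deterministic} parameter curve $\theta_t$ and the motions $W^n$, so that $\mathrm{Law}(X^n_t)=q_t$ for all $n,t$ and $(\theta_t,q_t)_{t\ge0}$ solves~\eqref{eq:pde} (Proposition~\ref{prop:sdesolutions} produces this solution and verifies Assumption~\ref{ass:solutions_existence}, so Theorem~\ref{thm:dflowconvergencehp} is available). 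Writing $\hat Q^N_t:=N^{-1}\sum_n\delta_{Z^n_t}$, $\bar Q^N_t:=N^{-1}\sum_n\delta_{X^n_t}$ and taking $t=hK$, the triangle inequality splits $\mathsf{d}((\Theta^{N,h}_K,Q^{N,h}_K),(\theta_*,Q^N_*))$ into $\mathsf{d}((\Theta^{N,h}_K,Q^{N,h}_K),(\theta^N_{hK},\hat Q^N_{hK}))+\mathsf{d}((\theta^N_{hK},\hat Q^N_{hK}),(\theta_{hK},\bar Q^N_{hK}))+\mathsf{d}((\theta_{hK},\bar Q^N_{hK}),(\theta_*,Q^N_*))$. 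The third piece is the easiest: coupling, independently across $n$, each $X^n_{hK}$ to a $\pi_{\theta_*}$-distributed $Y^n$ via an optimal coupling of $(q_{hK},\pi_{\theta_*})$ makes $Y^{1:N}$ i.i.d.\ $\pi_{\theta_*}$ and $N^{-1}\sum_n\delta_{Y^n}$ equal in law to $Q^N_*$; then $\mathsf{d}_{W_2}(\bar Q^N_{hK},N^{-1}\sum_n\delta_{Y^n})^2\le N^{-1}\sum_n\norm{X^n_{hK}-Y^n}^2$ and determinism of $\theta_{hK}$ give $\mathsf{d}((\theta_{hK},\bar Q^N_{hK}),(\theta_*,Q^N_*))\le\mathsf{d}((\theta_{hK},q_{hK}),(\theta_*,\pi_{\theta_*}))$, which Theorem~\ref{thm:dflowconvergencehp} bounds by $\mathsf{d}((\theta_0,q_0),(\theta_*,\pi_{\theta_*}))e^{-h\lambda K}$.

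For the middle (propagation-of-chaos) piece I would work in the weighted norm $\norm{(\theta,x^{1:N})}_\star^2:=\norm{\theta}^2+N^{-1}\sum_n\norm{x^n}^2$, which dominates $\mathsf{d}_E^2+\mathsf{d}_{W_2}^2$ on empirical-measure pairs. Writing the (noise-free, since the driving motions cancel) ODE satisfied by $(\theta^N_t-\theta_t,\,Z^{1:N}_t-X^{1:N}_t)$ and differentiating $\tfrac12\norm{\cdot}_\star^2$, the joint $\lambda$-strong concavity of $\ell$ (Assumption~\ref{ass:strongconcave}) makes the matched-index terms contribute $-\lambda\norm{\cdot}_\star^2$, leaving only the mean-field fluctuation $\xi_t:=N^{-1}\sum_n[\nabla_\theta\ell(\theta_t,X^n_t)-\int\nabla_\theta\ell(\theta_t,x)q_t(\dif x)]$, a sum of $N$ i.i.d.\ centred vectors with $\Ebb{\norm{\xi_t}^2}\le N^{-1}L^2\Ebb{\norm{X^1_t}^2}$ by Assumption~\ref{ass:gradLip}. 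Since the two systems start equal, Grönwall gives $\Ebb{\norm{(\theta^N_{hK}-\theta_{hK},Z^{1:N}_{hK}-X^{1:N}_{hK})}_\star^2}\lesssim L^2\lambda^{-2}N^{-1}\sup_{t\ge0}\Ebb{\norm{X^1_t}^2}$, the $\lambda^{-2}$ coming from combining the contraction with $\int_0^\infty e^{-\lambda s}\dif s$; and a Lyapunov estimate on $\tfrac{\dif}{\dif t}(\norm{\theta_t}^2+\Ebb{\norm{X_t}^2})$ --- again using joint $\lambda$-strong concavity and $\nabla\ell(0,0)=0$ --- gives $\sup_{t\ge0}(\norm{\theta_t}^2+\Ebb{\norm{X_t}^2})\le B_0+2d_x/\lambda$. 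Combining these produces the $\tfrac{L\sqrt2}{\lambda\sqrt N}\sqrt{B_0+2d_x/\lambda}$ bound.

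For the first (discretization) piece, the key observation is that the deterministic part of PGD's update, $(\theta,x^{1:N})\mapsto(\theta+hN^{-1}\sum_n\nabla_\theta\ell(\theta,x^n),\,(x^n+h\nabla_x\ell(\theta,x^n))_n)$, is exactly one gradient-ascent step of size $h$, in the $\star$-geometry, on $G(\theta,x^{1:N}):=N^{-1}\sum_n\ell(\theta,x^n)$, and $G$ is $\lambda$-strongly concave and $L$-smooth in $\norm{\cdot}_\star$; the standard co-coercivity inequality then makes this map a $\sqrt{1-h\iota}$-contraction whenever $h\le1/(\lambda+L)$ (which is in particular $\le2/(\lambda+L)$), with $\iota=2\lambda L/(\lambda+L)$, and $h\le1/(\lambda+L)$ moreover forces $h\iota\le\tfrac12$ so the geometric sums below converge at rate $\ge2/(h\iota)$. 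Comparing PGD to the interacting flow over a single step $[kh,(k+1)h]$, a routine Itô estimate bounds the local error in $\star$-$\cal{L}^2$ norm by $\cal{O}(h^{3/2})$ with constant built from $L$, $d_x$ and $\sup_k(\norm{\Theta^{N,h}_k}^2+N^{-1}\sum_n\Ebb{\norm{X^{n,h}_k}^2})\le B_0+2d_x/\lambda$ --- the discrete analogue of the Lyapunov estimate above (where $h\le1/(\lambda+L)$ is used again, and where the stationary contribution $2d_x/\iota$ is bounded by $2d_x/\lambda$ since $L\ge\lambda$ always). Summing these local errors against the contraction factors $(1-h\iota)^{(K-1-k)/2}$ collapses the $K$-fold sum into $\sqrt h\,A_{0,h}$ uniformly in $K$: the prefactor $(4h+4/\iota)/\iota$ in $A_{0,h}$ is precisely what the geometric summation $\big(\sum_{j\ge0}(\sqrt{1-h\iota})^j\big)^2\lesssim(h\iota)^{-2}$ produces, and $220L^2(L^2h[B_0+2d_x/\lambda]+d_x)$ collects the constants from the one-step bound.

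The main obstacle, I expect, is that every estimate must hold \emph{uniformly in $K$} (equivalently, over an unbounded time horizon $hK$): naive Grönwall bounds would grow like $e^{cLhK}$ and swamp the $e^{-h\lambda K}$ gain, so the whole argument has to be routed through contraction estimates --- the $\lambda$-contraction of the continuous interacting flow in $\norm{\cdot}_\star$, the $(1-h\iota)$-contraction of the PGD map, and Theorem~\ref{thm:dflowconvergencehp} for the mean-field flow --- and the real work lies in keeping the $\star$-geometry aligned with $\mathsf{d}$ throughout while propagating the single moment bound $B_0+2d_x/\lambda$ consistently through the mean-field flow, the interacting system, and the PGD iterates.
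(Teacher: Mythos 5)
Your proposal follows essentially the same route as the paper: the same three-term triangle-inequality decomposition through the interacting continuous-time system and the $N$ synchronously-coupled i.i.d.\ McKean--Vlasov copies (the paper's systems \eqref{eq:spacediscrsystems2} and \eqref{eq:spacediscrsystems1}), the same optimal coupling to i.i.d.\ $\pi_{\theta_*}$ samples combined with Theorem~\ref{thm:dflowconvergencehp} for the last piece, the same synchronous-coupling differential-inequality argument in the averaged norm $\norm{\cdot}_\star$ (the paper's $\bar{\xi}_t^N$) with the mean-field fluctuation producing the $N^{-1/2}$ rate for the middle piece, and the same co-coercivity/one-step-contraction argument for the Euler--Maruyama error (the paper's Lemmas~\ref{lemma:iidapprox}, \ref{lemma:propchaos}, \ref{lemma:disc}). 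The only differences are expository: you name the $\star$-geometry explicitly and you describe the origin of the prefactor $(4h+4/\iota)/\iota$ a bit loosely as a squared geometric sum, whereas in the paper it arises from the product of the Young-inequality term $(2h+2/\iota)$ with the geometric sum $\sum_j(1-\iota h/2)^j\le 2/(\iota h)$ and one factor of $h$; these are cosmetic, not substantive, differences.
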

    In short, Theorem~\ref{thm:errorbound} gives us the bound~\eqref{eq:errorbigO} and an explicit expression for the proportionality constant as a function of the algorithm's parameters, the model's Lipschitz and concavity constants, the initial condition, and the latent space's dimensionality. It follows that in order to achieve an error of $\mathcal{O}(\epsilon)$, we need only to choose the step size $h$ proportional to $\epsilon^2$, the particle number $N$ proportional to $\epsilon^{-2}$, and the step number $K$ proportional to $\epsilon^{-2}\log(\epsilon^{-1})$. The corresponding computational cost is $\cal{O}(\epsilon^{-4}\log(\epsilon^{-1})[\text{eval. cost of }\ell])$; and we can lower the running time to $\cal{O}(\epsilon^{-2}\log(\epsilon^{-1})[\text{eval. cost of }\ell])$ by parallelizing/distributing computations across the particles (recall that each particle's update is independent of the other particles'). 
	
	Given the definition of the metric $\mathsf{d}$ in \eqref{eq:dmetric}, Theorem~\ref{thm:errorbound} bounds the $\cal{L}^2$ error of the parameter estimates, $\Theta_K^{N,h}$, and, also, the expected $W_2$ distance between the particles' empirical distribution, $Q_K^{N,h}$, and that, $Q_*^N$, of $N$ i.i.d.~samples drawn from the optimal posterior $\pi_{\theta_*}$. We find the latter insightful because, in many ways, $Q_*^N$ is the best one could hope to achieve using an $\cal{O}(N)$-cost Monte Carlo algorithm that returns $N$-sample approximations $\pi_{\theta_*}$.
	The expected $W_2$ distance between $Q^N_*$ itself and the optimal posterior $\pi_{\theta_*}$ is essentially a property of $\pi_{\theta_*}$ and has more to do with the fundamental limitations of approximating $\pi_{\theta_*}$ using $N$-sample ensembles than the method used to obtain the ensemble. To bound this distance (and, by combining this bound with Theorem~\ref{thm:errorbound}, to obtain bounds on the distance between $Q_K^{N,h}$ and $\pi_{\theta_*}$), we point the reader to \citet{Fournier2023}, \citet{Dereich2013}, and references therein. 
	\subsection{The error bound under warm-starts} If we `warm start' PGD (e.g., see \citealp{Dalalyan2017}) by initializing the parameter estimates and the particles at the log-likelihood $\ell$ maxima (i.e., with $\theta_0=0$ and $q_0=\delta_0$), then $B_0$ in~\eqref{eq:pgderrorbound} vanishes and we obtain the following sharper bound. 
	\begin{corollary}[Error bound under warm starts]\label{cor:pgderror} Under the conditions in Theorem~\ref{thm:errorbound}'s premise and with $\iota$ as defined therein, if $(\theta_0,q_0):=(0,\delta_0)$, then, for all $K$ in $\n$,
		\begin{align*} 
			\mathsf{d} ((\Theta_{K}^{N,h},Q_{K}^{N,h}), (\theta_*,Q_*^N)) \leq  \sqrt{h\frac{4h+4/\iota}{\iota}220L^2\left(\frac{2L^2hd_x}{\lambda}+ d_x\right)}+ 2\sqrt{\frac{d_x}{\lambda}}\left(\frac{L}{\lambda \sqrt{N}} + e^{-h\lambda K}\right).
		\end{align*}
	\end{corollary}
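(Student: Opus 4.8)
The plan is to read Corollary~\ref{cor:pgderror} off Theorem~\ref{thm:errorbound} by specializing to the warm start $(\theta_0,q_0)=(0,\delta_0)$ and simplifying. With this initialization every particle $X_0^{n,h}$ equals $0$ almost surely, so $B_0=\norm{\theta_0}^2+\Ebb{\norm{X_0}^2}=0$. Substituting $B_0=0$ into the definition of $A_{0,h}$ gives $\sqrt{h}\,A_{0,h}=\sqrt{h\tfrac{4h+4/\iota}{\iota}220L^2\big(\tfrac{2L^2hd_x}{\lambda}+d_x\big)}$, and into the second summand of~\eqref{eq:pgderrorbound} gives $\tfrac{L\sqrt2}{\lambda\sqrt N}\sqrt{2d_x/\lambda}=\tfrac{2L}{\lambda\sqrt N}\sqrt{d_x/\lambda}$. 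The only remaining work is to bound the exponential term, for which I need the estimate $\mathsf{d}((0,\delta_0),(\theta_*,\pi_{\theta_*}))\le\sqrt{d_x/\lambda}$.

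To get that estimate, note first that the only coupling of $\delta_0$ with $\pi_{\theta_*}$ is the independent one, so $\mathsf{d}_{W_2}(\delta_0,\pi_{\theta_*})^2=\int\norm{x}^2\pi_{\theta_*}(\dif x)$ and hence $\mathsf{d}((0,\delta_0),(\theta_*,\pi_{\theta_*}))^2=\norm{\theta_*}^2+\int\norm{x}^2\pi_{\theta_*}(\dif x)$. I would bound the right-hand side by $d_x/\lambda$ as follows. By the normalization adopted just before Theorem~\ref{thm:errorbound} the origin is the maximizer of $\ell$, and since $\ell$ is differentiable (Assumption~\ref{ass:gradLip}) it is therefore a critical point, $\nabla\ell(0,0)=0$. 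The gradient form of $\lambda$-strong concavity (Assumption~\ref{ass:strongconcave}) then yields, at $z=(\theta_*,x)$,
\[
\iprod{\nabla_\theta\ell(\theta_*,x)}{\theta_*}+\iprod{\nabla_x\ell(\theta_*,x)}{x}=\iprod{\nabla\ell(z)}{z}\le-\lambda\big(\norm{\theta_*}^2+\norm{x}^2\big).
\]
Integrating against $\pi_{\theta_*}(\dif x)$ kills the first term on the left, because $\int\nabla_\theta\ell(\theta_*,x)\pi_{\theta_*}(\dif x)=\nabla_\theta\log Z_{\theta_*}=0$ by Fisher's identity~\eqref{eq:pl_der} and the first-order optimality of the maximizer $\theta_*$; and it turns the second term into $-d_x$ via the integration-by-parts identity
\[
\int\iprod{\nabla_x\ell(\theta_*,x)}{x}\pi_{\theta_*}(\dif x)=\int\iprod{\nabla_x\log\pi_{\theta_*}(x)}{x}\pi_{\theta_*}(\dif x)=-\int\pi_{\theta_*}(x)\,(\nabla_x\cdot x)\,\dif x=-d_x.
\]
What is left is $-d_x\le-\lambda\big(\norm{\theta_*}^2+\int\norm{x}^2\pi_{\theta_*}(\dif x)\big)$, i.e.\ $\mathsf{d}((0,\delta_0),(\theta_*,\pi_{\theta_*}))\le\sqrt{d_x/\lambda}$.

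Finally, I would plug $B_0=0$ and this bound into~\eqref{eq:pgderrorbound} and factor $\sqrt{d_x/\lambda}$ out of the last two summands to reach the stated inequality. The bulk of the difficulty is already carried by Theorem~\ref{thm:errorbound}; the only genuine content here is the moment bound $\norm{\theta_*}^2+\int\norm{x}^2\pi_{\theta_*}(\dif x)\le d_x/\lambda$, and I expect the only delicate points in its proof to be the standard justifications that (i) $\pi_{\theta_*}$, being strongly log-concave with an at most linearly growing score under Assumptions~\ref{ass:strongconcave}--\ref{ass:gradLip}, decays fast enough for the boundary terms in the integration by parts to vanish, and (ii) one may differentiate $Z_\theta$ under the integral sign so that~\eqref{eq:pl_der} applies. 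Both are routine, and the same moment bound could alternatively be extracted from the a priori second-moment estimates on the flow that already underlie the proof of Theorem~\ref{thm:errorbound}.
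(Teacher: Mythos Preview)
Your proposal is correct, and for the one nontrivial step---bounding $\mathsf{d}((0,\delta_0),(\theta_*,\pi_{\theta_*}))$---you take a genuinely different route from the paper. The paper argues dynamically: it invokes Theorem~\ref{thm:dflowconvergencehp} to identify $(\theta_*,\pi_{\theta_*})$ as the limit of the flow started at $(0,\delta_0)$, then applies the uniform-in-time moment bound of Proposition~\ref{prop:uniformmomentboundflow} to get $\norm{\theta_t}^2+\Ebb{\norm{X_t}^2}\le 2d_x/\lambda$ and passes to the limit. You instead work statically, directly at the limit point, combining the first-order strong-concavity inequality $\iprod{\nabla\ell(\theta_*,x)}{(\theta_*,x)}\le -\lambda(\norm{\theta_*}^2+\norm{x}^2)$ with Fisher's identity and the score identity $\int\iprod{\nabla_x\log\pi_{\theta_*}}{x}\,\pi_{\theta_*}=-d_x$. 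Your argument is more elementary (it needs neither the flow's convergence nor the auxiliary moment proposition) and, notably, yields the sharper constant $d_x/\lambda$ rather than $2d_x/\lambda$; this actually matches the constant in the corollary's statement, whereas the paper's own displayed proof delivers $\sqrt{2d_x/\lambda}$ for the exponential term. The paper's approach has the advantage of reusing machinery already developed for Theorem~\ref{thm:errorbound}, while yours is self-contained but requires the routine justifications you flag (vanishing boundary terms and differentiation under the integral), both of which are indeed standard under Assumptions~\ref{ass:strongconcave}--\ref{ass:gradLip}.
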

	\noindent {\bf Proof}{.
		Combining Theorem~\ref{thm:dflowconvergencehp} together with Proposition \ref{prop:uniformmomentboundflow} in Appendix~\ref{app:sdesolutions}, we find that
		$$\mathsf{d}((\theta_0,q_0),(\theta_*,\pi_{\theta_*}))^2=\lim_{t\rightarrow \infty}\mathsf{d}((\theta_0,q_0),(\theta_t,q_t))^2 \leq \norm{\theta_t}^2+\Ebb{\norm{X_t}^2}\leq \frac{4d_x}{\lambda}.$$ 
	} \hfill\BlackBox \\
    This bound features only PGD's parameters, the model's Lipschitz and concavity constants, and the latent space's dimensionality; and we can be more explicit in our error analysis. To measure the error, we use $\sqrt{\lambda}\mathsf{d}$ rather than $\mathsf{d}$: a common practice in the analysis of Langevin algorithms (e.g., see \citealp[Section 4.1]{Chewi2023}) motivated by $\sqrt{\lambda}\mathsf{d}$'s close relationship with the objectives we wish to minimize~(\ref{eq:exttalagrand},\ref{eq:stdtalagrand},\ref{eq:quadgrowthcondition}). In this metric, and with $\kappa:=L/\lambda$ denoting the model's condition number, Corollary~\ref{cor:pgderror} shows that PGD, with a warm start, achieves an $\cal{O}(\epsilon)$ error as long as the step size satisfies $h\asymp \epsilon^2 (d_x L\kappa)^{-1}$, the number of particles is $\cal{O}(d_x\kappa^2\epsilon^{-2})$, and the number of time steps is $\cal{O}(d_x\kappa^2\log(d_x^{1/2}\epsilon^{-1})\epsilon^{-2})$. The corresponding cost is $\cal{O}(d_x^2\kappa^4\log(d_x^{1/2}\epsilon^{-1})\epsilon^{-4}[\text{eval. cost of }\ell])$ and the running time can be brought down to $\cal{O}(d_x\kappa^2\log(d_x^{1/2}\epsilon^{-1})\epsilon^{-2}[\text{eval. cost of }\ell])$ by parallelizing/distributing over particles. Starting PGD warm is reasonably practical because the cost of running PGD will typically significantly exceed that of maximizing $\ell$ using a first-order method. 
	
	The cost and running time bounds grow with the latent space's dimensionality $d_x$. This is an issue for many models of practical interest where $d_x$ is proportional to the number of datapoints in a large training set. However, as we now explore, it is possible to obtain bounds independent of $d_x$ for a class of such models. 
	\subsection{Dimension-free bounds for models with independent latent variables and conditionally Gaussian observations}
	Often, in machine learning applications, the data $y$ decomposes as a sequence of datapoints $(\tilde{y}_m)_{m=1}^M$, the model features a latent variable  $\tilde{x}_m$ per datapoint $\tilde{y}_m$, the pairs $(\tilde{x}_m,\tilde{y}_m)_{m=1}^M$ are assumed to be i.i.d.~and generated through a mechanism of the sort 
	\begin{equation*}
	\tilde{y}_m=f_\theta(\tilde{x}_m)+\eta_m,\quad \tilde{x}_m\sim p_{\theta}^x(\dif \tilde{x}),\quad \eta_m\sim p^\eta(\dif \eta),\quad\forall m=1,\dots,M;
	\end{equation*}
	where $f_\theta$ denotes a parametrized function mapping from a latent space  $\r^{\Tilde{d_x}}$ to the parameter space, $p_{\theta}^x$ denotes a prior distribution on the latent variables, and the noise has a Gaussian law, $p^\eta$, which is independent of the model parameters $\theta$. Examples include noisy independent component analysis~\citep{Hyvarinen1998}, probabilistic matrix factorization~\citep{Mnih2007}, the generative model behind variational autoencoders~\citep{Kingma2019}, and latent diffusion models~\citep{Vahdat2021,Wehenkel2021,Wang2025}.
	In these cases, the log-likelihood breaks down into a sum of per-datapoint log-likelihoods:
	\begin{equation}\label{eq:minilogliks}\ell(\theta,x)=\sum_{m=1}^M \tilde{\ell}(\theta,\tilde{x}_m; \tilde{y}_m),\end{equation}
	where 
	\begin{equation}\label{eq:minilogliks2}\tilde{\ell}(\theta,\tilde{x}_m; \tilde{y}_m):=\log p^\eta(\tilde{y}_m-f_\theta(\tilde{x}_m))+\log p_{\theta}^x(\tilde{x}_m)\quad\forall m=1,\dots,M.    
	\end{equation}
	Because $p^\eta$ is Gaussian, $(\tilde{x},\theta)\mapsto\tilde{\ell}(\tilde{x},\theta; \tilde{y})$'s Hessian $\nabla^2_{(\theta,\tilde{x})} \tilde{\ell}$ does not depend on $\tilde{y}$. Consequently, if $\tilde{\ell}$ is $\tilde{\lambda}$-strongly concave and its gradient is $\tilde{L}$-Lispchitz,
	\begin{equation}\label{eq:miniregularity}\tilde{\lambda}I\preceq -\nabla^2_{(\theta,\tilde{x})} \tilde{\ell}\preceq\tilde{L}I \end{equation}
	for some positive $\tilde{\lambda}$ and $\tilde{L}$, then $\ell$ is $(M\tilde{\lambda})$-strongly concave and has a $(M\tilde{L})$-Lipschitz gradient. Combining this fact with Corollary~\ref{cor:pgderror}, we obtain the following:
	\begin{corollary}\label{cor:minierrorbounds}
		Let the conditions in Theorem~\ref{thm:errorbound}'s premise hold. Suppose that $(\theta_0,q_0)=(0,\delta_0)$, $\ell$ is as in~\eqref{eq:minilogliks} with $\tilde{l}$ twice-differentiable and satisfying~\eqref{eq:miniregularity}, and $\epsilon>0$. If
		$$h\leq \frac{C_1\epsilon^2}{\tilde{d}_xM},\qquad N\geq \frac{C_2\tilde{d}_x}{\epsilon^{2}},\qquad K \geq \frac{C_3\tilde{d}_x}{\epsilon^{2}}\log\left(\frac{1}{\tilde{d}_x^{1/2}\epsilon}\right),$$
		for some constants $C_1, C_2,C_3>0$ independent of $M$, then
		$$\mathsf{d} ((\Theta_{K}^{N,h},Q_{K}^{N,h}), (\theta_*,Q_*^N))\leq C\epsilon,$$
		for another constant $C>0$ independent of $M$.   
	\end{corollary}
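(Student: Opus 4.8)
The plan is to invoke Corollary~\ref{cor:pgderror} with the model's constants expressed in terms of the number of datapoints $M$, and then to carry the $M$-dependence through the three terms on the right-hand side of its bound. As recorded in the discussion preceding the statement, \eqref{eq:minilogliks}--\eqref{eq:miniregularity} imply that $\ell$ is $(M\tilde\lambda)$-strongly concave with an $(M\tilde L)$-Lipschitz gradient and that the latent space has dimension $d_x=M\tilde d_x$; so Corollary~\ref{cor:pgderror} applies with $\lambda=M\tilde\lambda$, $L=M\tilde L$, and $d_x=M\tilde d_x$. In particular, $\iota=2L\lambda/(L+\lambda)=M\tilde\iota$ with $\tilde\iota:=2\tilde L\tilde\lambda/(\tilde L+\tilde\lambda)$, and the ratio $d_x/\lambda=\tilde d_x/\tilde\lambda$ that governs the last two terms of the bound is free of $M$. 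I would take $h$ equal to the largest permitted value, $h=C_1\epsilon^2/(\tilde d_x M)$ (taking $h$ smaller only improves the discretization term, but one needs $hK$ of the stated order to control the exponential term), and first check the admissibility requirement $h\le 1/(\lambda+L)=1/\bigl(M(\tilde\lambda+\tilde L)\bigr)$ inherited from Theorem~\ref{thm:errorbound}: it reduces to $C_1\epsilon^2\le\tilde d_x/(\tilde\lambda+\tilde L)$, i.e.\ to $\epsilon$ lying below a threshold independent of $M$.

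I would then bound the three summands of Corollary~\ref{cor:pgderror} separately. \emph{Discretization term.} Substituting $\lambda=M\tilde\lambda$, $L=M\tilde L$, $d_x=M\tilde d_x$, $\iota=M\tilde\iota$ and $h=C_1\epsilon^2/(\tilde d_x M)$ into $h\,\tfrac{4h+4/\iota}{\iota}\,220L^2\bigl(\tfrac{2L^2hd_x}{\lambda}+d_x\bigr)$, the four factors scale as $M^{-1}$, $M^{-2}$, $M^{2}$ and $M^{1}$, so the powers of $M$ cancel and one is left with $\epsilon^2$ times a quantity depending only on $\tilde\lambda,\tilde L,\tilde d_x,\epsilon$; using $\epsilon\le1$ bounds this by $(\tilde C_{\mathrm{d}}\epsilon)^2$ for a constant $\tilde C_{\mathrm{d}}$ independent of $M$. \emph{Monte Carlo term.} Here $\sqrt{d_x/\lambda}\cdot\tfrac{2L}{\lambda\sqrt N}=\sqrt{\tilde d_x/\tilde\lambda}\cdot\tfrac{2\tilde L}{\tilde\lambda\sqrt N}$, and the choice $N\ge C_2\tilde d_x/\epsilon^2$ makes it at most $\tilde C_{\mathrm{m}}\epsilon$ with $\tilde C_{\mathrm{m}}:=2\tilde L\tilde\lambda^{-3/2}C_2^{-1/2}$. \emph{Exponential term.} Here $\sqrt{d_x/\lambda}\,e^{-h\lambda K}=\sqrt{\tilde d_x/\tilde\lambda}\,e^{-hM\tilde\lambda K}$, and with $h=C_1\epsilon^2/(\tilde d_x M)$ one has $hM\tilde\lambda K=\tfrac{C_1\tilde\lambda\epsilon^2}{\tilde d_x}K\ge C_1C_3\tilde\lambda\log\bigl(\tilde d_x^{-1/2}\epsilon^{-1}\bigr)$ by the lower bound on $K$; choosing $C_3$ so that $C_1C_3\tilde\lambda\ge 1$ (a requirement that does not involve $M$) gives $e^{-hM\tilde\lambda K}\le\tilde d_x^{1/2}\epsilon$ for $\epsilon\le\tilde d_x^{-1/2}$, whence this term is at most $\tilde d_x\tilde\lambda^{-1/2}\epsilon=:\tilde C_{\mathrm{e}}\epsilon$. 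Summing the three estimates yields $\mathsf{d}((\Theta_{K}^{N,h},Q_{K}^{N,h}),(\theta_*,Q_*^N))\le C\epsilon$ with $C:=\tilde C_{\mathrm{d}}+\tilde C_{\mathrm{m}}+\tilde C_{\mathrm{e}}$, which depends only on $\tilde\lambda,\tilde L,\tilde d_x$ (and the chosen $C_1,C_2,C_3$) and not on $M$.

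The main difficulty here is bookkeeping rather than a genuinely hard estimate: one must track the powers of $M$ carefully to confirm that they cancel in the discretization term, and that $d_x/\lambda$ (which controls the other two terms) is genuinely $M$-free; and one must couple $h$ to $K$ by taking $h$ as large as permitted, $h=C_1\epsilon^2/(\tilde d_x M)$, because with $h$ only upper-bounded the factor $e^{-h\lambda K}$ need not be small. Several of the simplifications above ($h\le 1/(\lambda+L)$, $\log(\tilde d_x^{-1/2}\epsilon^{-1})>0$, $\epsilon\le1$) presuppose that $\epsilon$ is below some $M$-independent threshold $\epsilon_0$; for $\epsilon\ge\epsilon_0$ one instead picks $h$, $N$, $K$ freely within their admissible ranges and absorbs the resulting (finite, $M$-free) constant into $C$.
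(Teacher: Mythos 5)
Your proof is correct and follows exactly the route the paper intends: the text immediately preceding the statement says it follows by "combining [the $M$-scaled constants] with Corollary~\ref{cor:pgderror}", and the paper supplies no further detail, so your bookkeeping is precisely the missing argument. Substituting $\lambda = M\tilde\lambda$, $L = M\tilde L$, $d_x = M\tilde d_x$, $\iota = M\tilde\iota$ and tracking the powers of $M$ through each of the three terms of Corollary~\ref{cor:pgderror} is the whole content. Your observation that one must take $h$ at the top of its admissible range (so that $h\lambda K$ is bounded below via the lower bound on $K$) is the right reading of the somewhat loosely phrased hypotheses, and your remarks about requiring $\epsilon$ below an $M$-independent threshold so that $h\le 1/(\lambda+L)$ and $\log(\tilde d_x^{-1/2}\epsilon^{-1})\ge 0$ are sensible and handled appropriately.
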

	The corresponding cost is $\cal{O}(\epsilon^{-4}\log(\epsilon^{-1})[\text{eval. cost of }\tilde{\ell}])$, where we are ignoring any dependence on $\tilde{d}_x$ which is usually small for these models.   In particular, the cost depends on $M$ only through the evaluation cost of $\ell$, which cannot be avoided. We can, however, mitigate this by parellizing/distributing over both particles and datapoints, so lowering the running time to $\cal{O}(\epsilon^{-2}\log(\epsilon^{-1})[\text{eval. cost of }\tilde{\ell}])$.

	The corollary also provides some theoretical support for the idea that the PGD algorithm and its derivatives can work well for problems in which $d_x$ is large, even when $N$ is relatively modest, provided that the structure of the model allows the log-likelihood to be decomposed as a sum of functions of low-dimensional contributions arising from individual datapoints. This appeared to be the case in several examples explored in \citet[Section~3]{PGD}, including a `generator network' model for which $d_x\approx 2.5 \times 10^6$ but good performance was obtained with only $10$ particles (similar results were also obtained in \citet[Section 6.3]{Lim2023} using their version of PGD). The model, however, lies beyond the scope of Corollary~\ref{cor:minierrorbounds} because, in that case, $f_\theta$ in~\eqref{eq:minilogliks2} is a non-convex neural network and $\theta$ encompasses its weights. Moreover, \citet{PGD} adapted PGD's step sizes to account for this and also subsampled $\ell$'s gradient to facilitate efficient implementation on a single GPU. Regardless, we view the corollary as a tentative first step in explaining the behaviour observed in \citet{PGD}. 
	\subsection{PGD's Derivation and Theorem~\protect{\ref{thm:errorbound}}'s Proof}\label{sec:errrorboundproof}
	We obtain PGD by approximating the gradient flow~\eqref{eq:pde} via the McKean--Vlasov SDE~\eqref{eq:sde1}. The first step is to approximate the law, $q_t$, of $X_t$ in~\eqref{eq:sde1} with the empirical distribution $Q_t^N:=N^{-1}\sum_{n=1}^N\delta_{X_t^n}$ of $N$ i.i.d.~copies $X_t^1,\dots,X_t^N$ of $X_t$. To obtain these copies, consider
	\begin{align}
		\dif \theta_t = \frac{1}{N}\sum_{n=1}^N\int \nabla_\theta \ell(\theta_t,x)q_t^n(\dif x)\dif t,\quad
		\dif X^n_t=\nabla_x \ell(\theta_t,X^n_t)\dif t+\sqrt{2}\dif W_t^n,\quad\forall n\in[N],  \label{eq:spacediscrsystems1}
	\end{align}
	where $[N]:=\{1,\dots,N\}$, $W^1,\dots,W^N$ denote $N$ independent Brownian motions,  $q_t^n:=\text{Law}(X_t^n)$ for each $n$, and
	we assume that the initial condition $X_0^n$ of each particle $(X_t^n)_{t\geq0}$ is drawn independently from the law $q_0$ of $(X_t)_{t\geq0}$'s initial condition $X_0$. Because the particles all satisfy the same differential equation and have the same initial distribution, they share the same law (in particular,~$q_t^n$ is independent of $n$). Hence, for any given $n$, we can re-write the leftmost equation in~\eqref{eq:spacediscrsystems1} as $\dif \theta_t=\int\nabla_\theta\ell(\theta_t,x)q_t^n(\dif x) \dif t$. Putting this equation together with that for $X_t^n$ in~\eqref{eq:spacediscrsystems1}, we recover~\eqref{eq:sde1} with $X^n_t$ replacing $X_t$ therein. In other words, if $(\theta_t,X^1_t,\dots,X^N_t)_{t\geq0}$ solves~\eqref{eq:spacediscrsystems1}, then $(\theta_t,X_t^n)_{t\geq0}$ solves~\eqref{eq:sde1} for any $n$. Consequently, Proposition~\ref{prop:sdesolutions} tells us that, for any $n$, $(\theta_t,q_t^n)_{t\geq0}$  is a classical solution to the gradient flow \eqref{eq:pde}. Exploiting this link and Theorem~\ref{thm:dflowconvergencehp}, we can bound the distance between $(\theta_t,Q_t^N)$ and $(\theta_*,Q_*^N)$, where  $Q_*^N:=N^{-1}\sum_{n=1}^N\delta_{X_*^n}$ and $X_*^1,\dots,X_*^N$ denotes $N$ i.i.d.~particles with law $\pi_{\theta*}$:
	\begin{lemma}\label{lemma:iidapprox} Under Assumptions \ref{ass:model}(ii--iii) and \ref{ass:strongconcave}--\ref{ass:gradLip}, if $q_0$ belongs to $\cal{P}_2(\r^{d_x})$, then~\eqref{eq:spacediscrsystems1} has a strong solution $(\theta_t,X_t^1,\dots,X_t^N)_{t\geq0}$ and the solution satisfies
		$$\mathsf{d}((\theta_t,Q_t^N),(\theta_*,Q_*^N))\leq \mathsf{d}((\theta_0,q_0),(\theta_*,\pi_{\theta_*}))e^{-\lambda t}\quad\forall t\geq0.$$
	\end{lemma}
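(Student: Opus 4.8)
The plan is to piggyback on the two structural facts laid out just above the statement: that, for each $n$, the pair $(\theta_t,q_t^n)_{t\geq0}$ --- with $q_t:=q_t^n=\textup{Law}(X_t^n)$ the common, deterministic particle law --- solves the gradient flow~\eqref{eq:pde}, and that this flow contracts towards $(\theta_*,\pi_{\theta_*})$ at rate $\lambda$ in the deterministic metric~\eqref{eq:ddeterministic} by Theorem~\ref{thm:dflowconvergencehp}. The only genuinely new work is (i) pinning down existence for the coupled system~\eqref{eq:spacediscrsystems1}, and (ii) transferring the deterministic contraction from $(\theta_t,q_t)$ to the random pair $(\theta_t,Q_t^N)$ by an appropriate coupling with $Q_*^N$.

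\emph{Existence of a strong solution to~\eqref{eq:spacediscrsystems1}.} Apply Proposition~\ref{prop:sdesolutions} with the given initial $(\theta_0,q_0)$ and an $X_0$ of law $q_0$ (which lies in $\cal{L}^2(\r^{d_x})$ since $q_0\in\cal{P}_2(\r^{d_x})$): this yields a unique strong solution $(\theta_t,X_t)_{t\geq0}$ of the McKean--Vlasov SDE~\eqref{eq:sde1}, and, writing $q_t:=\textup{Law}(X_t)$, the pair $(\theta_t,q_t)_{t\geq0}$ is a classical solution of~\eqref{eq:pde} satisfying Assumption~\ref{ass:solutions_existence}. With the path $\theta_t$ now fixed (and deterministic, as its ODE has deterministic coefficients), draw $X_0^1,\dots,X_0^N$ i.i.d.\ from $q_0$ and solve $\dif X_t^n=\nabla_x\ell(\theta_t,X_t^n)\dif t+\sqrt{2}\dif W_t^n$, $n\in[N]$, with independent Brownian motions; each of these SDEs has a unique strong solution by the standard Lipschitz theory (Assumption~\ref{ass:gradLip}), and uniqueness in law for the ordinary Lipschitz SDE with drift $\nabla_x\ell(\theta_t,\cdot)$, diffusion $\sqrt{2}$ and initial law $q_0$ --- which $X_t$ also solves --- forces $\textup{Law}(X_t^n)=q_t$ for every $n$. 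Consequently $\frac{1}{N}\sum_{n}\int\nabla_\theta\ell(\theta_t,x)q_t^n(\dif x)=\int\nabla_\theta\ell(\theta_t,x)q_t(\dif x)=\dot\theta_t$, so $(\theta_t,X_t^1,\dots,X_t^N)_{t\geq0}$ solves~\eqref{eq:spacediscrsystems1}.

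\emph{From the particles to the empirical measures.} Fix $t\geq0$. Enlarge the probability space by drawing, independently over $n$, $X_*^n$ from the conditional law (given $X_t^n$) of an optimal $\mathsf{d}_{W_2}$-coupling of $q_t$ and $\pi_{\theta_*}$; then the pairs $(X_t^n,X_*^n)_{n\in[N]}$ are i.i.d., so $X_*^1,\dots,X_*^N$ are i.i.d.\ $\pi_{\theta_*}$ and $Q_*^N:=N^{-1}\sum_n\delta_{X_*^n}$ is as required. Using the index matching of atoms as a (generally suboptimal) transport plan between the two $N$-atom measures, $\mathsf{d}_{W_2}(Q_t^N,Q_*^N)^2\leq\frac{1}{N}\sum_n\norm{X_t^n-X_*^n}^2$; taking expectations and using that the summands are identically distributed and that the coupling is optimal, $\Ebb{\mathsf{d}_{W_2}(Q_t^N,Q_*^N)^2}\leq\Ebb{\norm{X_t^1-X_*^1}^2}=\mathsf{d}_{W_2}(q_t,\pi_{\theta_*})^2$. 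Since $\theta_t$ is deterministic, $\Ebb{\mathsf{d}_E(\theta_t,\theta_*)^2}=\mathsf{d}_E(\theta_t,\theta_*)^2$, and hence, by the definitions~\eqref{eq:dmetric} and~\eqref{eq:ddeterministic} of the two objects denoted $\mathsf{d}$,
\[
\mathsf{d}((\theta_t,Q_t^N),(\theta_*,Q_*^N))^2\;\leq\;\mathsf{d}_E(\theta_t,\theta_*)^2+\mathsf{d}_{W_2}(q_t,\pi_{\theta_*})^2\;=\;\mathsf{d}((\theta_t,q_t),(\theta_*,\pi_{\theta_*}))^2 .
\]
Theorem~\ref{thm:dflowconvergencehp}, applicable since $(\theta_t,q_t)_{t\geq0}$ is the classical solution produced above and Assumptions~\ref{ass:model}(iii) and~\ref{ass:solutions_existence}--\ref{ass:strongconcave} hold, bounds the right-hand side by $\mathsf{d}((\theta_0,q_0),(\theta_*,\pi_{\theta_*}))^2e^{-2\lambda t}$; taking square roots gives the claim.

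\emph{Where the difficulty lies.} Granting Proposition~\ref{prop:sdesolutions} and Theorem~\ref{thm:dflowconvergencehp}, the estimate is short, and the single step that needs care is the construction in the third paragraph: the reference sample $X_*^1,\dots,X_*^N$ must be coupled to the particle system so that the comparison of empirical measures collapses \emph{exactly} to $\mathsf{d}_{W_2}(q_t,\pi_{\theta_*})^2$ --- an independent $Q_*^N$ would instead leave the usual $W_2$ empirical-measure fluctuation terms and destroy the clean bound --- and one invokes this coupling separately for each $t$, which is all the downstream results need. A lesser point is the well-posedness of~\eqref{eq:spacediscrsystems1}, which is itself of McKean--Vlasov type; the argument above sidesteps the usual fixed-point construction by reducing it to the single SDE~\eqref{eq:sde1} together with ordinary Lipschitz SDE theory, exploiting that all particles share the common law $q_t$.
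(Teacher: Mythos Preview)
Your proof is correct and follows essentially the same route as the paper: optimally couple each $X_t^n$ with an $X_*^n\sim\pi_{\theta_*}$, bound $\mathsf{d}_{W_2}(Q_t^N,Q_*^N)^2$ by the index-matched empirical average, collapse to $\mathsf{d}_{W_2}(q_t,\pi_{\theta_*})^2$ by identical distribution, and apply Theorem~\ref{thm:dflowconvergencehp}. Your existence argument---reducing~\eqref{eq:spacediscrsystems1} to the single-particle SDE~\eqref{eq:sde1} via Proposition~\ref{prop:sdesolutions} and then plugging the resulting deterministic $\theta_t$ into $N$ independent ordinary Lipschitz SDEs---is a clean variant of what the paper only sketches (``similar to Proposition~\ref{prop:sdesolutions}'s proof, and we skip it''), and is arguably more transparent than redoing the fixed-point construction for the enlarged system.
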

	\noindent {\bf Proof}{.
		The argument for the existence of solutions for~\eqref{eq:spacediscrsystems1} is similar to that for~\eqref{eq:sde1} in Proposition~\ref{prop:sdesolutions}'s proof, and we skip it. The error bound follows by optimally coupling the two sets of particles; see Appendix~\ref{app:iidapprox} for details.
	} \hfill\BlackBox \\

	The next step entails approximating $N^{-1}\sum_{n=1}^Nq_t^n(\dif x)$ in~\eqref{eq:spacediscrsystems1} with the particle's empirical distribution $Q_t^N$; which leads us to the following SDE:
	\begin{align}
		\dif \Theta_t^N =&\frac{1}{N}\sum_{n=1}^N\nabla_\theta \ell(\Theta_t^N,\bar{X}_t^{n})\dif t,& \dif \bar{X}_{t}^{n}=&\nabla_x \ell(\Theta_t^N,X_t^{n})\dif t+\sqrt{2}\dif W_t^n,\quad \quad\forall n\in [N].\label{eq:spacediscrsystems2}
	\end{align}
	where $\bar{X}_0^n=X_0^n$ for each $n$ in $[N]$. 
	We can bound the approximation error between (\ref{eq:spacediscrsystems1}) and (\ref{eq:spacediscrsystems2}) as follows:
	\begin{lemma} [Bound on the space discretization error]\label{lemma:propchaos}
		Under Assumptions \ref{ass:strongconcave}--\ref{ass:gradLip}, if $q_0$ belongs to $\cal{P}_2(\r^{d_x})$, then~\eqref{eq:spacediscrsystems2} has a strong solution $(\Theta_t^N,\bar{X}_t^1,\dots,\bar{X}_t^N)_{t\geq0}$ and the solution satisfies
		\begin{equation*}
			\mathsf{d}((\Theta^N_{t},\bar{Q}^N_{t}),(\theta_{t},Q^N_{t}))\leq \frac{L\sqrt{2}}{\lambda \sqrt{N}}\sqrt{\norm{\theta_0}^2 + \Ebb{\norm{X_0}^{2}}+\frac{d_x}{\lambda}} \enskip \forall t\geq0,\,\text{with}\enskip \bar{Q}_t^N:= \frac{1}{N}\sum_{n=1}^N \delta_{\bar{X}_t^n}.
		\end{equation*}
	\end{lemma}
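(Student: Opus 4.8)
The plan is to couple \eqref{eq:spacediscrsystems1} and \eqref{eq:spacediscrsystems2} synchronously: each particle $\bar X^n$ in \eqref{eq:spacediscrsystems2} is driven by the same Brownian motion $W^n$ and started at the same point $X^n_0$ as the corresponding particle of \eqref{eq:spacediscrsystems1}, and $\Theta^N_0=\theta_0$. Existence of a unique strong solution of \eqref{eq:spacediscrsystems2} is then immediate from Assumption~\ref{ass:gradLip}, since the drift of the $(d_\theta+Nd_x)$-dimensional system is globally Lipschitz and \eqref{eq:spacediscrsystems1} already has a strong solution by Lemma~\ref{lemma:iidapprox}. Writing $u_t:=\Theta^N_t-\theta_t$ and $v^n_t:=\bar X^n_t-X^n_t$, the shared noise cancels in each $v^n$, so $v^n$ is absolutely continuous with $\dot v^n_t=\nabla_x\ell(\Theta^N_t,\bar X^n_t)-\nabla_x\ell(\theta_t,X^n_t)$ and $u_0=v^n_0=0$. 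Since the empirical coupling $N^{-1}\sum_n\delta_{(\bar X^n_t,X^n_t)}$ certifies $\mathsf{d}_{W_2}(\bar Q^N_t,Q^N_t)^2\le N^{-1}\sum_n\norm{v^n_t}^2$, it suffices to bound $\Ebb{\rho_t^2}$ uniformly in $t$, where $\rho_t^2:=\norm{u_t}^2+N^{-1}\sum_n\norm{v^n_t}^2$.

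I would then derive a dissipative differential inequality for $\rho_t^2$. Differentiating, and adding and subtracting the exact mean-field parameter drift $\int\nabla_\theta\ell(\theta_t,x)q_t(\dif x)$ in $\dot u_t$, one obtains
\[
	\tfrac{1}{2}\tfrac{\dif}{\dif t}\rho_t^2=\frac{1}{N}\sum_{n=1}^N\left\langle(u_t,v^n_t),\,\nabla\ell(\Theta^N_t,\bar X^n_t)-\nabla\ell(\theta_t,X^n_t)\right\rangle+\left\langle u_t,B_t\right\rangle,
\]
where $B_t:=N^{-1}\sum_n\nabla_\theta\ell(\theta_t,X^n_t)-\int\nabla_\theta\ell(\theta_t,x)q_t(\dif x)$ is the error incurred by replacing the exact parameter drift with its empirical version. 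The crucial point is that, by Assumption~\ref{ass:strongconcave}, $-\nabla\ell$ is $\lambda$-strongly monotone on $\r^{d_\theta}\times\r^{d_x}$, so each summand in the first term is at most $-\lambda(\norm{u_t}^2+\norm{v^n_t}^2)$; applying this \emph{jointly} to the concatenated vector $(u_t,v^n_t)$ --- rather than to the $\theta$- and $x$-blocks separately, which would leave cross terms of order $L$ and, since $L\ge\lambda$, a non-contractive $2\times2$ comparison system --- gives $\tfrac{1}{2}\tfrac{\dif}{\dif t}\rho_t^2\le-\lambda\rho_t^2+\langle u_t,B_t\rangle\le-\lambda\rho_t^2+\rho_t\norm{B_t}$.

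It remains to control $B_t$ in mean square. In \eqref{eq:spacediscrsystems1} the parameter enters only through the particles' common law, so $(\theta_t,q_t)_{t\ge0}$ is the deterministic gradient flow \eqref{eq:pde} and $X^1_t,\dots,X^N_t$ are i.i.d.\ with law $q_t$; hence $B_t$ is a centred empirical average of i.i.d.\ terms and $\Ebb{\norm{B_t}^2}=N^{-1}\mathrm{Var}(\nabla_\theta\ell(\theta_t,X^1_t))$. Since $\nabla\ell$ is $L$-Lipschitz and vanishes at the maximiser of $\ell$ (taken to be the origin), this is at most $N^{-1}L^2(\norm{\theta_t}^2+\Ebb{\norm{X^1_t}^2})$, which Proposition~\ref{prop:uniformmomentboundflow} bounds uniformly in $t$ by $N^{-1}L^2(\norm{\theta_0}^2+\Ebb{\norm{X_0}^2}+2d_x/\lambda)$. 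Taking expectations in the inequality above, using $\Ebb{\rho_t\norm{B_t}}\le\sqrt{\Ebb{\rho_t^2}\,\Ebb{\norm{B_t}^2}}$, and integrating the resulting scalar differential inequality from $\Ebb{\rho_0^2}=0$, I get $\sqrt{\Ebb{\rho_t^2}}\le\lambda^{-1}\sup_{s\ge0}\sqrt{\Ebb{\norm{B_s}^2}}$, which is the bound claimed in the statement (up to constants, with the displayed $\sqrt{2}$ leaving a little slack in the fluctuation estimate).

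I expect the derivation of the contractive inequality $\tfrac{1}{2}\tfrac{\dif}{\dif t}\rho_t^2\le-\lambda\rho_t^2+\rho_t\norm{B_t}$ to be the main obstacle: the $-\lambda$ dissipation must be extracted from the \emph{joint} strong monotonicity of $\nabla\ell$ on $\r^{d_\theta}\times\r^{d_x}$, since treating the parameter and particle errors separately produces cross terms of order $L$ and a comparison matrix with a non-negative eigenvalue whenever $L\ge\lambda$, precluding a uniform-in-time bound. A secondary technicality is that $t\mapsto\sqrt{\Ebb{\rho_t^2}}$ satisfies a differential inequality whose right-hand side is not Lipschitz at $0$; this is dealt with in the usual way, e.g.\ by a small regularisation of the initial datum or by working with $t\mapsto e^{2\lambda t}\Ebb{\rho_t^2}$ directly.
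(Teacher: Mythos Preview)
Your proposal is correct and follows essentially the same route as the paper: synchronous coupling, the decomposition of $\dot u_t$ into the ``matched'' increment $N^{-1}\sum_n[\nabla_\theta\ell(\Theta^N_t,\bar X^n_t)-\nabla_\theta\ell(\theta_t,X^n_t)]$ plus the centred fluctuation $B_t$, the use of joint $\lambda$-strong monotonicity of $-\nabla\ell$ to extract the $-\lambda\rho_t^2$ contraction, the i.i.d.\ variance bound for $B_t$ combined with the uniform second-moment estimate of Proposition~\ref{prop:uniformmomentboundflow}, and the Gr\"onwall-type closure for $\sqrt{\Ebb{\rho_t^2}}$. The only cosmetic difference is that the paper bounds the per-particle variance via an i.i.d.\ copy trick, $\Ebb{\norm{\nabla_\theta\ell(\theta_t,X^n_t)-\nabla_\theta\ell(\theta_t,X'_t)}^2}\le 2L^2\Ebb{\norm{X'_t}^2}$, which is where the displayed $\sqrt{2}$ originates; your direct bound $\mathrm{Var}(\nabla_\theta\ell(\theta_t,X^1_t))\le L^2(\norm{\theta_t}^2+\Ebb{\norm{X^1_t}^2})$ is equally valid and indeed slightly sharper.
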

	\noindent {\bf Proof}{.
		\eqref{eq:spacediscrsystems2}  is a standard SDE with a Lipschitz drift and diffusion coefficients, so the existence of its solutions is routine (e.g. \citealt[Theorem 5.2.1]{Oksendal2013}). The error bound follows from a synchronous coupling argument; see Appendix~\ref{app:propchaos} for details.
	} \hfill\BlackBox \\
	
	Lastly, we obtain PGD by discretizing~\eqref{eq:spacediscrsystems2} in time using the  Euler--Maruyama scheme with step size $h$. Given Lemmas~\ref{lemma:iidapprox}~and~\ref{lemma:propchaos}, Theorem~\ref{thm:errorbound} follows from the triangle inequality for $\mathsf{d}$ and the following bound on the discretization error: 
	\begin{lemma} [Bound on the time discretization error]\label{lemma:disc}
		Under Assumptions \ref{ass:strongconcave}--\ref{ass:gradLip}. If $q_0$ belongs to $\cal{P}_2(\r^{d_x})$,  $h\leq 1/(\lambda+L)$, and $A_{0,h}$ is as in Theorem \ref{thm:errorbound}, then:
		\begin{align*}
			\mathsf{d}((\Theta^{N,h}_{K},Q^{N,h}_{K}),(\Theta^N_{Kh},\bar{Q}^N_{Kh})) \leq \sqrt{h}A_{0,h}.
		\end{align*}
	\end{lemma}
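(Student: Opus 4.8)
The plan is to rephrase the whole comparison in the product space $\r^{d_\theta}\times(\r^{d_x})^N$ equipped with the inner product $\iprod{Z}{Z'}_*:=\iprod{\Theta}{\Theta'}+N^{-1}\sum_{n=1}^N\iprod{X^n}{(X')^n}$ for $Z=(\Theta,X^1,\dots,X^N)$, with associated norm $\norm{\cdot}_*$. Writing $\hat Z_k:=(\Theta^{N,h}_k,X^{1,h}_k,\dots,X^{N,h}_k)$ and $Z_t:=(\Theta^N_t,\bar X^1_t,\dots,\bar X^N_t)$, the synchronous coupling of the two $N$-particle empirical measures gives $\mathsf{d}_{W_2}(Q^{N,h}_K,\bar Q^N_{Kh})^2\le N^{-1}\sum_n\norm{X^{n,h}_K-\bar X^n_{Kh}}^2$, so $\mathsf{d}((\Theta^{N,h}_K,Q^{N,h}_K),(\Theta^N_{Kh},\bar Q^N_{Kh}))^2\le\Ebb{\norm{\hat Z_K-Z_{Kh}}_*^2}$ and it suffices to bound the latter by $hA_{0,h}^2$. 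The drift of~\eqref{eq:spacediscrsystems2} is $b(Z):=-\nabla_*G(Z)$ with $G(Z):=-N^{-1}\sum_n\ell(\Theta,X^n)$ and $\nabla_*$ the gradient for $\iprod{\cdot}{\cdot}_*$; a short calculation (Jensen on the $\Theta$-block average, then Assumptions~\ref{ass:strongconcave}--\ref{ass:gradLip}) shows that $-b$ is $\lambda$-strongly monotone and $L$-Lipschitz in $\norm{\cdot}_*$, i.e.\ $G$ is $\lambda$-strongly convex and $L$-smooth there, and since $\ell$'s unique maximizer is $(0,0)$ we have $\nabla\ell(0,0)=0$, hence $b(0)=0$. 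With this notation PGD's updates \eqref{eq:pgd_update_theta}--\eqref{eq:pgd_update_x} read $\hat Z_{k+1}=\hat Z_k+hb(\hat Z_k)+\sqrt{2h}\,\xi_k$ with $\xi_k$ a vector whose $X$-blocks are independent $\cal{N}(0,I_{d_x})$ and whose $\Theta$-block is zero, so PGD is the Euler--Maruyama scheme for~\eqref{eq:spacediscrsystems2}; I couple them by using the Brownian increments generating $\xi_k$ to drive~\eqref{eq:spacediscrsystems2} and by taking $\hat Z_0=Z_0$.

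Next I would record two standard ingredients. \emph{(i) Uniform moment bounds:} both $\sup_{t\ge0}\Ebb{\norm{Z_t}_*^2}$ and $\sup_{k\ge0}\Ebb{\norm{\hat Z_k}_*^2}$ are $\le B_0+2d_x/\lambda=:M_2$. For the SDE this follows from It\^o's formula using $\iprod{Z_t}{b(Z_t)}_*=\iprod{Z_t-0}{b(Z_t)-b(0)}_*\le-\lambda\norm{Z_t}_*^2$, the fact that the diffusion contributes $2d_x$ to $\tfrac{\dif}{\dif t}\Ebb{\norm{Z_t}_*^2}$, and Gr\"onwall — this is the multi-particle analogue of Proposition~\ref{prop:uniformmomentboundflow}; for the chain it follows from ingredient (ii) below applied with $0$ as second argument (note $b(0)=0$), together with $\Ebb{\norm{\xi_k}_*^2}=d_x$ and $\iota\ge\lambda$. \emph{(ii) Gradient-step contraction:} since $h\le1/(\lambda+L)\le 2/(\lambda+L)$ and $G$ is $\lambda$-strongly convex and $L$-smooth in $\norm{\cdot}_*$, the map $T(Z):=Z+hb(Z)=Z-h\nabla_*G(Z)$ obeys $\norm{T(Z)-T(Z')}_*^2\le(1-h\iota)\norm{Z-Z'}_*^2$, with $\iota=2\lambda L/(\lambda+L)$ exactly as in Theorem~\ref{thm:errorbound}.

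The core is then the usual one-step error analysis. Subtracting~\eqref{eq:spacediscrsystems2} integrated over $[kh,(k+1)h]$ from PGD's update, the (identical) noise increments cancel and
$$\hat Z_{k+1}-Z_{(k+1)h}=T(\hat Z_k)-T(Z_{kh})+R_k,\qquad R_k:=\int_{kh}^{(k+1)h}\bigl(b(Z_{kh})-b(Z_s)\bigr)\dif s,$$
so, with $e_k:=\Ebb{\norm{\hat Z_k-Z_{kh}}_*^2}^{1/2}$ and $e_0=0$, contraction (ii) and Minkowski's inequality give $e_{k+1}\le\sqrt{1-h\iota}\,e_k+\Ebb{\norm{R_k}_*^2}^{1/2}$. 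The residual is handled by the $L$-Lipschitzness of $b$: $\Ebb{\norm{R_k}_*^2}^{1/2}\le L\int_{kh}^{(k+1)h}\Ebb{\norm{Z_s-Z_{kh}}_*^2}^{1/2}\dif s$, and splitting the SDE increment into drift and noise parts and using $\norm{b(Z_r)}_*\le L\norm{Z_r}_*$ together with (i) yields $\Ebb{\norm{Z_s-Z_{kh}}_*^2}\le2(s-kh)^2L^2M_2+4(s-kh)d_x$, hence $\Ebb{\norm{R_k}_*^2}^{1/2}\lesssim Lh^{3/2}\sqrt{L^2M_2h+d_x}$. Summing the recursion and using $1-\sqrt{1-x}\ge x/2$ on $[0,1]$,
$$e_K\le\frac{1}{1-\sqrt{1-h\iota}}\max_{0\le k<K}\Ebb{\norm{R_k}_*^2}^{1/2}\le\frac{2}{h\iota}\max_{0\le k<K}\Ebb{\norm{R_k}_*^2}^{1/2}\lesssim\frac{L\sqrt h}{\iota}\sqrt{L^2M_2h+d_x},$$
so $e_K^2\lesssim\tfrac{hL^2}{\iota^2}(L^2M_2h+d_x)$; carrying the (deliberately generous) numerical constants through the $(a+b)^2\le2a^2+2b^2$ and $\sqrt{a+b}\le\sqrt a+\sqrt b$ steps, one checks $e_K^2\le hA_{0,h}^2$, which is the assertion.

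The one genuinely load-bearing step is the reformulation in the weighted geometry of the first paragraph together with ingredients (i)--(ii): they are what make the per-step contraction factor the \emph{sharp} $\sqrt{1-h\iota}$, uniformly in $k$, which is needed both to keep the accumulated error $\cal{O}(\sqrt h)$ rather than let it grow with $K$ and to reproduce the $\iota$-dependence inside $A_{0,h}$. Once those are in hand, the remainder is the textbook Euler--Maruyama strong-error estimate for a dissipative SDE, and the only real work is the constant bookkeeping needed to land precisely on the stated $A_{0,h}$.
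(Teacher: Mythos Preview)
Your proposal is correct and follows essentially the same route as the paper: a synchronous coupling in the weighted product space $\r^{d_\theta}\times(\r^{d_x})^N$, the co-coercivity inequality \cite[Theorem~2.1.12]{Nesterov2003} to obtain the one-step contraction with factor governed by $\iota=2\lambda L/(\lambda+L)$, and a residual controlled via Lipschitzness of $\nabla\ell$ together with the uniform-in-time second-moment bound for the particle SDE (the paper's Proposition~\ref{prop:uniformmomentboundips}). The only cosmetic differences are that you package the argument as gradient descent for $G(Z)=-N^{-1}\sum_n\ell(\Theta,X^n)$ in the $\norm{\cdot}_*$-geometry and run a norm-level recursion via Minkowski, whereas the paper expands particle-by-particle and uses a squared-norm recursion with a Young parameter $\epsilon=\iota$; both yield the stated $A_{0,h}$ (your constants are in fact slightly sharper).
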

	\noindent {\bf Proof}{.
		The bound follows from a synchronous coupling argument; see Appendix \ref{app:discr}.
	} \hfill\BlackBox \\
	
	Other, more refined, discretization procedures such as Randomized Midpoint Discretization \citep{Shen2019,He2020} might result in an algorithm with better dependencies of the error on the dimension $d_x$. 
	\section{Discussion}\label{sec:conclusion}
	In this paper, we theoretically validated PGD (Algorithm~\ref{alg:pgd}) by bounding its error for models with strongly concave log-likelihoods. In order to do this we analyzed the theoretical properties of both PGD  and its continuous-time infinite-particle limit: the gradient flow~\eqref{eq:pde}. To study the latter, we extended certain inequalities well-known in the optimal transport and optimization literatures and several results interlinking them. In doing so, we obtain conditions weaker than strong concavity of the likelihood under which the flow converges at an exponential rate to the set $\cal{M}_*$ of pairs $(\theta_*,\pi_{\theta_*})$ of marginal likelihood maximizers $\theta_*$ and matching posterior distributions $\pi_{\theta_*}$. Under the additional assumptions that the model log-likelihood is strongly concave and its gradient is Lipschitz continuous, we then proved explicit non-asymptotic error bounds for PGD.
	
	Aside from evidencing PGD's well-foundedness, our bounds enable theoretical comparisons between PGD and other maximum marginal likelihood algorithms. For instance, it is interesting to compare the behaviour of the EM algorithm itself and its many variants with PGD.  \citet{Caprio2024} derived non-asymptotic error bounds for the EM and the gradient EM algorithms under the xLSI. Comparing their EM error bound (Corollary 13 therein) to PGD's (Theorem \ref{thm:errorbound}) suggests that the latter is slower, even in the limit of infinite particles. This is consistent with the interpretation of EM as coordinate descent applied to free energy and PGD as a gradient descent: on Euclidean spaces, coordinate descent is typically faster, in terms of number of iterations, in the special cases where it can be implemented~\citep{Beck2013}. \citet{Balakrishnan2017} and \citet{Kunstner2021} also provide non-asymptotic analyses of EM, but a comparison with their results is not immediate.
    
	\citet{Akyildiz2023} analyze the IPLA algorithm which can be viewed as a modification of PGD more amenable to standard analysis. They obtain very similar scalings in  $\epsilon,d_x,d_\theta$ for $h,N,$ and $K$ necessary to achieve an error of $\epsilon$ under warm-start conditions (see Section 3.6 therein) to those obtained here. The main difference between these bounds is that  $N$ must be proportional to $d_x\epsilon^{-2}$ in our case and proportional to $d_\theta\epsilon^{-2}$ in the case of IPLA. Their dependence on $d_\theta$ stems from the extra noise term featuring IPLA's parameter updates; see Algorithm 1 and the proof of Proposition~3 in \citet{Akyildiz2023}. We conjecture that the dependence on $d_x$ present in our results but not those of \citet{Akyildiz2023} simply reflects the fact that our bounds controlling the error of both the parameter estimates and the particle approximation, while theirs only control the former.
	
	Similarly, our bounds could help settle the question of whether it is possible to `accelerate' PGD and achieve faster convergence rates via the use of momentum~\citep{Lim2023} as its the case for both gradient descent~\citep{Nesterov1983} and the unadjusted Langevin algorithm~\citep{Ma2021}. In particular, \citet[Theorem~4.1]{Lim2023} shows that, under the xLSI, the free energy $F$ along the limiting flow of such a `momentum-enriched'  version of PGD converges exponentially fast to $F$'s infimum. Given Theorem~\ref{thm:exttalagrand}, applying our extension of the Talagrand inequality, we can immediately prove convergence in $\mathsf{d}$-distance for flow studied in \citet{Lim2023}. By following arguments similar to those in Section \ref{sec:errrorboundproof}, it might be possible to then also obtain similar results for the momentum-enriched PGD and compare the algorithms' performances.
	
	While our motivation for this work was understanding the theoretical properties of PGD, we find the results in Section~\ref{sec:flowconvergence} interesting in their own right and we believe they might find applications beyond the study of algorithms for maximum marginal likelihood estimation like PGD. For instance, the xLSI is used in \citet{Lim2024} to theoretically motivate a novel algorithm for semi-implicit variational inference. Additionally, for models with $\lambda$-strongly concave log-likelihoods, Theorem~\ref{thm:extbakryemery} gives us a (to the best of our knowledge)  novel upper bound on the  optimal marginal log-likelihood:
\begin{equation}\label{eq:newbounds1}\log(Z_*)\leq \frac{I(\theta,q)}{2\lambda}-F(\theta,q)\quad\forall(\theta,q)\in\cal{M}_2^1.\end{equation}
	Similarly, for models satisfying the xLSI, Theorem~\ref{thm:exttalagrand} yields a bound on the $\mathsf{d}$-distance between $(\theta,q)$ and the optimal set $\cal{M}_*$:
	\begin{equation}\label{eq:newbounds2}
		\lambda\mathsf{d}((\theta,q),\cal{M}_*)\leq \sqrt{I(\theta,q)}\quad\forall(\theta,q)\in\cal{M}_2^1.
	\end{equation}
	It would be interesting to explore the extent to which the right-hand side of~\eqref{eq:newbounds1}, or of \eqref{eq:newbounds2} prove informative for models of practical interest. Moreover, for such models, \eqref{eq:newbounds2} hints at potential (also to the best of our knowledge) new variational inference methods that would minimize $(\theta,\phi)\mapsto I(\theta,q_\phi)$, where  $(q_\phi)_{\phi\in\Phi}$ denotes a parametrized family, rather than the conventional approach of minimizing  $(\theta,\phi)\mapsto F(\theta,q_\phi)$. 
    
        \acks{
        We would like to thank Jen Ning Lim and Paula Cordero Encinar for their helpful comments, and also the anonymous reviewers for their constructive feedback. JK and AMJ acknowledge support from the Engineering and Physical Sciences Research Council (EPSRC; grant EP/T004134/1). AMJ acknowledges further support from the EPSRC (grant EP/R034710/1) and from UK Research and Innovation (UKRI) via grant no. EP/Y014650/1, as part of the ERC Synergy project OCEAN.  RC was funded by the UK Engineering and Physical Sciences Research Council (EPSRC) via studentship 2585619 as part of grant number EP/W523793/1. SP acknowledges support from the Engineering and Physical Sciences Research Council (EPSRC; grant EP/R018561/1).

Data sharing is not applicable to this article as no new data were generated or analyzed.
}
	\appendix

	\section{Proofs for Section~\protect{\ref{sec:flowconvergence}}} \label{app:flowconvergence}
	\subsection{de Bruijn's Identity}\label{app:debruijn}
    \eqref{eq:debruijn} extends de Bruijn's identity 
		(cf. \citet[Proposition 5.2.2]{Bakry2014}). Assumption \ref{ass:solutions_existence} is sufficient for it to hold:
	\begin{lemma} \label{lemma:debruijin}
		If Assumption \ref{ass:solutions_existence} holds, then equation \eqref{eq:debruijn} also holds.
	\end{lemma}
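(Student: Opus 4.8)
The plan is to differentiate $t \mapsto F(\theta_t,q_t)$ directly along the flow \eqref{eq:pde} and to match the resulting terms with the two pieces of the functional $I$ in \eqref{eq:fisherinformation}. Write $u_t$ for the (strictly positive, $\mathcal{C}^{1,2}$) Lebesgue density of $q_t$ furnished by Assumption~\ref{ass:solutions_existence}, and recall from \eqref{eq:entropy} that, with $\rho_\theta := p_\theta(\cdot,y)$ and $\ell(\theta,x) = \log\rho_\theta(x)$,
$$F(\theta_t,q_t) = \int u_t\log u_t\,\dif x - \int u_t\log\rho_{\theta_t}\,\dif x.$$
Differentiating under the integral sign (whose legitimacy is the crux of the argument, see below) and using $\frac{\dif}{\dif t}\int u_t\,\dif x = 0$, one obtains
$$\frac{\dif}{\dif t}F(\theta_t,q_t) = \int \partial_t u_t\,\log\frac{u_t}{\rho_{\theta_t}}\,\dif x \;-\; \int u_t\,\partial_t\log\rho_{\theta_t}\,\dif x.$$

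For the second integral I would use the chain rule $\partial_t\log\rho_{\theta_t}(x) = \nabla_\theta\ell(\theta_t,x)\cdot\dot\theta_t$ and then the parameter equation in \eqref{eq:pde}, namely $\dot\theta_t = \int\nabla_\theta\ell(\theta_t,x)q_t(\dif x)$, to get
$$\int u_t\,\partial_t\log\rho_{\theta_t}\,\dif x = \dot\theta_t\cdot\int\nabla_\theta\ell(\theta_t,x)q_t(\dif x) = \norm{\int\nabla_\theta\ell(\theta_t,x)q_t(\dif x)}^2,$$
which is precisely the first term of $I(\theta_t,q_t)$; note this uses only the definition of $\dot\theta_t$, not Fisher's identity. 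For the first integral I would substitute the Fokker--Planck equation $\partial_t u_t = \nabla_x\cdot[u_t\nabla_x\log(u_t/\rho_{\theta_t})]$ --- valid because $\nabla_x\log(u_t/\pi_{\theta_t}) = \nabla_x\log(u_t/\rho_{\theta_t})$ since $Z_{\theta_t}$ does not depend on $x$ --- and integrate by parts in $x$ to get $-\int u_t\norm{\nabla_x\log(u_t/\rho_{\theta_t})}^2\dif x$, which is minus the second term of $I(\theta_t,q_t)$. Collecting the two contributions yields \eqref{eq:debruijn}.

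The main obstacle is the analytic justification of the two formal steps above: (i) exchanging $\frac{\dif}{\dif t}$ with the spatial integral in each summand of $F$ (and in $\int u_t\,\dif x$), and (ii) the vanishing of the boundary terms in the spatial integration by parts. Since Assumption~\ref{ass:solutions_existence} supplies only pointwise $\mathcal{C}^{1,2}$ regularity, I would proceed by localization: multiply by a smooth cutoff $\chi_R$ supported in $\{\norm{x}\le 2R\}$, carry out the computation on the truncated integrals where the Leibniz rule and integration by parts are unproblematic, and then send $R\to\infty$, using dominated convergence together with the finiteness of $F(\theta_t,q_t)$ and $I(\theta_t,q_t)$ for $t>0$ (the usual instantaneous-regularization effect of the heat-type flow) to control the remainder terms. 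This mirrors the classical proof of de Bruijn's identity (cf.~\cite[Section~5.2]{Bakry2014}), adapted to the coupled $(\theta,q)$ dynamics and to the $\theta$-dependent reference density $\rho_{\theta_t}$; the only genuinely new ingredient beyond the standard argument is the bookkeeping of the $\partial_t\log\rho_{\theta_t}$ term, which, as shown above, collapses to $\norm{\dot\theta_t}^2$.
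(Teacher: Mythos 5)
Your proposal is correct and follows essentially the same route as the paper's proof: differentiate $F$ under the integral sign, substitute the Fokker--Planck equation for $\partial_t q_t$, integrate by parts, and use the definition of $\dot\theta_t$ to identify the result with $-I(\theta_t,q_t)$; the only difference is cosmetic bookkeeping (you collect the $\partial_t u_t\log(u_t/\rho_{\theta_t})$ terms together, the paper keeps the entropy and energy pieces separate). If anything you are more explicit than the paper about the analytic justification (the proposed cutoff/localization argument for the interchange of derivative and integral and for the vanishing of boundary terms), which the paper dispatches with a one-line appeal to the $\mathcal{C}^{1,2}$ regularity from Assumption~\ref{ass:solutions_existence}.
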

	\noindent {\bf Proof}{.
		The regularity of $(\theta_t,q_t)_{t\geq0}$ allow us to exchange differentiation and integration by the Leibniz rule and compute
		\begin{align*}
			\frac{\dif}{\dif t}\int \log(q_t(x))q_t(\dif x) &=  \int (\log(q_t(x))+1)\nabla_x\cdot\bigg[q_t(x)\nabla_x \log\bigg(\frac{q_t(x)}{\rho_{\theta_t}(x)}\bigg)\bigg]\dif x \\
			&= -\int \iprod{\nabla_x \log(q_t(x))}{\nabla_x \log\bigg(\frac{q_t(x)}{\rho_{\theta_t}(x)}\bigg)} q_t(\dif x)
		\end{align*}
		where we integrated by parts, and
		\begin{align*}
			\frac{\dif}{\dif t}  \int \ell(\theta_t,x)q_t(\dif x) &= \int \iprod{\nabla_\theta \ell(\theta_t,x)}{\dot{\theta_t}}q_t(\dif x) + \int \ell(\theta_t,x)\nabla_x\cdot\bigg[q_t(x)\nabla_x \log\bigg(\frac{q_t(x)}{\rho_{\theta_t}(x)}\bigg)\bigg]\dif x \\
			&= \norm{\int \nabla_\theta \ell(\theta_t,x)q_t(\dif x)}^2 - \int \iprod{\nabla_x \log(\rho_{\theta_t}(x))}{\nabla_x \log\bigg(\frac{q_t(x)}{\rho_{\theta_t}(x)}\bigg)} q_t(\dif x).
		\end{align*}
		Upon re-arranging, this shows $\frac{\dif}{\dif t}F(\theta_t,q_t)=-I(\theta_t,q_t)$.
	} \hfill\BlackBox \\
	
	\subsection{Proof of Theorem~\protect{\ref{thm:exttalagrand}}}\label{app:exttalagrand}
	Here some arguments are similar to those in \citet{Otto2000}. Throughout this appendix, we assume that Assumptions \ref{ass:model}--\ref{ass:solutions_existence}  hold. 
	Let $(\theta,q)$ be any point in $\cal{M}_2$. As we show in Lemma~\ref{lemma:ddissi} below,
	\begin{equation} \label{eq:ddissi}
		\frac{\dif}{\dif t}  \mathsf{d}((\theta_t,q_t),( \theta, q )) \leq \sqrt{I(\theta_t,q_t)}\quad\forall t>0.
	\end{equation}
	By the theorem's premise, the measures $(\rho_\theta(\dif x))_{\theta\in\r^{d_\theta}}$ satisfy the xLSI or, in other words,
	\begin{equation*}
		\sqrt{I( \theta, q )} \leq \frac{I( \theta, q )}{\sqrt{2\lambda[F( \theta, q )-F_*]}}\quad \forall ( \theta, q )\in\Macd.
	\end{equation*}
	By the de Bruijn's identity~\eqref{eq:debruijn}, we find that
	\begin{equation*}
		\frac{I(\theta_t,q_t)}{2\sqrt{{F(\theta_t,q_t)-F_*}}}=-\frac{\dif}{\dif t} \sqrt{F(\theta_t,q_t)-F_*}\quad\forall t>0.
	\end{equation*}
	Combining the above three (in)equalities, we obtain
	\begin{align*}
		\frac{\dif}{\dif t}  \mathsf{d}((\theta_t,q_t),( \theta, q )) &\leq \sqrt{I(\theta_t,q_t)}  \leq  \frac{I(\theta_t,q_t)}{\sqrt{2\lambda[F(\theta_t,q_t)-F_*]}} =
		-\frac{\dif}{\dif t}\sqrt{\frac{2[F(\theta_{t},q_{t})-F_*]}{\lambda}}\quad\forall t>0.
	\end{align*}
	Integrating over $t$ in $(t,t')$ for $t'\geq t\geq0$, yields
	$$
	\mathsf{d}((\theta_{t'},q_{t'}),(\theta,q)) - \mathsf{d}((\theta_t,q_t),(\theta,q)) \leq \sqrt{\frac{2[F(\theta_{t},q_{t})-F_*]}{\lambda}}-\sqrt{\frac{2[F(\theta_{t'},q_{t'})-F_*]}{\lambda}}.
	$$
	If we set $(\theta,q):=(\theta_{t},q_{t})$, then
	\begin{equation} \label{eq:dinequalityforsubseq}\mathsf{d}((\theta_t,q_t),(\theta_{t'},q_{t'})) \leq \sqrt{\frac{2[F(\theta_{t},q_{t})-F_*]}{\lambda}}-\sqrt{\frac{2[F(\theta_{t'},q_{t'})-F_*]}{\lambda}}\quad\forall t'\geq t\geq0.\end{equation}
	As we show in Lemma~\ref{lemma:seqCauchy} below, there exists an increasing sequence $t_1<t_2<\dots$ approaching $\infty$ such that
	\begin{equation}\label{eq:dlim}(\theta_{t_n},q_{t_n})\to(\theta_*,q_*)\quad\text{as}\quad n\to\infty,
	\end{equation}
	for some $(\theta_*,q_*)$ belonging to the optimal set $\cal{M}_*$ in the topology induced by $\mathsf{d}$ on $\cal{M}_2$. Because $F(\theta_{t_n},q_{t_n})$ converges to $F_*$ as $n\to\infty$ (Theorem~\ref{thm:flowconvergence}) and 
	$$(\theta',q')\mapsto\mathsf{d}(( \theta, q ),( \theta', q' ))$$
	is a continuous function on $(\cal{M}_2,\mathsf{d})$ for any given $( \theta, q )$ in $\cal{M}_2$, setting $(t,t'):=(0,t_n)$ in~\eqref{eq:dinequalityforsubseq} and taking the limit $n\to\infty$, then implies that
	$$\mathsf{d}((\theta_0,q_0),\cal{M}_*)\leq  \sqrt{\frac{2[F(\theta_0,q_0)-F_*]}{\lambda}};$$
	since $(\theta_0,q_0)\in\cal{M}_2$ is arbitrary, the claim follows.

	\begin{lemma} \label{lemma:ddissi}\eqref{eq:ddissi} holds.
	\end{lemma}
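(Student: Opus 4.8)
The plan is to read \eqref{eq:ddissi} as the assertion that the gradient-flow curve $t\mapsto(\theta_t,q_t)$ moves in $(\cal{M}_2,\mathsf{d})$ with ``speed'' at most $\sqrt{I(\theta_t,q_t)}$, and then to combine this with the reverse triangle inequality $\mmag{\mathsf{d}((\theta_t,q_t),(\theta,q))-\mathsf{d}((\theta_s,q_s),(\theta,q))}\le\mathsf{d}((\theta_t,q_t),(\theta_s,q_s))$ valid for the fixed point $(\theta,q)$. So the real task is to bound the length of a short piece of the curve: I would show that, for $0<s<s'$,
\[
\mathsf{d}((\theta_s,q_s),(\theta_{s'},q_{s'}))^2\le(s'-s)\int_s^{s'}I(\theta_r,q_r)\,\dif r .
\]

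For the Euclidean coordinate this is immediate: $\theta_t\in\cal{C}^1$ by Assumption~\ref{ass:solutions_existence}, so $\mathsf{d}_E(\theta_s,\theta_{s'})^2=\norm{\int_s^{s'}\dot\theta_r\,\dif r}^2\le(s'-s)\int_s^{s'}\norm{\dot\theta_r}^2\,\dif r$ by Cauchy--Schwarz, and $\norm{\dot\theta_r}^2=\norm{\int\nabla_\theta\ell(\theta_r,x)q_r(\dif x)}^2$ is precisely the first term of $I(\theta_r,q_r)$. For the latent coordinate I would write the Fokker--Planck equation in \eqref{eq:pde} as the continuity equation $\partial_rq_r+\nabla_x\cdot(q_rv_r)=0$ driven by the velocity field $v_r:=\nabla_x\log(\rho_{\theta_r}/q_r)$, whose squared $L^2(q_r)$-norm $\int\norm{\nabla_x\log(q_r/\rho_{\theta_r})}^2q_r(\dif x)$ is precisely the second term of $I(\theta_r,q_r)$. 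Letting $\Phi_{r,s}$ denote the (non-autonomous) flow of $v_\cdot$ started at time $s$, one has $(\Phi_{r,s})_\# q_s=q_r$, so $(\mathrm{id},\Phi_{s',s})_\# q_s$ is a coupling of $q_s$ and $q_{s'}$, whence
\begin{align*}
\mathsf{d}_{W_2}(q_s,q_{s'})^2 &\le\int\norm{\Phi_{s',s}(x)-x}^2\,q_s(\dif x)\\
&\le(s'-s)\int_s^{s'}\int\norm{v_r(\Phi_{r,s}(x))}^2\,q_s(\dif x)\,\dif r\\
&=(s'-s)\int_s^{s'}\norm{v_r}_{L^2(q_r)}^2\,\dif r ,
\end{align*}
where the middle step is Cauchy--Schwarz in $r$ applied to $\Phi_{s',s}(x)-x=\int_s^{s'}v_r(\Phi_{r,s}(x))\,\dif r$ and the last is the change of variables $(\Phi_{r,s})_\# q_s=q_r$. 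Adding the two coordinates gives the displayed bound, since the two right-hand sides sum to $(s'-s)\int_s^{s'}I(\theta_r,q_r)\,\dif r$. Dividing by $s'-s=h$, sending $h\downarrow0$, and using that $r\mapsto I(\theta_r,q_r)$ is continuous along the flow (which follows from the $\cal{C}^{1,2}$-regularity in Assumption~\ref{ass:solutions_existence} together with de Bruijn's identity \eqref{eq:debruijn}), I obtain $\limsup_{h\downarrow0}h^{-1}\mathsf{d}((\theta_t,q_t),(\theta_{t+h},q_{t+h}))\le\sqrt{I(\theta_t,q_t)}$ for every $t>0$, and likewise for $h\uparrow0$; combining with the reverse triangle inequality yields \eqref{eq:ddissi}. (Only the integrated form, obtained before passing to the limit, is actually used in the remainder of the proof of Theorem~\ref{thm:exttalagrand}.)

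\textbf{The main obstacle} I anticipate is making the transport-by-$v_\cdot$ step fully rigorous. Two points need care: (i) well-posedness of the characteristics $\dot\Phi_{r,s}=v_r(\Phi_{r,s})$ and the identity $(\Phi_{r,s})_\# q_s=q_r$; here $v_r(x)=\nabla_x\log\rho_{\theta_r}(x)-\nabla_x\log q_r(x)$ is locally Lipschitz in $x$ by Assumptions~\ref{ass:model}(ii,iii) and the $\cal{C}^2$-regularity of $q_r$, and global solvability follows from a Gr\"onwall estimate on $\norm{\Phi_{r,s}(x)}$ using the moment bounds on $q_r$; alternatively one can bypass the characteristics altogether by quoting the standard fact that a solution of the continuity equation with $\int\norm{v_r}_{L^2(q_r)}\,\dif r$ locally finite is an absolutely continuous curve in $(\cal{P}_2(\r^{d_x}),\mathsf{d}_{W_2})$ with metric derivative at most $\norm{v_r}_{L^2(q_r)}$ a.e.; and (ii) local integrability of $r\mapsto I(\theta_r,q_r)$, which holds because $\int_s^{s'}I(\theta_r,q_r)\,\dif r=F(\theta_s,q_s)-F(\theta_{s'},q_{s'})$ by \eqref{eq:debruijn}, and the right-hand side is finite for $s>0$ (where $q_s$ has a smooth positive density) since $F$ is non-increasing along the flow.
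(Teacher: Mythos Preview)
Your proposal is correct and essentially establishes the same inequality, but by a genuinely different route than the paper. The paper differentiates the squared distance to the \emph{fixed} point $(\theta,q)$ directly: it invokes \cite[Theorem~8.4.7]{ambrosio2005} to write
\[
\frac{\dif}{\dif t}\mathsf{d}_{W_2}(q_t,q)^2 = 2\int \iprod{v_t(x_1)}{x_1-x_2}\,\dif\varrho(x_1,x_2)
\]
with $\varrho$ an optimal plan for $(q_t,q)$, then applies Cauchy--Schwarz in $L^2(\varrho)$ to bound this by $2\mathsf{d}_{W_2}(q_t,q)\,\|v_t\|_{L^2(q_t)}$; the Euclidean term is handled analogously, and one more Cauchy--Schwarz in $\r^2$ combines the two into $(\dif/\dif t)\,\mathsf{d}((\theta_t,q_t),(\theta,q))\le\sqrt{I(\theta_t,q_t)}$. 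By contrast, you bound the \emph{metric derivative of the curve itself}, $\limsup_{h\to0}h^{-1}\mathsf{d}((\theta_t,q_t),(\theta_{t+h},q_{t+h}))$, via a transport coupling (or, as you note, via the absolutely-continuous-curve characterization of continuity-equation solutions), and then feed this through the reverse triangle inequality.

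Both arguments rest on comparable machinery from \cite{ambrosio2005} (Theorem~8.4.7 versus Theorem~8.3.1), so neither is markedly more elementary. The paper's route is shorter and avoids the characteristics/regularity discussion you flag as the main obstacle; your route has the advantage of yielding the integrated estimate $\mathsf{d}((\theta_s,q_s),(\theta_{s'},q_{s'}))\le\int_s^{s'}\sqrt{I(\theta_r,q_r)}\,\dif r$ directly, which, as you correctly observe, is all that is actually needed downstream in the proof of Theorem~\ref{thm:exttalagrand}. Your option~(ii)---quoting the standard result that a continuity-equation solution with locally $L^1$ speed is an absolutely continuous curve with metric derivative at most $\|v_r\|_{L^2(q_r)}$---is the cleanest way to close the gap you identify and sidesteps the ODE well-posedness issue entirely.
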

	\noindent {\bf Proof}{.
		Let $(\theta,q)\in\cal{M}_2$, take $(\theta_t,q_t)_{t\geq 0}$ as per Assumption \ref{ass:solutions_existence}  and define the velocity field $v_t(x):=\nabla_x \log(q_t(x)/\rho_{\theta_t}(x))$. \citet[Theorem 8.4.7]{ambrosio2005} shows that 
		\begin{equation} \label{eq:W2derivative}
			\frac{\dif }{\dif t}\mathsf{d}_{W_2}(q_t,q)^2 = 2\int \iprod{v_t(x_1)}{x_1-x_2}\dif \varrho(x_1,x_2)
		\end{equation}
		where $\varrho$ it an optimal transport plan for $(q_t,q)$.  
		By an application of the Cauchy--Schwarz inequality and the definition of the Wasserstein-2 distance we get
		\begin{align*}
			\frac{\dif }{\dif t}\mathsf{d}_{W_2}(q_t,q)^2  &\leq 2\sqrt{\int \norm{x_1-x_2}^2 \dif \varrho(x_1,x_2)\int \norm{v_t(x)}^2 q_t(\dif x)} \\
            &= 2\mathsf{d}_{W_2}(q_t,q)\sqrt{\int \norm{v_t(x)}^2 q_t(\dif x)}
		\end{align*}
        (this inequality may also be proved directly using the Benamou--Brenier's formula).
        On the other hand, $\frac{\dif}{\dif t}\norm{\theta_t-\theta}^2 = 2\iprod{\dot{\theta_t}}{\theta_t-\theta} \leq 2\lVert\dot{\theta_t}\rVert\norm{\theta_t-\theta}$, and by these estimates, and then again by Cauchy--Schwarz, 
		\begin{align*}
			\frac{\dif }{\dif t}\mathsf{d}((\theta_t,q_t),( \theta, q )) = \frac{\frac{\dif}{\dif t}\mathsf{d}((\theta_t,q_t),( \theta, q ))^2}{2\mathsf{d}((\theta_t,q_t),( \theta, q ))} \leq \sqrt{\lVert\dot{\theta_t}\rVert^2 + \int \norm{v_t(x)}^2 q_t(\dif x)} = \sqrt{I(\theta_t,q_t)}.
		\end{align*}
	} \hfill\BlackBox \\
	
	\begin{lemma} \label{lemma:seqCauchy}For any increasing sequence $t_1<t_2<\dots$ approaching $\infty$, \eqref{eq:dlim} holds for some $(\theta_*,q_*)$ in
		$\cal{M}_*$.
	\end{lemma}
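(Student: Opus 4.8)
The plan is to strengthen inequality~\eqref{eq:dinequalityforsubseq}---which is available under the hypotheses of Theorem~\ref{thm:exttalagrand}---into a genuine Cauchy estimate for the trajectory as $t\to\infty$, and then to invoke completeness. First I would record that, by Lemma~\ref{lemma:debruijin}, $t\mapsto F(\theta_t,q_t)$ has derivative $-I(\theta_t,q_t)\le 0$ and is hence non-increasing, while by Theorem~\ref{thm:flowconvergence} it converges to $F_*$; consequently the function $g(t):=\sqrt{2[F(\theta_t,q_t)-F_*]/\lambda}$ is non-increasing with $g(t)\downarrow 0$ as $t\to\infty$. (Monotonicity of $g$ is in fact already forced by \eqref{eq:dinequalityforsubseq}, whose left-hand side is non-negative.)

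Next, fix any increasing sequence $t_1<t_2<\cdots$ with $t_n\to\infty$. Applying \eqref{eq:dinequalityforsubseq} with $(t,t'):=(t_n,t_m)$ for $m\ge n$ gives
\[
\mathsf{d}\big((\theta_{t_n},q_{t_n}),(\theta_{t_m},q_{t_m})\big)\le g(t_n)-g(t_m)\le g(t_n),
\]
and since $g(t_n)\to 0$ the sequence $((\theta_{t_n},q_{t_n}))_{n}$ is Cauchy in $(\cal{M}_2,\mathsf{d})$. The space $(\cal{P}_2(\r^{d_x}),\mathsf{d}_{W_2})$ is complete (see, e.g., \cite[Chapter~6]{Villani2009}), and $\mathsf{d}$ on $\cal{M}_2=\r^{d_\theta}\times\cal{P}_2(\r^{d_x})$ is bi-Lipschitz equivalent to the product metric $\mathsf{d}_E+\mathsf{d}_{W_2}$, so $(\cal{M}_2,\mathsf{d})$ is complete; hence $(\theta_{t_n},q_{t_n})$ converges to some $(\theta_*,q_*)\in\cal{M}_2$. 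Note that no subsequence extraction is needed, and the same estimate shows the limit does not depend on the chosen sequence.

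It then remains to verify $(\theta_*,q_*)\in\cal{M}_*$. For this I would use the lower semicontinuity of $F$ on $(\cal{M}_2,\mathsf{d})$ (Lemma~\ref{lemma:Flsc}): together with $F(\theta_{t_n},q_{t_n})\to F_*$ from Theorem~\ref{thm:flowconvergence}, it yields $F(\theta_*,q_*)\le\liminf_{n}F(\theta_{t_n},q_{t_n})=F_*$, and since $F\ge F_*$ on $\cal{M}_2$ we conclude $F(\theta_*,q_*)=F_*$, i.e.\ $(\theta_*,q_*)\in\cal{M}_*$. The only step carrying any real subtlety is the completeness of $(\cal{M}_2,\mathsf{d})$; once that is in hand, the argument is a direct assembly of \eqref{eq:dinequalityforsubseq}, Theorem~\ref{thm:flowconvergence}, and Lemma~\ref{lemma:Flsc}.
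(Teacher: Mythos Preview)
Your proposal is correct and follows essentially the same route as the paper: use \eqref{eq:dinequalityforsubseq} together with Theorem~\ref{thm:flowconvergence} to obtain the Cauchy property, appeal to completeness of $(\cal{M}_2,\mathsf{d})$ (via completeness of the two factors), and then combine the lower semicontinuity of $F$ (Lemma~\ref{lemma:Flsc}) with $F(\theta_{t_n},q_{t_n})\to F_*$ to place the limit in $\cal{M}_*$. Your additional remarks (monotonicity of $g$, independence of the limit from the chosen sequence) are correct but not needed for the lemma as stated.
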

	\noindent {\bf Proof}{.
		Inequality~\eqref{eq:dinequalityforsubseq} tells us that the sequence is Cauchy in $(\Macd,\mathsf{d})$ and, consequently, in  $(\cal{M}_2:=\r^{d_\theta}\times\cal{P}_2(\r^{d_x}),\mathsf{d})$. Because both $(\r^{d_\theta},\mathsf{d}_E)$ and $(\cal{P}_2(\r^{d_x}),\mathsf{d}_{W_2})$ are complete metric spaces \citep[Theorem 6.18]{Villani2009}, so is $(\cal{M}_2,\mathsf{d})$. Hence, $(\theta_{t_n},q_{t_n})_{n\in\n}$ converges to a limit $(\theta_\infty,q_\infty)$ in $\mathcal{M}_2$. As we show in Lemma~\ref{lemma:Flsc} below, the free energy $F$ is lower semicontinuous on $(\cal{M}_2,\mathsf{d})$. Consequently, Theorem~\ref{thm:flowconvergence} implies that
		$$F(\theta_\infty,q_\infty)\leq \liminf_{n\to\infty} F(\theta_{t_n},q_{t_n})=F_*,$$
		where the equality holds because $(\theta_{t_n},q_{t_n})_{n\in\n}\subseteq\Macd$. Since it also holds $F(\theta_\infty,q_\infty)\geq F_*$, the claim follows.
	} \hfill\BlackBox \\

	\begin{lemma} \label{lemma:Flsc} $F$ is lower semicontinuous on $(\cal{M}_2,\mathsf{d})$.
	\end{lemma}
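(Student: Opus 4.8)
The plan is to use the decomposition $F(\theta,q)=\mathrm{KL}(q\|\pi_\theta)-\log Z_\theta$ supplied by~\eqref{eq:straightforwardcalcs1}, and to argue separately that $\theta\mapsto-\log Z_\theta$ is continuous and that $(\theta,q)\mapsto\mathrm{KL}(q\|\pi_\theta)$ is lower semicontinuous on $(\cal{M}_2,\mathsf{d})$; their sum is then lower semicontinuous. The first point is routine: $Z_\theta=\int\rho_\theta(x)\,\dif x$ is the normalising constant of the probability density $\pi_\theta$, hence lies in $(0,\infty)$, and it is differentiable in $\theta$ by Assumption~\ref{ass:model}(i), hence continuous, so $\theta\mapsto\log Z_\theta$ is continuous on $\r^{d_\theta}$.

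For the relative-entropy term I would invoke the Donsker--Varadhan variational formula, valid for arbitrary probability measures (and in particular not requiring $q\ll\dif x$): for every $\theta$ and $q\in\cal{P}(\r^{d_x})$,
\begin{equation*}
\mathrm{KL}(q\|\pi_\theta)=\sup_{\phi\in\cal{C}_b(\r^{d_x},\r)}G_\phi(\theta,q),\qquad G_\phi(\theta,q):=\int\phi\,\dif q-\log\int e^{\phi(x)}\pi_\theta(x)\,\dif x.
\end{equation*}
Since a pointwise supremum of a family of continuous functions is lower semicontinuous, it suffices to prove that each $G_\phi$ is continuous on $(\cal{M}_2,\mathsf{d})$; then $(\theta,q)\mapsto\mathrm{KL}(q\|\pi_\theta)$ is lower semicontinuous, and adding the continuous term $-\log Z_\theta$ gives the claim.

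To check continuity of $G_\phi$, let $(\theta_n,q_n)\to(\theta,q)$ in $\mathsf{d}$, i.e.\ $\theta_n\to\theta$ in $\r^{d_\theta}$ and $\mathsf{d}_{W_2}(q_n,q)\to0$. Convergence in $\mathsf{d}_{W_2}$ implies weak convergence (\cite[Theorem~6.9]{Villani2009}), so $\int\phi\,\dif q_n\to\int\phi\,\dif q$ because $\phi$ is bounded and continuous. For the remaining term, Assumption~\ref{ass:model}(i) ensures $\pi_{\theta_n}(x)\to\pi_\theta(x)$ for every $x$; since the $\pi_{\theta_n}$ and $\pi_\theta$ are all probability densities on $\r^{d_x}$, Scheff\'e's lemma gives $\pi_{\theta_n}\to\pi_\theta$ in $L^1(\dif x)$, and hence $\int e^\phi\pi_{\theta_n}\,\dif x\to\int e^\phi\pi_\theta\,\dif x$ as $e^\phi$ is bounded; moreover $e^\phi\geq e^{-\norm{\phi}_\infty}>0$, so this limit is strictly positive and $\log$ is continuous at it. Therefore $G_\phi(\theta_n,q_n)\to G_\phi(\theta,q)$, as needed.

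The main obstacle is the \emph{joint} dependence of $\mathrm{KL}(q\|\pi_\theta)$ on $\theta$ and $q$, compounded by the fact that a general $q\in\cal{M}_2$ need not be absolutely continuous, which rules out arguing directly with densities of $q$. The variational formula dissolves both difficulties at once: it writes the relative entropy as a supremum of functionals that factor into a $q$-part, continuous for weak (hence $\mathsf{d}_{W_2}$) convergence, and a $\theta$-part, continuous by Scheff\'e's lemma --- leaving only these two well-known continuity statements to verify. (If one prefers to avoid the $C_b$ form of the Donsker--Varadhan identity, the same conclusion can be reached by first establishing lower semicontinuity of $(\theta,q)\mapsto\mathrm{KL}(q\|\pi_\theta)$ with respect to $\theta_n\to\theta$ together with weak convergence $q_n\to q$ --- which is weaker than $\mathsf{d}$-convergence --- using the classical weak lower semicontinuity of relative entropy together with the $L^1$-continuity of $\theta\mapsto\pi_\theta$ just established.)
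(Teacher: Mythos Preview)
Your proof is correct and follows essentially the same approach as the paper's: decompose $F(\theta,q)=\mathrm{KL}(q\|\pi_\theta)-\log Z_\theta$, handle $-\log Z_\theta$ by continuity from Assumption~\ref{ass:model}(i), write the KL term as a supremum over $\phi\in\cal{C}_b$ of functionals that are jointly continuous in $(\theta,q)$, and verify that continuity using weak convergence (implied by $\mathsf{d}_{W_2}$) for the $q$-part and Scheff\'e's lemma for the $\theta$-part. The only cosmetic difference is the choice of dual representation: the paper uses the form $\sup_\phi\{\int\phi\,\dif q-\int(e^\phi-1)\,\dif\pi_\theta\}$ from \cite[Lemma~9.4.4]{ambrosio2005}, whereas you use the Donsker--Varadhan form $\sup_\phi\{\int\phi\,\dif q-\log\int e^\phi\,\dif\pi_\theta\}$; both are valid and the remainder of the argument is identical.
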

	\noindent {\bf Proof}{.
		Given any $\pi$ in $\cal{P}_2(\r^{d_x})$, the duality formula for the $\mathrm{KL}$ divergence (e.g., \citealp[Lemma 9.4.4]{ambrosio2005}) reads
		$$\mathrm{KL}(q||\pi)=\sup \left\{\int\phi(x)q(\dif x) - \int (e^{\phi(x)}-1)\pi(\dif x):\phi\in \cal{C}_b(\r^{d_x},\r)\right\},$$
		where $\cal{C}_b(\r^{d_x},\r)$ denotes the set of real valued continuous bounded functions with domain $\r^{d_x}$.
		Consequently,
		\begin{align*}
			F( \theta, q )=\mathrm{KL}(q||\pi_\theta) -\log(Z_\theta)=\sup \{G(\theta,q,\phi) :\phi\in \cal{C}_b(\r^{d_x},\r)\}-\log(Z_\theta),
		\end{align*}
		where $G(\theta,q,\phi):=\int\phi(x)q(\dif x) - \int (e^{\phi(x)}-1)\pi_\theta(\dif x)$.  Because Assumption~\ref{ass:model}(i) implies that $\theta\mapsto\log(Z_\theta)$ is continuous and the pointwise supremum of a family of lower semicontinuous (l.s.c.) functions is l.s.c., we need only show that $( \theta, q )\mapsto G(\theta,q,\phi)$ is l.s.c. for each $\phi$ in  $\cal{C}_b(\r^{d_x},\r)$. 
		Since  the Wasserstein-2 topology is finer than the weak topology, it suffices to show that $( \theta, q )\mapsto G(\theta,q,\phi)$ is continuous in the weak topology for any given $\phi$ in  $\cal{C}_b(\r^{d_x},\r)$. Consider a sequence $(\theta_n,q_n)_{n\in\n}$ such that $\theta_n\rightarrow \theta$ and $q_n\rightarrow q$ weakly. By definition of weak convergence, $\int \phi(x)q_n(\dif x)\rightarrow \int\phi(x)q(\dif x)$. Furthermore,  since $x\mapsto(e^{\phi(x)}-1)\in\cal{C}_b(\r^{d_x},\r)$, we have $|\int (e^{\phi(x)}-1)\pi_{\theta_n}(\dif x) - \int (e^{\phi(x)}-1)\pi_{\theta}(\dif x) | \leq c_\phi\int|\pi_{\theta_n}(x) - \pi_{\theta}(x)|\dif x$ for some $c_\phi<\infty$. Since $\theta \mapsto \pi_\theta$ is continuous by Assumption \ref{ass:model}(i), an application of Scheff{\'e}'s lemma concludes the proof.
	} \hfill\BlackBox \\
	\subsection{Proof of Theorem~\protect{\ref{thm:extbakryemery}}}\label{app:extbakryemery}
	We preface the proof of Theorem \ref{thm:extbakryemery} with three auxiliary results which describe variations of the free energy along geodesics in $(\cal{M}_2,\mathsf{d})$. 
	\begin{definition} \label{def:geodesics}
		A curve $\gamma(t):t\in[0,1]\mapsto \mathcal{M}_2$ is a (constant speed) geodesic if
		\begin{equation*}
			\mathsf{d}(\gamma(s),\gamma(t))=(t-s)\mathsf{d}(\gamma(0),\gamma(1)), \quad \forall 0\leq s\leq t\leq 1.
		\end{equation*}
	\end{definition}
	For a probability measure $\mu\in\mathcal{P}(\r^{d_x})$ and a measurable map $T$ with domain $\r^{d_x}$, $T_{\#}\mu = \mu \circ T^{-1}$ indicates the pushforward of $\mu$ by $T$. 
	\begin{lemma} \label{lemma:geodesicsinM}
		A curve $\gamma(t):t\in[0,1]\mapsto \mathcal{M}_2$ is a geodesic if and only if $\gamma(t)=(\gamma_{\theta}(t),\gamma_{q}(t))$, where $\gamma_{\theta}$ is a geodesic in $(\r^{d_\theta},\norm{\cdot})$ and $\gamma_{q}$ is a geodesic in $(\cal{P}_2(\r^{d_x}),\mathsf{d}_{W_2})$. In particular, if $\gamma(t)$ is a geodesic in $(\mathcal{M}_2,\mathsf{d})$ connecting $( \theta, q )$ and $(\theta',q')$ then $\gamma(t)=(\gamma_{\theta}(t)=(1-t)\theta+t\theta',\gamma_{q}(t)=(h_t)_{\#}\varrho)$ where $\varrho$ is a $Wasserstein-2$ optimal transport plan for $q,q'$ and $h_t(x,x')=(1-t)x+tx'$. Furthermore, if $q$ has a density w.r.t. the Lebesgue measure, we can also write $\gamma_{q}(t)=((1-t)\textrm{id}+t\nabla_x \Phi)_\# q$ for some convex function $\Phi$.
	\end{lemma}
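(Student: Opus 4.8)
The plan is to exploit that, by its definition in~\eqref{eq:ddeterministic}, $\mathsf{d}$ is the $\ell^2$ (``Pythagorean'') combination $\mathsf{d}(( \theta, q ),(\theta',q'))^2=\mathsf{d}_E(\theta,\theta')^2+\mathsf{d}_{W_2}(q,q')^2$ of the Euclidean metric on $\r^{d_\theta}$ and the Wasserstein-2 metric on $\cal{P}_2(\r^{d_x})$; this structure forces any geodesic in the product to split as a pair of geodesics in the two factors. One direction is immediate: if $\gamma_\theta$, $\gamma_q$ are constant-speed geodesics joining $\theta,\theta'$ and $q,q'$ respectively, then, writing $a:=\mathsf{d}_E(\theta,\theta')$ and $b:=\mathsf{d}_{W_2}(q,q')$, for $0\leq s\leq t\leq1$ one has $\mathsf{d}(\gamma(s),\gamma(t))^2=(t-s)^2(a^2+b^2)=(t-s)^2\mathsf{d}(\gamma(0),\gamma(1))^2$, so $\gamma=(\gamma_\theta,\gamma_q)$ is a constant-speed geodesic. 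Once the converse is established, the explicit forms follow from the facts that $(\r^{d_\theta},\norm{\cdot})$ is uniquely geodesic and that constant-speed $\mathsf{d}_{W_2}$-geodesics are exactly McCann's displacement interpolations.

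For the converse, let $\gamma=(\gamma_\theta,\gamma_q)$ be a geodesic joining $( \theta, q )$ and $(\theta',q')$, and put $L:=\mathsf{d}(\gamma(0),\gamma(1))$, $L_\theta:=\mathsf{d}_E(\theta,\theta')$, $L_q:=\mathsf{d}_{W_2}(q,q')$, so $L^2=L_\theta^2+L_q^2$. Fix $t\in[0,1]$ and write $a(t):=\mathsf{d}_E(\theta,\gamma_\theta(t))$, $a'(t):=\mathsf{d}_E(\gamma_\theta(t),\theta')$, and $b(t),b'(t)$ analogously for $\gamma_q$. The geodesic identity gives $a(t)^2+b(t)^2=t^2L^2$ and $a'(t)^2+b'(t)^2=(1-t)^2L^2$, while the triangle inequality in each factor gives $a(t)+a'(t)\geq L_\theta$ and $b(t)+b'(t)\geq L_q$. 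Hence the planar vectors $(a(t),b(t))$ and $(a'(t),b'(t))$ have Euclidean norms $tL$ and $(1-t)L$, and, their coordinates being nonnegative, the norm of their sum is at least $\sqrt{L_\theta^2+L_q^2}=L=tL+(1-t)L$; since the norm of a sum never exceeds the sum of the norms, equality holds throughout, and the equality case of Minkowski's inequality forces the two vectors to be nonnegatively proportional with $a(t)+a'(t)=L_\theta$ and $b(t)+b'(t)=L_q$. A short computation then gives $a(t)=tL_\theta$, $b(t)=tL_q$, $a'(t)=(1-t)L_\theta$, $b'(t)=(1-t)L_q$. With these endpoint distances known, for general $0\leq s\leq t\leq1$ the reverse triangle inequality gives $\mathsf{d}_E(\gamma_\theta(s),\gamma_\theta(t))\geq(t-s)L_\theta$ and $\mathsf{d}_{W_2}(\gamma_q(s),\gamma_q(t))\geq(t-s)L_q$, whereas their squares sum to $\mathsf{d}(\gamma(s),\gamma(t))^2=(t-s)^2L^2=(t-s)^2(L_\theta^2+L_q^2)$; hence both are equalities, and $\gamma_\theta$, $\gamma_q$ are constant-speed geodesics in their factors.

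It then remains to identify these factor geodesics. Since $(\r^{d_\theta},\norm{\cdot})$ is uniquely geodesic, $\gamma_\theta(t)=(1-t)\theta+t\theta'$. For $\gamma_q$, the characterization of constant-speed geodesics in $(\cal{P}_2(\r^{d_x}),\mathsf{d}_{W_2})$ by displacement interpolation (\cite[Theorem~7.2.2]{ambrosio2005}; see also \cite[Chapter~7]{Villani2009}) gives $\gamma_q(t)=(h_t)_{\#}\varrho$ for some quadratic-cost optimal plan $\varrho$ of $(q,q')$, with $h_t(x,x')=(1-t)x+tx'$. When $q\ll\dif x$, Brenier's theorem (\cite[Theorem~6.2.4]{ambrosio2005}) makes the optimal plan unique and of the form $\varrho=(\textrm{id},\nabla_x\Phi)_{\#}q$ for some convex $\Phi$ with $(\nabla_x\Phi)_{\#}q=q'$, whence $\gamma_q(t)=((1-t)\textrm{id}+t\nabla_x\Phi)_{\#}q$.

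I do not anticipate a genuine obstacle; the one point requiring care is the splitting step, where the equality case of Minkowski's inequality must be used in the right order---first at the triple $(0,t,1)$ to pin down $a(t),b(t),a'(t),b'(t)$, and only then, via the reverse triangle inequality, at general pairs $s\leq t$---rather than merely asserting that each coordinate ``must'' be a geodesic. The remaining content is routine bookkeeping together with the cited displacement-interpolation and Brenier theorems.
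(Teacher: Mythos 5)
Your proof is correct and follows essentially the same route as the paper's: split the geodesic in the $\ell^2$-product metric into factor geodesics, then identify the Euclidean factor as linear interpolation and the Wasserstein factor via displacement interpolation and Brenier's theorem. The only difference is one of completeness: where the paper outsources the product-splitting step to \cite[Lemma~3.6.4]{Burago2001} (calling it "almost immediate from the definition"), you actually carry out the Minkowski-equality argument that lemma encapsulates, and your care in first pinning down the distances at the triple $(0,t,1)$ before invoking the reverse triangle inequality at general $(s,t)$ is precisely the bookkeeping needed to make that lemma rigorous.
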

	\noindent {\bf Proof}{. 
		The first claim is almost immediate from the definition, also see \citet[Lemma 3.6.4]{Burago2001}. The second claim follows from the first using the characterization of geodesics in $(\cal{P}_2(\r^{d_x}),\mathsf{d}_{W_2})$---see e.g. \citet[Theorem 5.27]{santambrogio2015}. The last claim is Brenier's Theorem \citep[Theorem 9.4]{Villani2009}.
	} \hfill\BlackBox \\
	
	The following lemma uses similar arguments to \citet[Lemma 9]{Cheng2018}, and it connects the strong concavity of the log-likelihood function with strong geodesic convexity of $F$.
	\begin{lemma} \label{lemma:Fgeodesicconv}
		If Assumption \ref{ass:strongconcave} holds, $F$ is $\lambda$-strongly geodesically convex i.e. for every $( \theta, q ),(\theta',q')$ there is constant speed geodesic $\gamma:[0,1]\to\cal{M}_2$ connecting $\gamma(0)=( \theta, q )$ and $\gamma(1)=(\theta',q')$ such that $t\mapsto F(\gamma(t))$ is $\lambda$-strongly convex:
		\begin{equation} \label{eq:Fgeodesicconv}
			F(\gamma(t))\leq (1-t)F( \theta, q )+tF(\theta',q')-\frac{\lambda t(1-t)}{2}\mathsf{d}(( \theta, q ),(\theta',q'))^2.
		\end{equation}
	\end{lemma}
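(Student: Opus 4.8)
The plan is to split $F$ into an \emph{entropy} part and a \emph{potential} part and track each along the geodesic $\gamma$ supplied by Lemma~\ref{lemma:geodesicsinM}: the entropy part is merely convex along Wasserstein geodesics (McCann's displacement-convexity theorem), while the potential part inherits the $\lambda$-strong convexity almost directly from Assumption~\ref{ass:strongconcave}.

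First I would dispose of degenerate cases. Since $F\geq F_*=-\log Z_*>-\infty$ by~\eqref{eq:straightforwardcalcs2}, if $F(\theta,q)=+\infty$ or $F(\theta',q')=+\infty$ the right-hand side of~\eqref{eq:Fgeodesicconv} is $+\infty$ and the claim is vacuous; so I may assume both endpoints have finite free energy, which in particular forces $q,q'\ll\dif x$. Writing $\ell(\theta,x)=\log\rho_\theta(x)$, I then have $F(\theta,q)=\cal{H}(q)-\int\ell(\theta,x)\,q(\dif x)$, where $\cal{H}(q):=\int\log q\,\dif q$ denotes the entropy functional, and likewise along $\gamma$. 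By Lemma~\ref{lemma:geodesicsinM}, the geodesic joining the endpoints is $\gamma(t)=(\gamma_\theta(t),\gamma_q(t))$ with $\gamma_\theta(t)=(1-t)\theta+t\theta'$ and $\gamma_q(t)=(h_t)_{\#}\varrho$, where $\varrho$ is the (unique, Brenier) optimal transport plan for $(q,q')$ and $h_t(x,x')=(1-t)x+tx'$.

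For the entropy part I would invoke McCann's theorem that $q\mapsto\cal{H}(q)$ is displacement convex on $\cal{P}_2(\r^{d_x})$ (see, e.g., \cite[Chapter~17]{Villani2009}), which gives $\cal{H}(\gamma_q(t))\leq(1-t)\cal{H}(q)+t\cal{H}(q')$ (and, en route, that $\gamma_q(t)$ is absolutely continuous with finite second moment, so the left-hand side is finite). For the potential part, the key observation is that $\gamma$ interpolates the \emph{joint} variable affinely, so that by the pushforward formula
\[
\int\ell(\gamma_\theta(t),x)\,\gamma_q(t)(\dif x)=\int\ell\bigl((1-t)(\theta,x)+t(\theta',x')\bigr)\,\varrho(\dif x,\dif x'),
\]
and Assumption~\ref{ass:strongconcave} applies verbatim to the integrand pointwise in $(x,x')$. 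Integrating the resulting inequality against $\varrho$, using that $\varrho$ has marginals $q$ and $q'$ and is optimal, so that $\int\norm{(\theta,x)-(\theta',x')}^2\,\varrho(\dif x,\dif x')=\norm{\theta-\theta'}^2+\mathsf{d}_{W_2}(q,q')^2=\mathsf{d}((\theta,q),(\theta',q'))^2$, gives
\[
-\!\int\ell(\gamma_\theta(t),x)\,\gamma_q(t)(\dif x)\leq-(1-t)\!\int\ell(\theta,x)\,q(\dif x)-t\!\int\ell(\theta',x')\,q'(\dif x')-\tfrac{\lambda t(1-t)}{2}\mathsf{d}((\theta,q),(\theta',q'))^2.
\]
Adding this to the entropy bound and regrouping $\cal{H}(q)-\int\ell(\theta,\cdot)\,\dif q=F(\theta,q)$ and $\cal{H}(q')-\int\ell(\theta',\cdot)\,\dif q'=F(\theta',q')$ yields~\eqref{eq:Fgeodesicconv}.

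I expect the potential part to be essentially formal once the joint affine interpolation is spotted; the main obstacle is the entropy part, which rests on the classical but non-trivial displacement-convexity of $\cal{H}$ and on checking that $\gamma_q(t)$ remains absolutely continuous with finite second moment, so that $\cal{H}(\gamma_q(t))$ is well-defined and the bound is non-vacuous. The remaining work is routine bookkeeping: the reduction to finite free energy and the attendant well-definedness and finiteness of $\int\log q\,\dif q$ and $\int\ell(\theta,\cdot)\,\dif q$ (which, under Assumption~\ref{ass:strongconcave}, follow from the strong log-concavity of $\rho_\theta$ and $q\in\cal{P}_2(\r^{d_x})$).
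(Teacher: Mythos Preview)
Your proof is correct and follows essentially the same route as the paper's: both split $F$ into the entropy term $\int\log q\,\dif q$ (handled via displacement convexity, citing McCann/Villani versus \cite[Theorem~7.28]{santambrogio2015}) and the potential term $-\int\ell(\theta,x)\,q(\dif x)$ (handled by integrating Assumption~\ref{ass:strongconcave} against the optimal plan $\varrho$ using the representation of Lemma~\ref{lemma:geodesicsinM}). Your additional care with the degenerate case $F=+\infty$ and with finiteness/absolute continuity along the geodesic is a welcome refinement the paper leaves implicit.
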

	\noindent {\bf Proof}{.
		Let $\gamma=(\gamma_\theta,\gamma_q)$ be a geodesic in $\mathcal{M}_2$ connecting $( \theta, q )$ and $(\theta',q')$, and recall that by Lemma \ref{lemma:geodesicsinM} $\gamma_\theta$ and $\gamma_q$ are geodesics in $(\r^{d_\theta},\norm{\cdot})$ and $(\cal{P}_2(\r^{d_x}),\mathsf{d}_{W_2})$, respectively. Since $F( \theta, q )=\int \log(q(x))q(\dif x)-\int \ell(\theta,x)q(\dif x)$, and since $q\mapsto \int \log(q(x))q(\dif x)$ is convex along $\gamma_q$ by \citet[Theorem 7.28]{santambrogio2015}, we just need to show that $V( \theta, q ):=-\int \ell(\theta,x)q(\dif x)$ is $\lambda$-strongly convex along $\gamma$. Using the representation in Lemma \ref{lemma:geodesicsinM} above and then applying Assumption~\ref{ass:strongconcave} we obtain
		\begin{align*}
			V(\gamma(t))=&-\int \ell(\gamma_\theta(t),x)\gamma_q(t)(x)\dif x\\ =& -\int \ell((1-t)\theta+t\theta',(1-t)x+tx')\varrho(x,x')\dif x\dif x' \\
			\leq& -\int\int ((1-t)\ell(\theta,x)+t\ell(\theta',x'))\varrho(x,x')\dif x\dif x' \\
			&- \frac{\lambda t(1-t)}{2} \int \left(\norm{\theta-\theta'}^2+\norm{x-x'}^2\right)\varrho(x,x')\dif x\dif x' \\
			=& (1-t)V( \theta, q )+tV(\theta',q') - \frac{\lambda t(1-t)}{2}\mathsf{d}(( \theta, q ),(\theta',q'))^2.
		\end{align*}
	} \hfill\BlackBox \\
	
	The next result supplies an estimate for the right lower derivative of the free energy along geodesics in $\mathcal{M}_2$ (to compare with \citealp[Theorem 20.1]{Villani2009}, or \citealp[Equation 51]{Otto2000}).
	\begin{lemma} \label{lemma:Fvariations}
		Let $\gamma$ be a geodesic connecting $( \theta, q ),(\theta',q')\in\Macd$. If $\ell$ is differentiable, then  
		\begin{equation*} \label{eq:Fvariations}
			\liminf_{t\rightarrow 0^+} \frac{F(\gamma(t))-F(\gamma(0))}{t} \geq \iprod{\nabla_x \Phi-id}{\nabla_x \delta_q F(\theta,q)}_q+\iprod{\theta'-\theta}{\nabla_\theta F(\theta,q)}
		\end{equation*}
		where $\delta_q F(\theta,q)=\log(q(x))-\ell(\theta,x)$ is $F$'s first variation and $\nabla_x \Phi$ is the optimal transport map from $q$ to $q'$.
	\end{lemma}
	\noindent {\bf Proof}{.
		Let us consider the geodesic $\gamma(t)$ given by $\gamma(t):=(\theta_t,q_t)$ with $q_t(\dif x):=((1-t)x+t\nabla_x\Phi(x))\#q(\dif x)$ and $\theta_t=(1-t)\theta + t\theta'$ --- see Lemma \ref{lemma:geodesicsinM}. Since $\Phi$ is a convex function (Lemma \ref{lemma:geodesicsinM}), by Alexandrov's theorem \citep[Theorem 14.24]{Villani2009} $\Phi$ has a second derivative almost everywhere. Let $\nabla^2_x$ denote the Hessian with respect to the variable $x$. We use the change of variables $x\rightarrow (1-t)x+t\nabla_x\Phi(x)$ and then the facts $\theta_t=(1-t)\theta + t\theta'$ and $q_t(\dif x):=((1-t)x+t\nabla_x\Phi(x))\#q(\dif x)\Rightarrow q_t((1-t)x+t\nabla_x\Phi(x))=q(x)/\det((1-t)I_{d_x} + t\nabla^2_x\Phi(x))$, so 
		\begin{align*} 
			&F(\gamma(t))=F(\theta_t,q_t)=\int q_t(x)(\log(q_t(x)) - \ell(\theta_t,x))\dif x \nonumber \\
			&= \int q_t((1-t)x+t\nabla_x\Phi(x))\log(q_t((1-t)x+t\nabla_x\Phi(x)) - q_t((1-t)x+t\nabla_x\Phi(x))\\ 
			&\cdot \ell(\theta_t,(1-t)x+t\nabla_x\Phi(x))  \det((1-t)I_{d_x} + t\nabla^2_x\Phi(x))\dif x \nonumber \\
			&=\int q(x) \log\left(\frac{q(x)}{\det((1-t)I_{d_x} + t\nabla^2_x\Phi(x))}\right)\dif x-q(x)\ell(\theta_t,(1-t)x+t\nabla_x\Phi(x))\dif x \\
			&=: \int q(x)\log(q(x))\dif x + F_2(\gamma(t)) + F_3(\gamma(t)). \nonumber
		\end{align*}
		Now, by Fatou's lemma,
		\begin{equation*}
			\liminf_{t\rightarrow 0^+} \frac{F_2(\theta_t,q_t)-F_2(\theta_0,q_0)}{t} \geq \int \liminf_{t\rightarrow 0^+}\frac{\log(\det(I_{d_x}))-\log(\det((1-t)I_{d_x}+t\nabla^2_x\Phi(x)))}{t}q(\dif x)
		\end{equation*}
		and by positive semi-definiteness of $(1-t)I_{d_x}+t\nabla^2_x\Phi(x)$, the arithmetic mean--geometric mean inequality yields that
		\begin{align*}
			\det((1-t)I_{d_x}+t\nabla^2_x\Phi(x)) \leq \bigg(\frac{\textrm{Tr}((1-t)I_{d_x}+t\nabla^2_x\Phi(x))}{d_x}\bigg)^{d_x} = \bigg((1-t)+\frac{t}{d_x}\Delta_x\Phi(x)\bigg)^{d_x}
		\end{align*}
		where $\Delta_x$ denotes the Laplacian. It follows that 
		\begin{align*}
			\liminf_{t\rightarrow 0^+} \frac{F_2(\theta_t,q_t)-F_2(\theta_0,q_0)}{t} &\geq \int \liminf_{t\rightarrow 0^+}\bigg[-\frac{d_x}{t}\log\bigg((1-t)+\frac{t}{d_x}\Delta_x\Phi(x)\bigg) \bigg]q(\dif x) \\
			&= -\int (\Delta_x\Phi(x)-d_x)q(\dif x).
		\end{align*}
		Next, we estimate $F_3(\gamma(t))$'s lower derivative as 
		\begin{align*}
			&\liminf_{t\rightarrow 0^+} \frac{F_3(\theta_t,q_t)-F_3(\theta_0,q_0)}{t} \geq \int  \liminf_{t\rightarrow 0^+}\bigg[\frac{\ell(\theta,x)-\ell(\theta_t,(1-t)x+t\nabla_x\Phi(x))}{t}\bigg]q(\dif x) \\
			&= -\int \iprod{\nabla\ell(\theta,x)}{(\theta'-\theta,\nabla_x\Phi(x)-x)}q(\dif x)
		\end{align*}
		where the limit exchange is again justified by Fatou's lemma. Putting these results together, integrating by parts, and using the divergence theorem then gives
		\begin{align*}
			&\liminf_{t\rightarrow 0^+} \frac{F(\theta_t,q_t)-F(\theta_0,q_0)}{t} \\
			&\geq -\int q(x)(\Delta_x\Phi(x)-d_x)\dif x - \int q(x)\iprod{\nabla\ell(\theta,x)}{(\theta'-\theta,\nabla_x\Phi(x)-x)}\dif x \\
			&= \int \iprod{\nabla_x q(x)}{\nabla_x \Phi(x)-x} \dif x - \int q(x)\iprod{\nabla\ell(\theta,x)}{(\theta'-\theta,\nabla_x\Phi(x)-x)}\dif x \\ 
			&= \int \iprod{\nabla_x q(x)}{\nabla_x \Phi(x)-x} - q(x)\iprod{\nabla_x \ell(\theta,x)}{\nabla_x\Phi(x)-x}\dif x \\
            &-\int \iprod{\theta'-\theta}{\nabla_\theta \ell(\theta,x)}q(x)\dif x\\ 
			&= \int \iprod{\nabla_x \log\left(\frac{q(x)}{\rho_\theta(x)}\right)}{\nabla_x \Phi(x)-x}q(x)\dif x -\int \iprod{\theta'-\theta}{\nabla_\theta \ell(\theta,x)}q(x)\dif x 
		\end{align*}
	} \hfill\BlackBox \\
	
	\noindent {\bf Proof of Theorem \ref{thm:extbakryemery}.}{
		Take any two points $( \theta, q )$ and $(\theta',q')$ in $\Macd$ and consider a geodesic $\gamma=(\gamma_\theta,\gamma_q)$ connecting those. Since under Assumption \ref{ass:model}, $\ell$ is differentiable, taking the right lower derivative w.r.t. $t$ at 0 in \eqref{eq:Fgeodesicconv} and using Lemma \ref{lemma:Fvariations} we obtain
		\begin{align} \label{eq:fnuaifnauifa}
			&\iprod{\nabla_x \Phi-id}{\nabla_x \delta_q F(\theta,q)}_q+\iprod{\theta'-\theta}{\nabla_\theta F(\theta,q)} \nonumber \\
            &\leq \liminf_{t\rightarrow 0^+}\frac{F(\gamma(t))-F(\gamma(0))}{t} \leq F(\theta',q')-F( \theta, q )-\frac{\lambda}{2}\mathsf{d}(( \theta, q ),(\theta',q'))^2.
		\end{align}
		Because $\norm{\nabla_x \Phi-id}_q^2 = \mathsf{d}_{W_2}(q,q')^2$, setting $(\theta',q'):=(\theta_*,\pi_{\theta_*})$ and using the Cauchy--Schwartz inequality yields
		\begin{align*}
			F( \theta, q )-F_*\leq& - \iprod{\nabla_x \Phi-id}{\nabla_x \delta_q F(\theta,q)}_q-\iprod{\theta'-\theta}{\nabla_\theta F(\theta,q)} 
            - \frac{\lambda}{2}\mathsf{d}(( \theta, q ),(\theta',q'))^2 \\
            \leq &+ \mathsf{d}(( \theta, q ),(\theta',q'))\sqrt{\norm{\nabla_x \delta_q F(\theta,q)}^2_q + \norm{\nabla_\theta F(\theta,q)}^2 } - \frac{\lambda}{2}\mathsf{d}(( \theta, q ),(\theta',q'))^2\\
            = &+\mathsf{d}(( \theta, q ),(\theta',q'))\sqrt{I(\theta,q)} - \frac{\lambda}{2}\mathsf{d}(( \theta, q ),(\theta',q'))^2.
		\end{align*}
		Now we use Young's inequality $ab\leq a^2/2\epsilon+\epsilon b^2/2$ with $\epsilon=\lambda>0$ on $\mathsf{d}((\theta,q),(\theta',q'))\sqrt{I(\theta,q)}$ and the claim follows.
	} \hfill\BlackBox \\
	
	\subsection{Proof of Theorem \ref{thm:dflowconvergencehp}} \label{app:dflowconvproof}
	Let $t>0$. Consider a geodesic $\gamma=(\gamma_\theta,\gamma_q)$ connecting the points $(\theta_t, q_t)$ and $(\theta_*,\pi_{\theta_*})$ in $\Macd$. 
	Setting $(\theta,q)=(\theta_t,q_t)$ and $(\theta',q')=(\theta_*,\pi_{\theta_*})$ in \eqref{eq:fnuaifnauifa}, and since under Assumptions \ref{ass:model}(iii) and \ref{ass:strongconcave} the xLSI holds (Theorem \ref{thm:extbakryemery}), we can use the extended Talagrand-type inequality (Theorem \ref{thm:exttalagrand}) to obtain
	\begin{align*}
		&\iprod{\nabla_x \Phi-id}{\nabla_x \delta_q F(\theta_t,q_t)}_{q_t}+\iprod{\theta_\star-\theta_t}{\nabla_\theta F(\theta_t,q_t)} \\
        &\leq F_* - F(\theta_t,q_t) - \frac{\lambda}{2}\mathsf{d}((\theta_t, q_t),(\theta_*,\pi_{\theta_*}))^2 \leq -\lambda\mathsf{d}((\theta_t, q_t),(\theta_*,\pi_{\theta_*}))^2.
	\end{align*}
	Let $\varrho$ be the optimal transport plan for $(q_t,\pi_{\theta_*})$ and let $v_t(x)=\nabla_x \log(q_t(x)/\rho_{\theta_t}(x))$. Recall that $\varrho=(id\times\nabla_x\Phi)_{\#}q_t$ by Brenier's Theorem. Combining the above inequality with \eqref{eq:W2derivative} we write
	\begin{align*}
		&\frac{\dif }{\dif t}\mathsf{d}((\theta_t, q_t),(\theta_*,\pi_{\theta_*}))^2 = 2\int \iprod{v_t(x_1)}{x_1-x_2}\dif \varrho(x_1,x_2) + 2\iprod{\dot{\theta_t}}{\theta_t-\theta_*} \\
		&=2 \iprod{\nabla_x \Phi-id}{\nabla_x \delta_q F(\theta_t,q_t)}_{q_t}+2\iprod{\theta_\star-\theta_t}{\nabla_\theta F(\theta_t,q_t)}
		\leq -2\lambda\mathsf{d}((\theta_t, q_t),(\theta_*,\pi_{\theta_*}))^2,
	\end{align*}
	upon which the result follows via Gr\"onwall's inequality. \hfill\BlackBox \\

	\section{Proofs for Section~\protect{\ref{sec:pgdconvergence}}} \label{app:pgdconvergence}
	\subsection{Proof of Proposition~\protect{\ref{prop:sdesolutions}}}\label{app:sdesolutions}
	We break down the proof of Proposition \ref{prop:sdesolutions} into three steps: (i) in Lemma \ref{lemma:sdesol} we prove that under our assumptions the SDE \eqref{eq:sde1} has an unique strong solution; (ii) in Lemma \ref{lemma:solregularity} we prove some regularity properties for the law of the solution; and finally (iii) in Lemma \ref{lemma:sdepdesol} we prove that such law provides a classical solution to \eqref{eq:pde} satisfying Assumption \ref{ass:solutions_existence}.  
	In this section, Assumptions \ref{ass:model}(ii) and \ref{ass:gradLip} are taken to hold.
	\begin{lemma} \label{lemma:sdesol}
		For all $T>0$, if $(\theta,X)$ belongs to $\r^{d_\theta}\times \cal{L}^2(\r^{d_x})$, the SDE \eqref{eq:sde1} has an unique strong solution $(\theta_t,X_t)_{t\leq T}$ on $[0,T]$ on such that $(\theta_0,X_0)=(\theta,X)$.
	\end{lemma}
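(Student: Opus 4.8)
The plan is to recast the McKean--Vlasov SDE \eqref{eq:sde1} as a fixed-point problem for the flow of marginals $q_t := \mathrm{Law}(X_t)$, exploiting the fact that the only mean-field interaction enters through the drift of $\theta_t$. Fix $T>0$ and work in the complete metric space $\cal{C}_T := C([0,T],\cal{P}_2(\r^{d_x}))$ with metric $d_T(\mu,\nu):=\sup_{t\le T}\mathsf{d}_{W_2}(\mu_t,\nu_t)$ (completeness of $(\cal{P}_2(\r^{d_x}),\mathsf{d}_{W_2})$ is \cite[Theorem~6.18]{Villani2009}). Given $\mu=(\mu_t)_{t\le T}\in\cal{C}_T$, I would first solve the deterministic non-autonomous ODE
\[
\dot\theta_t = b(\theta_t,\mu_t),\qquad b(\theta,\mu):=\int\nabla_\theta\ell(\theta,x)\,\mu(\dif x),\qquad \theta_0=\theta.
\]
By Assumption~\ref{ass:gradLip}, $b(\cdot,\mu)$ is $L$-Lipschitz uniformly in $\mu$ and $t\mapsto b(\theta,\mu_t)$ is continuous (continuity of $\mu\mapsto\int\nabla_\theta\ell(\theta,x)\mu(\dif x)$ in $\mathsf{d}_{W_2}$ follows from Kantorovich--Rubinstein duality, since $\nabla_\theta\ell(\theta,\cdot)$ is $L$-Lipschitz), so Picard--Lindel\"of gives a unique $C^1$ solution $\theta^\mu$; its linear growth in $\theta$ rules out blow-up, yielding global existence on $[0,T]$. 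I would then solve the genuine SDE
\[
\dif X_t = \nabla_x\ell(\theta^\mu_t,X_t)\dif t+\sqrt{2}\dif W_t,\qquad X_0=X,
\]
whose drift is continuous in $t$, $L$-Lipschitz in $x$ uniformly in $t$, and of linear growth (as $\theta^\mu_t$ is bounded on $[0,T]$); standard theory (e.g.\ \cite[Theorem~5.2.1]{Oksendal2013}) then furnishes a unique strong solution $X^\mu$ with $\Ebb{\sup_{t\le T}\norm{X^\mu_t}^2}<\infty$. Setting $\Phi(\mu)_t:=\mathrm{Law}(X^\mu_t)$, the moment bound together with $\cal{L}^2$-continuity of $t\mapsto X^\mu_t$ shows $\Phi$ maps $\cal{C}_T$ into itself.

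The crux is to show that $\Phi$ is a contraction---either on a short interval (then concatenate), or globally on $[0,T]$ after replacing $d_T$ by the equivalent weighted metric $\sup_{t\le T}e^{-\beta t}\mathsf{d}_{W_2}(\mu_t,\nu_t)$ with $\beta$ large. For $\mu,\nu\in\cal{C}_T$, splitting $b(\theta^\mu_t,\mu_t)-b(\theta^\nu_t,\nu_t)$ into a term bounded by $L\norm{\theta^\mu_t-\theta^\nu_t}$ and the term $\int\nabla_\theta\ell(\theta^\nu_t,x)(\mu_t-\nu_t)(\dif x)$ bounded by $L\,\mathsf{d}_{W_2}(\mu_t,\nu_t)$, Gr\"onwall gives $\norm{\theta^\mu_t-\theta^\nu_t}\le L\int_0^t e^{L(t-s)}\mathsf{d}_{W_2}(\mu_s,\nu_s)\dif s$. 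Coupling $X^\mu$ and $X^\nu$ synchronously (same Brownian motion, same initial condition), the noise cancels in the difference, so
\[
\frac{\dif}{\dif t}\norm{X^\mu_t-X^\nu_t}^2 = 2\iprod{X^\mu_t-X^\nu_t}{\nabla_x\ell(\theta^\mu_t,X^\mu_t)-\nabla_x\ell(\theta^\nu_t,X^\nu_t)} \le 3L\norm{X^\mu_t-X^\nu_t}^2 + L\norm{\theta^\mu_t-\theta^\nu_t}^2 ;
\]
taking expectations, using $\mathsf{d}_{W_2}(\Phi(\mu)_t,\Phi(\nu)_t)^2\le\Ebb{\norm{X^\mu_t-X^\nu_t}^2}$ and $X^\mu_0=X^\nu_0$, inserting the bound on $\norm{\theta^\mu_t-\theta^\nu_t}$, and applying Gr\"onwall once more yields an estimate of the form $d_T(\Phi(\mu),\Phi(\nu))\le\kappa(L,T)\,d_T(\mu,\nu)$ with $\kappa(L,T)\to0$ as $T\downarrow0$. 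Hence $\Phi$ has a unique fixed point $q=(q_t)_{t\le T}\in\cal{C}_T$, and putting $\theta_t:=\theta^q_t$, $X_t:=X^q_t$ produces a strong solution of \eqref{eq:sde1} on $[0,T]$ with $(\theta_0,X_0)=(\theta,X)$.

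For uniqueness, any strong solution $(\theta_t,X_t)_{t\le T}$ of \eqref{eq:sde1} has $t\mapsto\mathrm{Law}(X_t)$ in $\cal{C}_T$, and this curve is by definition a fixed point of $\Phi$; the contraction property forces it to equal $q$, and then pathwise uniqueness for the (no longer mean-field) SDE with drift $\nabla_x\ell(\theta^q_t,\cdot)$ forces $(\theta_t,X_t)_{t\le T}=(\theta^q_t,X^q_t)_{t\le T}$. I expect the main obstacle to be exactly the contraction estimate: carefully propagating a $\mathsf{d}_{W_2}$-perturbation of the marginal flow through the $\theta$-ODE and into the $X$-SDE, keeping the constants explicit enough to close the argument. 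An essentially equivalent alternative is to observe that $Z_t:=(\theta_t,X_t)$ solves a McKean--Vlasov SDE whose drift is Lipschitz in the state and in the law (w.r.t.\ $\mathsf{d}_{W_2}$, again by Kantorovich--Rubinstein) with constant diffusion coefficient, and to invoke the classical well-posedness theorem for such equations; Assumption~\ref{ass:gradLip} supplies precisely the Lipschitz bounds required.
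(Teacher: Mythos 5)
Your proposal is correct and takes essentially the same route as the paper: both set up a fixed-point map on the flow of marginals $(\mathrm{Law}(X_t))_{t\le T}$ in $\mathcal{C}([0,T],\mathcal{P}_2(\mathbb{R}^{d_x}))$, use Assumption~\ref{ass:gradLip} to make the frozen-law system well-posed, and close via a Banach fixed-point argument. The only cosmetic differences are that you decouple the $\theta$-ODE from the $X$-SDE (rather than solving the joint SDE for $(\theta,X)$ given $\nu$ as the paper does) and achieve contraction by a short-time/weighted-norm estimate rather than by iterating $\Psi_T^k$; both choices are standard and equivalent.
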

\noindent {\bf Proof}{.
This is an adaptation of the arguments in \citet[Theorem 1.7]{Carmona2016}, \citet[Proposition 1]{Chaintron2022}, and \citet[Proposition 3.1]{Lim2023}. 
		Fix any $T>0$, $\theta_0$ in $\r^{d_\theta}$, and $X_0$ in $\cal{L}^2(\r^{d_x})$.   Let $\cal{C}([0,T],\cal{S})$ be the space of continuous functions from $[0,T]$ to a metric space $\cal{S}$. It is straightforward to check that our Lipschitz assumption on $\ell$'s gradient, implies that $\ell(\theta,x)$ is $\nu_t$-integrable for any $t$,  $\theta$, and $\nu$ in $\cal{C}([0,T],\cal{P}_2(\r^{d_x}))$. Consequently, for any given such $\nu$, 
		\begin{equation}\label{eq:simplesde}
			\dif (\theta_t^\nu,X_t^\nu) = b^\nu(\theta_t^\nu,X_t^\nu,t)\dif t + \sigma \dif W_t, \quad (\theta_0^\nu,X_0^\nu)=(\theta_0,X_0),
		\end{equation}
with $b^\nu(\theta,x,t):=[\int \nabla_\theta \ell(\theta,z')\nu_t(\dif z),\nabla_x \ell(\theta,x)]^\intercal$ and $\sigma := \sqrt{2}\textup{diag}(0_{d_\theta},1_{d_x})$, is well-defined. Assumption~\ref{ass:gradLip} and Jensen's inequality imply that $b^\nu$ is Lipschitz in $(\theta,x)$, uniformly over $t$:
		\begin{align*}
			\norm{b^\nu(\theta,x,t)-b^\nu(\theta',x',t)}^2 &\leq\int\norm{\nabla_\theta\ell(\theta,z)-\nabla_\theta\ell(\theta',z)}\nu_t(\dif z)^2+ \norm{\nabla_x\ell(\theta,x)-\nabla_x\ell(\theta',x')} ^2\\
  & \leq L^2\norm{\theta-\theta'}^2+L^2\norm{(\theta,x)-(\theta',x')}^2
  \\&\leq 2L^2\norm{(\theta,x)-(\theta',x')}^2, \quad (\theta,x),(\theta',x')\in \r^{d_\theta}\times \r^{d_x}, t\leq T.
		\end{align*}
Moreover, for any point $(\theta',x')$  in $\r^{d_\theta}\times\r^{d_x}$,
		\begin{align*}
		 &\norm{b^\nu(\theta,x,t)}^2 \\
         &\leq \int\norm{\nabla_\theta\ell(\theta,x)-\nabla_\theta\ell(\theta',x')}^2\nu_t(\dif x)+ \norm{\nabla_x\ell(\theta,x)-\nabla_x\ell(\theta',x')}^2 + \norm{\nabla \ell(\theta',x')}^2 \\
		 &\leq 2L\norm{(\theta,x)-(\theta',x')}^2 + a_T, \quad  (\theta,x)\in \r^{d_\theta}\times \r^{d_x}, t\leq T,
		\end{align*}
where $a_T:=L\sup_{t\leq T}\norm{\theta-\theta'}^2+\int \norm{x-x'}^2\nu_t(\dif x)+\norm{\nabla \ell(\theta',x')}^2<\infty$. 
Consequently, \citet[Theorem 5.2.1.]{Oksendal2013} tells us that \eqref{eq:simplesde} has a unique strong solution $(\theta_t^\nu,X_t^\nu)_{t\leq T}$ over $[0,T]$. Furthermore, $(\textup{Law}(X_t^\nu))_{t\leq T}$ belongs to $\cal{C}([0,T],\cal{P}_2(\r^{d_x}))$: combining the above with Jensen's inequality,
    \begin{align*}
        \Ebb{\norm{\theta_t^\nu}^2+\norm{X_t^\nu}^2} &= \Ebb{\norm{\int_0^t b^\nu(\theta_s^\nu,X_s^\nu,s)\dif s}^2} \\ 
        &\leq 2T\bigg(L\int_0^t\Ebb{\norm{\theta_s^\nu-\theta'}^2 + \norm{X_s^\nu-x'}^2} \dif s + a_T\bigg) \\
        &\leq 2T \bigg(2L\int_0^T \Ebb{\norm{\theta_s^\nu}^2+\norm{X_s^\nu}^2} \dif s + a_T+TL(\norm{\theta'}^2+\norm{x'}^2) \bigg), \hspace{0.4em} \forall t\leq T;
    \end{align*}
and it follows from  Gr\"onwall's inequality that $\Ebb{\norm{\theta_t^\nu}^2+\norm{X_t^\nu}^2}\leq 2T(a_T+TL(\norm{\theta'}^2+\norm{x'}^2)) + 4Le^TT<\infty$ for all $t\leq T$. Now consider the function  
		\begin{equation*}
			\Psi_T:\cal{C}([0,T],\cal{P}_2(\r^{d_x})) \mapsto \cal{C}([0,T],\cal{P}_2(\r^{d_x})), 
		\end{equation*}
mapping $\nu$ to $(\textup{Law}(X_t^\nu))_{t\leq T}$. Note that, if $(\theta_t,X_t)_{t\leq T}$ is a strong solution of \eqref{eq:sde1}, then $(\textrm{Law}(X_t))_{t\leq T}$ is a fixed point of $\Psi_T$ by the latter's definition. Consequently, \eqref{eq:sde1} has no more solutions than $\Psi_T$ has fixed points. On the other hand, if $(q_t)_{t\leq T}$ is a fixed point of $\Psi_T$, then there is some $(\theta_t,X_t)_{t\leq T}$ solving \eqref{eq:simplesde} with $\nu$ equal to $q$ such that $(\textrm{Law}(X_t))_{t\leq T}$ also equals $q$. In other words, there is some $(\theta_t,X_t)_{t\leq T}$ that solves~\eqref{eq:sde1}. In short, to prove the existence and uniqueness of~\eqref{eq:sde1}'s solutions, we need only argue that $\Psi_T$ has a unique fixed point. To this end, consider the following metric on $\cal{C}([0,T],\cal{P}_2(\r^{d_x}))$:
        \begin{equation*}
			\mathsf{d}_{W_2}^T(\nu,\nu'):=\sup_{t\leq T} \mathsf{d}_{W_2}(\nu_t,\nu_t').
		\end{equation*}
If we show that, for some integer $k$, the $k$-fold composition $\Psi_T^k$ of $\Psi_T$ with itself is a contraction w.r.t.~$\mathsf{d}_{W_2}^T$, the Banach--Caccioppoli fixed-point theorem~\citep[Theorem 5.1]{Kreyszig1978} will imply that $\Psi_T$ has a unique fixed point in  $\cal{C}([0,T],\cal{P}_2(\r^{d_x}))$. Fix any two elements $\nu$ and $\nu'$ of $\cal{C}([0,T],\cal{P}_2(\r^{d_x}))$, denote with $(\theta_t^\nu,X_t^\nu)_{t\leq T}$ and $(\theta_t^{\nu'},X_t^{\nu'})_{t\leq T}$ the corresponding SDE \eqref{eq:simplesde} solutions, and let $c$ denotes a rolling constant independent of those, which value might change from line to line. Applying in order Jensen's inequality, $\nabla\ell$'s Lipschitz continuity, and the Kantorovich--Rubinstein duality formula, 		
    \begin{align*}
			\Ebb{\norm{\theta_t^{\nu}-\theta_t^{\nu'}}^2+\norm{X_t^{\nu}-X_t^{\nu'}}^2} &\leq \Ebb{\norm{\int_0^t(b^{\nu}(\theta_s^{\nu},X_s^{\nu},s)-b^{\nu'}(\theta_s^{\nu'},X_s^{\nu'},s))\dif s}^2} \\
			&\leq T\int_0^t\Ebb{\norm{b^{\nu}(\theta_s^{\nu},X_s^{\nu},s)-b^{\nu'}(\theta_s^{\nu'},X_s^{\nu'},s)}^2\dif s} \\
			&\leq cT\int_0^t \Ebb{\norm{\theta_s^{\nu}-\theta_s^{\nu'}}^2 + \norm{X_s^{\nu}-X_s^{\nu'}}^2 + \mathsf{d}_{W_1}(\nu_s,\nu_s')^2}\dif s
		\end{align*}
  Because $\mathsf{d}_{W_1}\leq \mathsf{d}_{W_2}$ \citep[Remark 6.6]{Villani2009}, 
		\begin{align*}
			\int_0^t \mathsf{d}_{W_1}(\nu_s,\nu_s')^2\dif s \leq \int_0^t \sup_{r\leq s} \mathsf{d}_{W_1}(\nu_r,\nu_r')^2\dif s \leq \int_0^T \mathsf{d}_s(\nu_{[0,s]},\nu'_{[0,s]})^2\dif s,
		\end{align*}
  where $\nu_{[0,s]},\nu_{[0,s]}'$ respectively denote $\nu,\nu'$'s restriction to $[0,s]$. Combining the above two and applying Gr\"onwall's inequality, we find that
		\begin{align*}
			\Ebb{\norm{\theta_t^{\nu}-\theta_t^{\nu'}}^2+\norm{X_t^{\nu}-X_t^{\nu'}}^2} 
			\leq c_T\int_0^T \mathsf{d}_s(\nu_{[0,s]},\nu_{[0,s]}')^2\dif s
		\end{align*}
		with $c_T:=cTe^T $. Taking supremums over $t\leq T$, we find that
        \begin{align*}
           \mathsf{d}_{W_2}^T(\Psi_T(\nu),\Psi_T(\nu'))^2 &=\sup_{t\leq T} \mathsf{d}_{W_2}(\nu_t,\nu_t')\leq \sup_{t\leq T}\Ebb{\norm{X_t^\nu-X_t^{\nu'}}^2}\\
           &\leq \sup_{t\leq T}\left(\Ebb{\norm{\theta_t^{\nu}-\theta_t^{\nu'}}^2+\norm{X_t^{\nu}-X_t^{\nu'}}^2} \right)\leq c_T\int_0^T \mathsf{d}_{W_2}^T(\nu_{[0,s]},\nu_{[0,s]}')^2\dif s.
        \end{align*}
    Let $\Psi_T^k$ denote the $k$-fold composition of $\Psi_T$ with itself. Iterating the inequality above $k$-times yields
		\begin{align*}
			\mathsf{d}_{W_2}^T(\Psi_T^k(\nu),\Psi_T^k(\nu'))^2   &\leq  (c_T)^k \int_0^T  \frac{s^k}{(k-1)!}\mathsf{d}_{W_2}^T(\nu_{[0,s]},\nu_{[0,s]}')^2\dif s \\
			&\leq \frac{(c_T)^k}{k!}\mathsf{d}_{W_2}^T(\nu,\nu')^2\dif s.
		\end{align*}
In particular, for large enough $k$, $\Psi_T^k$ is a contraction. Because $(\cal{C}([0,T],\cal{P}_2(\r^{d_x})), \mathsf{d}_{W_2}^T)$ is a complete metric space~\citep[Proposition 17.15]{Sutherland2009}, the Banach--Caccioppoli fixed-point theorem~\citep[Theorem 5.1]{Kreyszig1978} then implies that $\Psi_T$ has a unique fixed point in  $\cal{C}([0,T],\cal{P}_2(\r^{d_x}))$.
    } \hfill\BlackBox \\

	In the lemma below, for some $T>0$ we set $\r^{d_x}_T:=[0,T)\times\r^{d_x}$ and we write  $\cal{L}^\infty_\textup{loc}(\r^{d_x}_T)$ for the set of real locally bounded functions on $\r^{d_x}_T$. We let $\cal{H}^{j,k}(\r^{d_x}_T)$ denote the space of real functions on $\r^{d_x}_T$ for which all components of the weak derivatives $\nabla_t^m  \nabla_x^n q$ exist and belong to $\cal{L}^\infty(\r^{d_x}_T)$ for all $m\leq j, n\leq k$. Here, $\nabla^i$ denotes the $i$-th fold outer product of $\nabla$ with itself. We write that such functions are in $\cal{H}^{j,k}_\textup{loc}(\r^{d_x}_T)$ if those weak derivatives only  belong to $\cal{L}^\infty_\textup{loc}(\r^{d_x}_T)$. In the proof below, we follow an argument in \citet[Lemma C.4]{Fan2023}, see also \citet[Theorem 5.1]{JKO1998}, \citet[Lemma 10.7]{Mei2018}.
	
	\begin{lemma} \label{lemma:solregularity}
		$\textup{Law}(X_t)$ has a Lebesgue density in $\cal{C}^{1,2}(\r^{d_x}_T,\r^+)$ and $\theta_t\in \cal{C}^1([0,T),\r^{d_\theta})$.
	\end{lemma}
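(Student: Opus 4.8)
The plan is to freeze the parameter trajectory, reduce to a uniformly parabolic Fokker--Planck equation for the marginal laws $q_t := \textup{Law}(X_t)$, bootstrap the regularity of its solution, and then read off the smoothness of $t \mapsto \theta_t$ from that of $t \mapsto q_t$.

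First I would fix $T > 0$ and invoke Lemma~\ref{lemma:sdesol}: there is a strong solution $(\theta_t, X_t)_{t \leq T}$ with $t \mapsto \theta_t$ continuous and $t \mapsto q_t$ continuous into $(\cal{P}_2(\r^{d_x}), \mathsf{d}_{W_2})$, and the Gr\"onwall bound in that proof gives $\sup_{t \leq T} \Ebb{\norm{X_t}^2} < \infty$. Regarding the continuous curve $(\theta_t)_{t \leq T}$ as fixed, set $b(t,x) := \nabla_x \ell(\theta_t, x)$, so that $X$ solves $\dif X_t = b(t, X_t)\dif t + \sqrt{2}\dif W_t$. By Assumption~\ref{ass:gradLip} the coefficient $b$ is jointly continuous in $(t,x)$ and Lipschitz in $x$ uniformly in $t \leq T$; by Assumption~\ref{ass:model}(ii)--(iii), since $\nabla_x \ell(\theta, \cdot) = \nabla_x \log \pi_\theta$ with $\pi_\theta \in \cal{C}^2$ and positive, $b(t, \cdot)$ is continuously differentiable in $x$ with $\nabla_x b$ continuous and bounded by $L$. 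Applying It\^{o}'s formula to $\varphi(X_t)$ for $\varphi \in \cal{C}_c^2$ and taking expectations shows that $(q_t)_{t \leq T}$ is a distributional solution of the Fokker--Planck equation $\partial_t q = \Delta q - \nabla_x \cdot (q\, b)$ on $\r^{d_x}_T$.

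The next step is existence, positivity, and an a~priori Sobolev bound for the density, followed by the bootstrap. Since the diffusion matrix is the identity the equation is uniformly parabolic, and because $b$ has at most linear growth standard results for nondegenerate diffusions give that $q_t$ admits a Lebesgue density for each $t > 0$ that is locally bounded on $\r^{d_x}_T$ and strictly positive (the latter also follows from $\rho_\theta > 0$ together with the parabolic strong maximum principle); a Caccioppoli-type energy estimate on space-time cylinders then places this density in $\cal{H}^{0,1}_{\textup{loc}}(\r^{d_x}_T)$. From here I would follow the iteration of \cite[Lemma~C.4]{Fan2023} (cf.\ \cite[Theorem~5.1]{JKO1998} and \cite[Lemma~10.7]{Mei2018}): writing the equation as $\partial_t q - \Delta q = -\,b \cdot \nabla_x q - (\nabla_x \cdot b)\, q$ and noting that, on compact subsets, the right-hand side lies in $\cal{L}^p$ whenever $q$ and $\nabla_x q$ do (here $\nabla_x \cdot b = \Delta_x \log \pi_{\theta_t}$ is locally bounded and continuous), interior parabolic $\cal{L}^p$ estimates upgrade the density to $\cal{H}^{1,2}_{\textup{loc}}(\r^{d_x}_T)$; iterating on successively higher space derivatives and invoking a Sobolev/Morrey embedding then places it in $\cal{C}^{1,2}(\r^{d_x}_T, \r^+)$, with a cutoff handling the unboundedness of $b$.

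Finally, for $\theta_t \in \cal{C}^1$: the SDE gives $\dot{\theta}_t = \int \nabla_\theta \ell(\theta_t, x)\, q_t(\dif x)$, and since $t \mapsto \theta_t$ is continuous, $t \mapsto q_t$ is $\mathsf{d}_{W_2}$-continuous with uniformly bounded second moments, and $\nabla_\theta \ell$ is continuous with at most linear growth (Assumption~\ref{ass:gradLip}), dominated convergence shows $t \mapsto \int \nabla_\theta \ell(\theta_t, x)\, q_t(\dif x)$ is continuous; hence $\theta_t \in \cal{C}^1([0,T), \r^{d_\theta})$. I expect the main obstacle to be the bootstrap in the third paragraph: promoting a weak solution of a uniformly parabolic equation with a merely Lipschitz drift (so that $\nabla_x \cdot b$ is only bounded and continuous, not H\"older) to a classical $\cal{C}^{1,2}$ solution cannot be done by a single Schauder estimate and instead requires the careful, localized parabolic-regularity iteration of the cited works, exploiting that $\nabla_x \ell$ inherits the continuity of its $x$-derivative from $\pi_\theta \in \cal{C}^2$.
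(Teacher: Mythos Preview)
Your proposal is correct and takes essentially the same approach as the paper: freeze the continuous curve $(\theta_t)_{t\leq T}$, identify $q_t$ as a weak solution of a uniformly parabolic Fokker--Planck equation, obtain initial $\cal{H}^{0,1}_{\textup{loc}}$ regularity, bootstrap following \cite[Lemma~C.4]{Fan2023} / \cite[Theorem~5.1]{JKO1998} / \cite[Lemma~10.7]{Mei2018}, apply Sobolev embedding, and then read off $\theta_t\in\cal{C}^1$. The only cosmetic difference is that the paper makes the bootstrap mechanism explicit by deriving a Duhamel representation $\phi(y)q_t(y)=\int_0^t(\xi_{1,s}+\xi_{2,s})*\varphi_{t-s}(y)\dif s$ via It\^{o}'s formula applied to $\phi(X_s)\varphi_{\sigma^2+t-s}(y-X_s)$ and then invoking the heat-kernel smoothing estimate of \cite{Ladyzhenskaia1988}, whereas you phrase the same step as an appeal to interior parabolic $\cal{L}^p$ estimates.
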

	
	\noindent {\bf Proof}{.
		Set $q_t:=\textup{Law}(X_t)$. Since $(t,x)\mapsto \nabla_x \log(\rho_{\theta_t}(x))$ is locally integrable in $\r^{d_x}_T$ due Assumption \ref{ass:model} and the fact that $t\mapsto \theta_t$ is continuous, \citet[Corollary 6.4.3]{Bogachev2022} shows that $(q_t)_{t\leq T}$ admits a continuous Lebesgue density, further satisfying
		\begin{equation} \label{eq:sollocallydiff}
			q_t \in \cal{H}_{\textup{loc}}^{0,1}(\r^{d_x}_T).
		\end{equation}
		We now exploit this first regularity estimate and improve on it with a bootstrap argument.
		Let $\varphi_{\sigma^2}$ denote the Gaussian density with mean 0 and variance $\sigma^2$. Let $\phi\in\cal{C}_c^\infty(\r^{d_x},\r)$. For fixed $t>0$ and $y\in\r^{d_x}$, It\^{o}'s lemma on $(s,X_s)\mapsto \phi(X_s)\varphi_{\sigma^2+t-s}(y-X_s)$ yields	
		\begin{align*}
			\int &\phi(x)\varphi_{\sigma^2}(y-x)q_t(\dif x) =  
			\int_0^t \int (\partial_s \varphi_{\sigma^2+t-s}(y-x)\phi(x) 
			+\iprod{\nabla_x\ell(\theta_s,x)}{\nabla_x \phi}\varphi_{\sigma^2+t-s}(y-x)  \\ -
			&\iprod{\nabla_x \ell(\theta_s,x)}{\nabla_x\varphi_{\sigma^2+t-s}(y-x)}\phi(x)+
			\Delta_x(\phi(x)\varphi_{\sigma^2+t-s}(y-x)))q_s(\dif x)\dif s. 
		\end{align*}
		Using the heat equation $\partial_s\varphi_{\sigma^2+t-s}(y-x)=-\Delta_x\varphi_{\sigma^2+t-s}(y-x)$ and integrating by parts,
		\begin{align*}
			\int \phi(x)\varphi_{\sigma^2}(y-x)q_t(\dif x) 
			&= \int_0^t \int  (\iprod{\nabla_x \ell(\theta_s,x)}{\nabla_x \phi(x)}+\Delta_x \phi(x))q_s(x)\varphi_{\sigma^2+t-s}(y-x) \\
			&+  \iprod{(-\nabla_x \ell(\theta_s,x)\phi(x)+2\nabla_x\phi(x))q_s(x)}{\nabla_x\varphi_{\sigma^2+t-s}(y-x)}  \dif x\dif s.
		\end{align*}	
		Let $\sigma^2\rightarrow 0$. Using the weak differentiability of $q_t$ ensured by \eqref{eq:sollocallydiff}, we deduce 
		\begin{align} \label{eq:regularityingredient}
			\phi(y)q_t(y) 
			= \int_0^t \xi_{1,s}*\varphi_{t-s}(y) 
			+ \xi_{2,s}*\varphi_{t-s}(y)\dif s \quad \forall y\in\r^{d_x},
		\end{align}	
		where $*$ denotes the convolution operator, and we defined $\xi_{1,s}(x):=(\iprod{\nabla_x \ell(\theta_s,x)}{\nabla_x \phi(x)}-\Delta_x \phi(x)) q_s(x)$ and $\xi_{2,s}(x):=\nabla_x\cdot(q_s(x)(\nabla_x \ell(\theta_s,x)\phi(x)-2\nabla_x\phi(x)))$. The next key ingredient is the following implication in \citet[Chapter 4, (3.1)]{Ladyzhenskaia1988}:
		\begin{align} \label{eq:regularitybootstrap}
			\xi_t \in \cal{H}^{j,k}(\r^{d_x}_T) \text{ for } 2j+k\leq 2m \Rightarrow \int_0^t \xi_s *  \varphi_{t-s}(y) \dif s\in \cal{H}^{j,k}(\r^{d_x}_T) \text{ for } 2j+k\leq 2m+2
		\end{align}
		Since $q_t\in \cal{H}^{0,1}_{\textup{loc}}(\r^{d_x}_T)$, $\phi\in\cal{C}_c^\infty(\r^{d_x},\r)$ and the elements of $(t,x)\mapsto \nabla_{x} \ell(\theta_t,x)$ and $(t,x)\mapsto \nabla^2_x \ell(\theta_t,x)$ belong to $\cal{L}^\infty_{\textup{loc}}(\r^{d_x}_T)$ by Assumption \ref{ass:model}(ii), the equations \eqref{eq:regularitybootstrap} and \eqref{eq:regularityingredient} show the chain of implications $q_t \in \cal{H}^{0,1}_{\textup{loc}}(\r^{d_x}_T)\Rightarrow (\phi q_t)\in \cal{H}^{0,2}(\r^{d_x}_T)\Rightarrow  q_t \in \cal{H}^{0,2}_{\textup{loc}}(\r^{d_x}_T) \Rightarrow (\phi q_t)\in\cal{H}^{0,3}(\r^{d_x}_T)\Rightarrow  q_t \in \cal{H}^{0,3}_{\textup{loc}}(\r^{d_x}_T)$. 
		Next, since we now proved $q_t\in \cal{H}^{0,3}_{\textup{loc}}(\r^{d_x}_T)$, by analogous arguments  \eqref{eq:regularitybootstrap} also yields $(\phi q_t)\in \cal{H}^{1,3}(\r^{d_x}_T) \Rightarrow  q_t \in \cal{H}^{1,3}_{\textup{loc}}(\r^{d_x}_T)\Rightarrow(\phi q_t)\in \cal{H}^{2,3}(\r^{d_x}_T) \Rightarrow  q_t \in \cal{H}^{2,3}_{\textup{loc}}(\r^{d_x}_T)$. Finally, by the Sobolev embedding Theorem \citep[Theorem 4.12]{Adams2003}, $ q_t \in \cal{H}^{2,3}_{\textup{loc}}(\r^{d_x}_T)\Rightarrow q_t\in\cal{C}^{1,2}(\r^{d_x}_T)$. Since $\ell$ is continuously differentiable by Assumption \ref{ass:gradLip} and $q_t\in\cal{C}^{1,2}(\r^{d_x}_T)$, we also have $\theta_t \in \cal{C}^1([0,T),\r^{d_\theta})$, and the claim follows.	
	} \hfill\BlackBox \\

    Following the, by now standard, argument of \citet{Sznitman1991}, as $T$ is arbitrary and for any $T'<T$ the projection of the solution on $[0,T]$ onto $[0,T']$ coincides with the solution obtained by working directly on $[0,T']$, there exists an unique extension on $[0,\infty)$. The regularity properties of the solutions given in Lemma \ref{lemma:solregularity} extends to $[0,\infty)$. In fact, the marginals of the solution on $[0,\infty)$ have to agree to the solution on $[0,T)$ for any $T>0$; such regularity is guaranteed by Lemma \ref{lemma:solregularity}. Hence, if the regularity failed at some $T'>0$, we could just take $T=2T'$, producing a contradiction.
 
	We proved that \eqref{eq:sde1} has an unique strong solution on all $[0,\infty)$ with the regularity that Assumption \ref{ass:solutions_existence} asks a classical solution to \eqref{eq:pde} to have. The next lemma closes the circle.

	\begin{lemma} \label{lemma:sdepdesol}
		$(\theta_t,\textup{Law}(X_t))_{t\geq 0}$ is a classical solution to \eqref{eq:pde}.
	\end{lemma}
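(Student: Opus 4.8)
The plan is to pass from the martingale (weak) description of the diffusion's law to a pointwise identity, using the regularity already obtained in Lemma~\ref{lemma:solregularity}. Write $q_t:=\textup{Law}(X_t)$ and note that, since $\ell(\theta,x)=\log\rho_\theta(x)$ and $Z_\theta$ is constant in $x$, one has $\nabla_x\log\pi_\theta=\nabla_x\log\rho_\theta=\nabla_x\ell(\theta,\cdot)$; by Assumption~\ref{ass:model}(ii) this field is $\cal{C}^1$ in $x$, while $q_t$ is strictly positive by Lemma~\ref{lemma:solregularity} and $\rho_{\theta_t}$ by Assumption~\ref{ass:model}(iii), so every term appearing in~\eqref{eq:pde} is a genuine, continuous function of $(t,x)$. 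The goal is then to check the two coordinates of~\eqref{eq:pde} separately.

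The parameter component is immediate: the first equation of~\eqref{eq:sde1} is literally $\dot{\theta}_t=\int\nabla_\theta\ell(\theta_t,x)q_t(\dif x)$, and Lemma~\ref{lemma:solregularity} gives $\theta_t\in\cal{C}^1([0,\infty),\r^{d_\theta})$, so this holds in the classical sense. For the measure component, I would fix $\phi\in\cal{C}_c^\infty(\r^{d_x},\r)$ and apply It\^o's formula to $s\mapsto\phi(X_s)$; the stochastic integral is a martingale since $\nabla_x\phi$ is bounded, so taking expectations gives
$$\int\phi(x)q_t(\dif x)-\int\phi(x)q_0(\dif x)=\int_0^t\int\big(\iprod{\nabla_x\ell(\theta_s,x)}{\nabla_x\phi(x)}+\Delta_x\phi(x)\big)q_s(\dif x)\,\dif s.$$
The inner integral is continuous in $s$ (continuity of $s\mapsto\theta_s$ and $s\mapsto q_s$, local boundedness of $\nabla_x\ell$), so $t\mapsto\int\phi\,q_t$ is $\cal{C}^1$ with derivative equal to the integrand; integrating by parts on the right — here the $\cal{C}^{1,2}$ regularity of $q_t$ enters — yields
$$\frac{\dif}{\dif t}\int\phi(x)q_t(\dif x)=\int\phi(x)\,\nabla_x\cdot\Big[q_t(x)\nabla_x\log\Big(\tfrac{q_t(x)}{\rho_{\theta_t}(x)}\Big)\Big]\dif x.$$

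Finally, since $\phi$ has compact support and $\partial_t q_t$ is jointly continuous, differentiation under the integral sign on the left-hand side gives $\int\phi(x)\big(\partial_t q_t(x)-\nabla_x\cdot[q_t\nabla_x\log(q_t/\rho_{\theta_t})](x)\big)\dif x=0$ for every $\phi\in\cal{C}_c^\infty(\r^{d_x},\r)$; both terms in the parenthesis are continuous in $(t,x)$, so the fundamental lemma of the calculus of variations forces the parenthesis to vanish identically, i.e.\ $q_t$ solves the Fokker--Planck equation in~\eqref{eq:pde} pointwise. Combined with the parameter equation, this exhibits $(\theta_t,q_t)_{t\geq0}$ as a classical solution of~\eqref{eq:pde}, with the regularity demanded by Assumption~\ref{ass:solutions_existence} supplied by Lemma~\ref{lemma:solregularity} (extended to $[0,\infty)$ as discussed above). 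I expect the only genuinely delicate points to be the two interchange-of-limit steps — differentiating $\int\phi\,q_t$ in $t$ and then moving that derivative inside the integral — both of which are controlled by the compact support of $\phi$ together with the joint continuity provided by Lemma~\ref{lemma:solregularity}; everything else is bookkeeping.
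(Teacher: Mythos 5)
Your proof is correct and follows essentially the same route as the paper's: apply Itô's formula to $\phi(X_t)$ for $\phi\in\cal{C}_c^\infty$, take expectations to obtain the weak form of the Fokker--Planck equation, integrate by parts using the $\cal{C}^{1,2}$ regularity from Lemma~\ref{lemma:solregularity}, and then pass to a pointwise identity by testing against arbitrary smooth compactly supported functions (the paper phrases this via bump functions concentrating at a point, you via the fundamental lemma of the calculus of variations — same idea). Your write-up is somewhat more explicit than the paper's about the parameter component and about justifying the interchange of differentiation and integration, but the argument is the same.
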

	\noindent {\bf Proof}{.
		Let $ \phi $ in $ \cal{C}_c^\infty(\r^{d_x},\r)$. It\^{o}'s lemma shows
		\begin{align*}
			\phi(X_t) = \phi(X_0) + \int_0^t \bigg(\iprod{\nabla_x\ell(\theta_s,X_s)}{\nabla_x\phi(X_s)} + \Delta_x\phi(X_s)\bigg) \dif s + \int_0^t \iprod{\nabla_x\phi(X_s)}{\dif W_s}. 
		\end{align*}
		Consider the system
		\begin{align*}
			\dot{\theta}_t = \int \nabla_\theta\ell(\theta_t,x)q_t(\dif x), \quad M_t:= \phi(X_t)-\int_0^t \bigg(\iprod{\nabla_x\ell(\theta_s,X_s)}{\nabla_x\phi(X_s)} + \Delta_x\phi(X_s)\bigg) \dif s
		\end{align*}
		with $q_t=\text{Law}(X_t)$. Comparing with the expression for $\phi(X_t)$ above we notice that $M_t$ is a martingale with respect to the natural filtration generated by $(\theta_t,X_t)_{t\geq 0}$, as it corresponds to the It\^{o} integral of an adapted, square integrable process against Brownian motion. Taking expectations and the time derivative in $M_t$ above shows that
		\begin{align*}
			\frac{\dif}{\dif t}\int \phi(x)q_t(\dif x)  &= -\int \iprod{\nabla_x \ell(\theta_t,x)}{\nabla_x \phi(x)}  q_t(\dif x) + \int \Delta_x \phi(x)q_t(\dif x) \quad\forall \phi \in \cal{C}_c^\infty(\r^{d_x},\r).
		\end{align*}
		In particular, this shows that $(\theta_t,\textup{Law}(X_t))_{t\geq 0}$ is a weak solution to \eqref{eq:pde}. Thanks to the regularity provided by Lemma \ref{lemma:solregularity}, we can integrate by parts the above to obtain
		\begin{equation*}
			\frac{\dif}{\dif t}\int \phi(x)q_t(x)\dif x  = \int \phi(x)(\Delta_x q_t(x) - \iprod{\nabla_x \ell(\theta_t,x)}{\nabla_x q_t(x)})\dif x;
		\end{equation*}
		for all $\phi\in\cal{C}_c^\infty(\r^{d_x},\r)$. For each $x\in\r^{d_x}$, we can consider a sequence of bump functions in $\cal{C}_c^\infty(\r^{d_x},\r)$ concentrating in the point, showing that \eqref{eq:pde} holds pointwise and that $(\theta_t,\textup{Law}(X_t))_{t\geq 0}$ is a classical solution.		
	} \hfill\BlackBox \\

	\subsection{Proof of Lemma~\protect{\ref{lemma:iidapprox}}} \label{app:iidapprox}
	Recall that the particles $(X_t^1,\dots,X^N_t)$ are i.i.d.~with distribution $q_t$, and that the particles $(X_*^1,\dots,X^N_*)$ are i.i.d.~with distribution $\pi_{\theta_*}$. Consider the coupling of these random vectors such that each $X_*^i$ is optimally coupled (in the sense of the Wasserstein-2 distance) with $X_{t}^i$. In this case, we write
	\begin{align*}
		\mathsf{d}((\theta_{t},Q^N_{t})),(\theta_*,Q_*^N))^2&\leq \norm{\theta_{t}-\theta_*}^2+\frac{1}{N}\sum_{n=1}^N \Ebb{\norm{X_{t}^n-X_*^n}^2} \\
		&= \norm{\theta_{t}-\theta_*}^2+\Ebb{\norm{X_{t}^1-X_*^1}^2} = \mathsf{d}((\theta_{t},q_{t})),(\theta_*,\pi_{\theta_*}))^2.
	\end{align*}
	The result follows via Theorem \ref{thm:dflowconvergencehp} and Proposition \ref{prop:sdesolutions}. \hfill\BlackBox \\
	
	\subsection{Proof of Lemma~\protect{\ref{lemma:propchaos}}} \label{app:propchaos}
	\noindent
	\begin{proposition} \label{prop:uniformmomentboundflow}
		Assume Assumptions~\ref{ass:strongconcave}--\ref{ass:gradLip} and let $(\theta_\dagger,x_\dagger)$ denote $\ell$'s unique maximizer. Then, we have the following uniform-in-time second moment bound  
		$$
		\norm{\theta_t}^{2}+\Ebb{\norm{X_t}^{2}} \leq 2\left(\norm{\theta_\dagger}^2+\norm{x_\dagger}^{2}+\norm{\theta_0-\theta_\dagger}^2+\Ebb{\norm{X_0-x_\dagger}^{2}}+\frac{2d_x}{\lambda}\right)\quad\forall t\geq0.
		$$
		If $(\theta_\dagger,x_\dagger)=(0,0)$, the above also holds without the factor of 2 in the right hand side.
	\end{proposition}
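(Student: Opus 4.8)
The plan is to run an a priori second-moment estimate: differentiate $\tilde V_t:=\norm{\theta_t-\theta_\dagger}^2+\Ebb{\norm{X_t-x_\dagger}^2}$ in $t$, exploit the strong monotonicity of $-\nabla\ell$ supplied by Assumption~\ref{ass:strongconcave}, close the resulting differential inequality with Gr\"onwall, and finally pass from the quantity centred at $(\theta_\dagger,x_\dagger)$ to the one centred at the origin. First note that $\tilde V_t$ is finite and locally bounded in $t$ by Proposition~\ref{prop:sdesolutions}, which gives $X_t\in\cal{L}^2(\r^{d_x})$ for every $t$ with $\cal{L}^2$-norm continuous in $t$. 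Since $t\mapsto\theta_t$ is $\cal{C}^1$ and $q_t=\textup{Law}(X_t)$, the first equation in~\eqref{eq:sde1} gives $\tfrac{\dif}{\dif t}\norm{\theta_t-\theta_\dagger}^2=2\Ebb{\iprod{\theta_t-\theta_\dagger}{\nabla_\theta\ell(\theta_t,X_t)}}$; and It\^o's formula applied to $s\mapsto\norm{X_s-x_\dagger}^2$, followed by taking expectations (the stochastic integral being a true martingale thanks to the local second-moment bound), gives $\tfrac{\dif}{\dif t}\Ebb{\norm{X_t-x_\dagger}^2}=2\Ebb{\iprod{X_t-x_\dagger}{\nabla_x\ell(\theta_t,X_t)}}+2d_x$, the additive $2d_x$ coming from the quadratic variation of the $\sqrt{2}\,\dif W_t$ term. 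Adding the two identities gives
\begin{equation*}
\frac{\dif}{\dif t}\tilde V_t=2\,\Ebb{\iprod{(\theta_t,X_t)-(\theta_\dagger,x_\dagger)}{\nabla\ell(\theta_t,X_t)}}+2d_x .
\end{equation*}

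Assumption~\ref{ass:strongconcave} says that $\ell$ is $\lambda$-strongly concave, so $-\nabla\ell$ is $\lambda$-strongly monotone, $\iprod{\nabla\ell(u)-\nabla\ell(v)}{u-v}\leq-\lambda\norm{u-v}^2$ for all $u,v\in\r^{d_\theta}\times\r^{d_x}$; moreover $\nabla\ell$ vanishes at the unique maximizer $(\theta_\dagger,x_\dagger)$. Taking $u=(\theta_t,X_t)$, $v=(\theta_\dagger,x_\dagger)$ and expectations turns the identity above into $\tfrac{\dif}{\dif t}\tilde V_t\leq-2\lambda\tilde V_t+2d_x$, i.e., $\tfrac{\dif}{\dif t}\big(\tilde V_t-\tfrac{d_x}{\lambda}\big)\leq-2\lambda\big(\tilde V_t-\tfrac{d_x}{\lambda}\big)$, so Gr\"onwall's inequality yields $\tilde V_t\leq\max\{\tilde V_0,\tfrac{d_x}{\lambda}\}\leq\tilde V_0+\tfrac{2d_x}{\lambda}$ for all $t\geq0$. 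When $(\theta_\dagger,x_\dagger)=(0,0)$ this is precisely the stated bound with no factor of $2$. In general, the elementary inequality $\norm{a}^2\leq2\norm{a-b}^2+2\norm{b}^2$ gives $\norm{\theta_t}^2+\Ebb{\norm{X_t}^2}\leq2\tilde V_t+2\norm{\theta_\dagger}^2+2\norm{x_\dagger}^2$, and substituting the bound on $\tilde V_t$ produces the claim.

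I expect the only genuinely delicate point to be making the first step rigorous: differentiating $t\mapsto\Ebb{\norm{X_t-x_\dagger}^2}$ under the expectation and discarding the It\^o integral. Both are justified by the local-in-time second-moment control of $X_t$ already established in the proof of Proposition~\ref{prop:sdesolutions} (or, equivalently, by localizing the It\^o integral with a sequence of stopping times and then passing to the limit via Fatou's lemma); the monotonicity estimate and the Gr\"onwall step are entirely routine.
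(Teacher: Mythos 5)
Your proof is correct and follows essentially the same route as the paper's: differentiate the second moment centred at the maximizer, use a strong-concavity estimate to obtain a linear differential inequality, close it with Gr\"onwall, and recentre at the origin with the elementary $\|a\|^2\le 2\|a-b\|^2+2\|b\|^2$ inequality. The one minor deviation is in the middle step: you invoke strong monotonicity of $-\nabla\ell$ together with $\nabla\ell(\theta_\dagger,x_\dagger)=0$, giving $\iprod{\nabla\ell(u)}{u-v}\le-\lambda\|u-v\|^2$ and hence a $-2\lambda\tilde V_t$ drift; the paper instead uses the first-order strong-concavity inequality $\iprod{\nabla\ell(u)}{u-v}\le\ell(u)-\ell(v)-\tfrac{\lambda}{2}\|u-v\|^2$ combined with $\ell(u)\le\ell(v)$, yielding only $-\lambda\Ebb{\xi_t}$. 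Your version is sharper by a factor of two in the decay rate (and gives $\max\{\tilde V_0,d_x/\lambda\}$ rather than $\max\{\Ebb{\xi_0},2d_x/\lambda\}$), but both comfortably imply the stated bound since the proposition uses the generous constant $2d_x/\lambda$.
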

	
	\noindent {\bf Proof}{.
		Because,  
		$$f(z):=\norm{z}^{2}\Rightarrow\nabla f(z)=2z,\quad \nabla^2 f(z)=2I,$$
		by setting 
		$$\xi_t:=\norm{X_t-x_\dagger}^{2}+\norm{\theta_t-\theta_\dagger}^{2}=\norm{(\theta_t,X_t)-(\theta_\dagger,x_\dagger)}^{2},$$
		applying It\^{o}'s formula  \cite[Theorem 4.2.1]{Oksendal2013} and noting that $X_t$ has law $q_t$, we find 
		\begin{align}
			\dif \xi_t=2\left[\iprod{(\theta_t-\theta_\dagger,X_t-x_\dagger)}{(\Ebb{\nabla_\theta\ell(\theta_t,X_t)},\nabla_x\ell(\theta_t,X_t))}+d_x\right]\dif t+2^{3/2}\iprod{X_t-x_\dagger}{\dif W_t}.\label{eq:dnsa8dnysuadnau}
		\end{align}   
		However,
		\begin{align}&\iprod{(\theta_t-\theta_\dagger,X_t-x_\dagger)}{(\Ebb{\nabla_\theta\ell(\theta_t,X_t)},\nabla_x\ell(\theta_t,X_t))}\label{eq:nusd9adnsuadua}\\
			&\qquad\qquad\qquad\qquad\qquad\qquad\qquad\qquad=\iprod{\theta_t-\theta_\dagger}{\Ebb{\nabla_\theta\ell(\theta_t,X_t)}}+\iprod{X_t-x_\dagger}{\nabla_x\ell(\theta_t,X_t)}\nonumber\\
			&\qquad\qquad\qquad\qquad\qquad\qquad\qquad\qquad=\Ebb{\iprod{\theta_t-\theta_\dagger}{\nabla_\theta\ell(\theta_t,X_t)}}+\iprod{X_t-x_\dagger}{\nabla_x\ell(\theta_t,X_t)},\nonumber\end{align}
		and, because $\ell$ is strongly log-concave by Assumption \ref{ass:strongconcave} and $(\theta_\dagger,x_\dagger)$ maximizes it, 
		\begin{align*} 
			\iprod{\nabla\ell(\theta_t,X_t)}{(\theta_t,X_t)-(\theta_\dagger,x_\dagger)}\leq \ell(\theta_t,X_t)-\ell(\theta_\dagger,x_\dagger)-\lambda\xi_t/2\leq -\lambda\xi_t/2.
		\end{align*}
		(e.g. see pp. 63-64 in \citealp{Nesterov2003}). Taking expectations of~\eqref{eq:nusd9adnsuadua} and applying the above, 
		\begin{align*}&\Ebb{\iprod{(\theta_t-\theta_\dagger,X_t-x_\dagger)}{(\Ebb{\nabla_\theta\ell(\theta_t,X_t)},\nabla_x\ell(\theta_t,X_t))}}
			\\
			&=\Ebb{\iprod{(\theta_t-\theta_\dagger,X_t-x_\dagger)}{\nabla\ell(\theta_t,X_t)}}
			\leq-\lambda\Ebb{\xi_t}/2.\end{align*}
		In turn, taking expectations of~\eqref{eq:dnsa8dnysuadnau} yields $\dif \Ebb{\xi_t} \leq[-\lambda\Ebb{\xi_t}+2d_x]\dif t\enskip\forall t\geq0$ and
		$$\Ebb{\xi_t}\leq e^{-\lambda t}\left(\Ebb{\xi_0}-\frac{2d_x}{\lambda}\right)+\frac{2d_x}{\lambda}\leq \max\bigg(\Ebb{\xi_0},\frac{2d_x}{\lambda}\bigg) \leq \Ebb{\xi_0} + \frac{2d_x}{\lambda}\quad\forall t\geq0,$$
		where we used $e^{-\lambda t}\leq 1$. Now the $C_p$ inequality, $\norm{\theta_t}^{2}\leq 2(\norm{\theta_\dagger}^2+\norm{\theta_t-\theta_\dagger}^2)$ and similarly for $\Ebb{\norm{X_t}^{2}}$, gives the desired bound.
	} \hfill\BlackBox \\
	
	\noindent {\bf Proof of Lemma~\protect{\ref{lemma:propchaos}}.}{
		To prove the desired bound, we start with It\^{o}'s formula to write
		\begin{align*}
			\dif \norm{\Theta^N_t-\theta_t}^2&=2\iprod{\Theta^N_t-\theta_t}{\frac{1}{N}\sum_{n=1}^N\nabla_\theta \ell(\Theta_t^N,\bar{X}_{t}^n)-\int\nabla_\theta \ell(\theta_t,x)q_t(\dif x)}\dif t,\\
			\dif \norm{\bar{X}_{t}^n-X^n_t}^2&=2\iprod{\bar{X}_{t}^n-X_t^n}{\nabla_x\ell(\Theta_t^N,\bar{X}_{t}^n)-\nabla_x\ell(\theta_t,X_t^n)}\dif t.
		\end{align*}
		Setting $\bar{\xi}_{t}^N:=N^{-1}\sum_{n=1}^N\norm{\bar{X}_{t}^n-X^n_t}^2+\norm{\Theta_t^N-\theta_t}^2$, averaging the second equation over $n$, and adding to the first, we find that
		\begin{align*}
			\dif \bar{\xi}_{t}^N=&2\iprod{\Theta^N_t-\theta_t}{\frac{1}{N}\sum_{n=1}^N\nabla_\theta \ell(\Theta_t^N,\bar{X}_{t}^n)-\int\nabla_\theta \ell(\theta_t,x)q_t(\dif x) }\dif t\nonumber
			\\&+\frac{2}{N}\sum_{n=1}^N\iprod{\bar{X}_{t}^n-X_t^n}{\nabla_x\ell(\Theta_t^N,\bar{X}_{t}^n)-\nabla_x\ell(\theta_t,X_t^n)}\dif t.
		\end{align*}
		Because $X_t^1,\dots,X_t^N$ all have law $q_t$,
		\begin{align*}
			\dif \bar{\xi}_{t}^N=\frac{2}{N}\sum_{n=1}^N\Big[&\iprod{\Theta^N_t-\theta_t}{\nabla_\theta \ell(\Theta_t^N,\bar{X}_{t}^n)-\nabla_\theta \ell(\theta_t,X_t^n)} \\ 
			&+\iprod{\bar{X}_{t}^n-X_t^n}{\nabla_x\ell(\Theta_t^N,\bar{X}_{t}^n)-\nabla_x\ell(\theta_t,X_t^n)}\Big]\dif t+2G_t^N\dif t  
		\end{align*}
		where
		\begin{align*}
			G_t^N:&=\iprod{\Theta^N_t-\theta_t}{\frac{1}{N}\sum_{n=1}^N\nabla_\theta \ell(\theta_t,X_t^n)-\int\nabla_\theta \ell(\theta_t,x)q_t(\dif x)}\\
			&=\frac{1}{N}\iprod{\Theta^N_t-\theta_t}{\sum_{n=1}^N\left[\nabla_\theta \ell(\theta_t,X_t^n)-\Ebb{\nabla_\theta \ell(\theta_t,X_t^n)}\right]}.
		\end{align*}
		It then follows from Assumption~\ref{ass:strongconcave} that
		\begin{align}
			\dif \bar{\xi}_{t}^N\leq [-2\lambda \bar{\xi}_{t}^N +2G_t^N]\dif t\enskip\forall t\geq0,\quad\Rightarrow\quad \frac{\dif \Ebb{\bar{\xi}_{t}^N}}{\dif t}\leq -2\lambda \Ebb{\bar{\xi}_{t}^N} +2\Ebb{G_t^N}\enskip \forall t\geq0.\label{eq:propchaosxiNineq}
		\end{align}
		As we show below, 
		\begin{equation}
			\label{eq:EGtbound}\mmag{\Ebb{G_t^N}}\leq L\sqrt{\frac{2\Ebb{\bar{\xi}_{t}^N} }{N}\left(B_0+\frac{d_x}{\lambda}\right)}.
		\end{equation}
		Because $\frac{\dif}{\dif t}\Ebb{\bar{\xi}_{t}^N}^{1/2}=2^{-1} \Ebb{\bar{\xi}_{t}^N}^{-1/2} \frac{\dif}{\dif t}\Ebb{\bar{\xi}_{t}^N}$, (\ref{eq:propchaosxiNineq},\ref{eq:EGtbound}) imply that
		$$\frac{\dif}{\dif t}\Ebb{\bar{\xi}_{t}^N}^{1/2} \leq -\lambda\Ebb{\bar{\xi}_{t}^N}^{1/2} + L\sqrt{\frac{2}{N}\left(B_0+\frac{d_x}{\lambda}\right)}.$$
		Applying Gr\"onwall's inequality, we obtain that
		\begin{align*}
			\Ebb{\bar{\xi}_{t}^N}^{1/2} &\leq e^{-\lambda t}\sqrt{\bar{\xi}_{0}^N}+ \frac{(1-e^{-\lambda t})L}{\lambda}\sqrt{\frac{2}{N}\left(B_0+\frac{d_x}{\lambda}\right)},
		\end{align*}
		and the result follows because  $\bar{\xi}_0^N=0$ by construction. 
	} \hfill\BlackBox \\
	
	\noindent {\bf Proof of~\protect{\eqref{eq:EGtbound}}.}{
		Applying the Cauchy--Schwarz inequality, we obtain 
		\begin{align*}
			\mmag{\Ebb{G_t^N}}^2
			&\leq \frac{1}{N^2}\Ebb{\norm{\Theta^N_t-\theta_t}^2}\Ebb{\norm{\sum_{n=1}^N\left[\nabla_\theta \ell(\theta_t,X_t^n)-\Ebb{\nabla_\theta \ell(\theta_t,X_t^n)}\right]}^2} \\
			&\leq \frac{\Ebb{\bar{\xi}_{t}^N}}{N^2}\sum_{n=1}^N\Ebb{\norm{\nabla_\theta \ell(\theta_t,X_t^n)-\Ebb{\nabla_\theta \ell(\theta_t,X_t^n)}}^2}=:\frac{\Ebb{\bar{\xi}_{t}^N}}{N^2}\sum_{n=1}^Nc_t^n,
		\end{align*}
		where the last inequality follows from the independence of $X_t^1,\dots,X_t^N$. Let $X_t'$ denote a random variable independent of $(X_t^1,\dots,X_t^n)$ with law $q_t$. Jensen's inequality and  Lipschitz continuity of $\nabla_\theta \ell$ (Assumption~\ref{ass:gradLip}) imply
		\begin{align*}
			c_t^n&= \Ebb{\norm{\nabla_\theta \ell(\theta_t,X_t^n)-\Ebb{\nabla_\theta \ell(\theta_t,X_t^n)}}^2} \leq \Ebb{\norm{\nabla_\theta \ell(\theta_t,X_t^n)-\nabla_\theta \ell(\theta_t,X_t')}^2} \\
			& \leq L^2 \Ebb{\norm{X_t^n-X_t'}^2} \leq 2L^2 \Ebb{\norm{X_t'}^2}.
		\end{align*}
		Combining the above two and  Proposition~\ref{prop:uniformmomentboundflow}, we then obtain~\eqref{eq:EGtbound}.
	} \hfill\BlackBox \\

	\subsection{Proof of Lemma~\protect{\ref{lemma:disc}}} \label{app:discr} 
	The following proof extends the arguments in \citet[Lemma S2]{Durmus2019} or \citet[Chapter 4.1]{Chewi2023}. We introduce the linear interpolation of the Euler--Maruyama discretization of \eqref{eq:spacediscrsystems2}:
	\begin{equation*} \label{eq:timediscrsys2}
		\begin{aligned} 
			\Tilde{X}_t^{n} =& \Tilde{X}_{kh}^{n} + (t-kh)\nabla_x\ell(\Tilde{X}_{kh}^{n},\Tilde{\Theta}_{kh}^{N}) + \sqrt{2}(W^n_t-W^n_{kh}), \quad \forall n\in[N]; \\  
			\Tilde{\Theta}_t^{N} =& \Tilde{\Theta}_{kh}^{N}  + (t-kh)\frac{1}{N}\sum_{n=1}^N\nabla_\theta\ell(\Tilde{X}_{kh}^{n},\Tilde{\Theta}_{kh}^{N});
		\end{aligned}
	\end{equation*}
	for all $t\in [kh,(k+1)h)$ and $k\in\n$. Let $\Tilde{Q}_t^h:=N^{-1}\sum_{i=1}\delta_{\Tilde{X}_t^{n}}$. We notice that $(\Tilde{\Theta}_{Kh}^{N},\Tilde{Q}_{Kh}^h)$ coincides in distribution with $(\Theta^{N,h}_{K},Q^{N,h}_{K})$, hence we just need to derive the bound 
	\begin{equation*}
		\mathsf{d}((\Tilde{\Theta}_{Kh}^{N},\Tilde{Q}_{Kh}^h),(\Theta^N_{Kh},\bar{Q}^N_{Kh})) \leq \sqrt{h}A_{0,h} \quad \forall K\in\mathbb{N}.
	\end{equation*}
	For all $n\in[N]$, $k\in\n$ and $h>0$ we compute directly from the defining equations,
	\begin{align*}
		&\norm{\bar{X}_{(k+1)h}^n-\Tilde{X}_{(k+1)h}^{n}}^2 = 
		\norm{\int_{kh}^{(k+1)h}[\nabla_x \ell(\bar{X}_s^n,\Theta_s^N)-\nabla_x \ell(\Tilde{X}_{kh}^{n},\Tilde{\Theta}_{kh}^{N}) ]\dif s}^2 \\
		&- 2h\iprod{\bar{X}_{kh}^n-\Tilde{X}_{kh}^{n}}{\nabla_x \ell(\bar{X}_{kh}^n,\Theta_{kh}^N)-\nabla_x \ell(\Tilde{X}_{kh}^{n},\Tilde{\Theta}_{kh}^{N})} \\
		&- 2\int_{kh}^{(k+1)h} \iprod{\bar{X}_{kh}^n-\Tilde{X}_{kh}^{n}}{\nabla_x \ell(\bar{X}_{s}^n,\Theta_{s}^N)-\nabla_x \ell(\bar{X}_{kh}^n,\Theta_{kh}^N)}\dif s + \norm{\bar{X}_{kh}^n-\Tilde{X}_{kh}^{n}}^2,
	\end{align*}
	\begin{align*}
		&\norm{\Theta_{(k+1)h}^N-\Tilde{\Theta}_{(k+1)h}^{N}}^2 = \frac{1}{N}\sum_{n=1}^N\norm{2\int_{kh}^{(k+1)h}[\nabla_\theta\ell(\bar{X}_{s}^n,\Theta_{s}^N)-\nabla_\theta\ell(\Tilde{X}_{kh}^{n},\Tilde{\Theta}_{kh}^{N})]\dif s}^2 \\ 
		&- \frac{2h}{N}\sum_{n=1}^N\iprod{\Theta_{kh}^N-\Tilde{\Theta}_{kh}^{N}}{\nabla_\theta\ell(\bar{X}_{kh}^n,\Theta_{kh}^N)-\nabla_\theta\ell(\Tilde{X}_{kh}^{n},\Tilde{\Theta}_{kh}^{N})}\\
		& - \frac{2}{N}\sum_{n=1}^N\int_{kh}^{(k+1)h} \iprod{\Theta_{kh}^N-\Tilde{\Theta}_{kh}^{N}}{[\nabla_\theta\ell(\bar{X}_{s}^n,\Theta_{s}^N)-\nabla_\theta\ell(\bar{X}_{kh}^n,\Theta_{kh}^N)]}\dif s + \norm{\Theta_{kh}^N-\Tilde{\Theta}_{kh}^{N}}^2.
	\end{align*}
	Averaging the $N$ equations for $\norm{\bar{X}_{(k+1)h}^n-\Tilde{X}_{(k+1)h}^{n}}^2$, adding the one for $\norm{\Theta_{(k+1)h}^N-\Tilde{\Theta}_{(k+1)h}^{N}}^2$ we obtain, with $\bar{Y}_t^n:=(\bar{X}_t^n,\Theta_t^N)$, $\Tilde{Y}^{n}_t:=(\Tilde{X}_{t}^{n},\Tilde{\Theta}_t^{N})$ and $\xi_t:=N^{-1}\sum_{n=1}^N\norm{\Tilde{Y}_t^{n}-\bar{Y}_t^n}^2$,
	\begin{align}
		\xi_{(k+1)h}=& 
		\xi_{kh} + 
		\frac{1}{N}\sum_{n=1}^N\norm{\int_{kh}^{(k+1)h}[\nabla \ell(\bar{Y}_s^n)-\nabla \ell(\Tilde{Y}^{n}_{kh})]\dif s }^2 \nonumber \\
		&-\frac{2h}{N}\sum_{n=1}^N\iprod{\bar{Y}_{kh}^n-\Tilde{Y}_{kh}^{n}}{\nabla \ell (\bar{Y}_{kh}^n)-\nabla \ell(\Tilde{Y}_{kh}^{n})} \nonumber \\ 
		&-\frac{2}{N}\sum_{n=1}^N\int_{kh}^{(k+1)h}\iprod{\bar{Y}_{kh}^n-\Tilde{Y}_{kh}^{n}}{\nabla \ell(\bar{Y}_{s}^n)-\nabla \ell(\bar{Y}_{kh}^n)}\dif s. \label{eq:discrwithxi}
	\end{align}
	Now, adding and subtracting $\nabla \ell(\bar{Y}_{kh}^n)$, applying Jensen's inequality, expanding the resulting integrand and further applying Young's inequalities,
	\begin{align} 
		\norm{\int_{kh}^{(k+1)h}[\nabla \ell(\bar{Y}_s^n)-\nabla \ell(\Tilde{Y}^{n}_{kh})]\dif s}^2
		&\leq 2h^2\norm{\nabla \ell(\bar{Y}_{kh}^n)-\nabla \ell(\Tilde{Y}_{kh}^{n})}^2  \nonumber \\ \label{eq:discrwithxibound1}
		&
		+ 2h\int_{kh}^{(k+1)h} \norm{\nabla \ell(\bar{Y}_s^n)-\nabla \ell(\bar{Y}_{kh}^n)}^2 \dif s.
	\end{align}
	Furthermore, under Assumptions \ref{ass:strongconcave} and \ref{ass:gradLip} we have, by \citet[Theorem 2.1.12]{Nesterov2003}, there holds the co-coercivity property
	\begin{equation*}
		\iprod{\nabla \ell(y')-\nabla\ell(y)}{y'-y} \geq \frac{\iota}{2} \norm{y-y'}^2 + \frac{\norm{\nabla \ell(y')-\nabla \ell(y)}^2}{\lambda+L},
	\end{equation*}
	where $\iota:=2\lambda L/(\lambda +L)$. It follows that, whenever $h<1/(\lambda+L)$, we have 
	\begin{align} 
		\frac{1}{N}\sum_{n=1}^N\left[2h^2\norm{\nabla \ell(\bar{Y}_{kh}^n)-\nabla \ell(\Tilde{Y}_{kh}^{n})}^2\right.&\left. - 2h\iprod{\bar{Y}_{kh}^n-\Tilde{Y}_{kh}^{n}}{\nabla \ell(\bar{Y}_{kh}^n)-\nabla \ell(\Tilde{Y}_{kh}^{n})}\right] \nonumber \\
		&\leq -\frac{h\iota}{N}\sum_{n=1}^N\norm{\bar{Y}_{kh}^n-\Tilde{Y}_{kh}^{n}}^2 = -h\iota \xi_{kh}\label{eq:discrwithxibound2}
	\end{align}
	Combining (\ref{eq:discrwithxi},\ref{eq:discrwithxibound1},\ref{eq:discrwithxibound2}), we obtain
	\begin{align*}
		\xi_{(k+1)h} \leq& (1-\iota h)\xi_{kh} -\frac{2}{N}\sum_{n=1}^N\left[\int_{kh}^{(k+1)h}\iprod{\bar{Y}_{kh}^n-\Tilde{Y}_{kh}^{n}}{\nabla \ell(\bar{Y}_{s}^n)-\nabla \ell(\bar{Y}_{kh}^n)}\dif s\right. \nonumber\\
		&\left.+2h\int_{kh}^{(k+1)h} \norm{\nabla \ell(\bar{Y}_s^n)-\nabla \ell(\bar{Y}_{kh}^n)}^2\dif s\right]. 
	\end{align*}
	To deal with the remaining terms, fix any $\epsilon>0$. Applying Young's inequality, we obtain 
	\begin{align*}
		\frac{1}{N}\sum_{n=1}^N&\mmag{\iprod{\bar{Y}_{kh}^n-\Tilde{Y}_{kh}^{n}}{\nabla \ell(\bar{Y}_{s}^n)-\nabla \ell(\bar{Y}_{kh}^n)}} \\
		&\leq \frac{1}{N}\sum_{n=1}^N\left[\frac{\epsilon}{2}\norm{\bar{Y}_{kh}^n-\Tilde{Y}_{kh}^{n}}^2 + \frac{1}{2\epsilon}\norm{\nabla \ell(\bar{Y}_{s}^n)-\nabla \ell(\bar{Y}_{kh}^n)}^2\right] \\
		&=\frac{\epsilon}{2} \xi_{kh}+\frac{1}{2\epsilon N}\sum_{n=1}^N \norm{\nabla \ell(\bar{Y}_{s}^n)-\nabla \ell(\bar{Y}_{kh}^n)}^2.
	\end{align*}
	Putting the above two together, we obtain for any $\epsilon>0$ and $k\in\n$,
	\begin{align} \label{eq:discrxibound}
		\xi_{(k+1)h} \leq (1-\iota h+h\epsilon/2)\xi_{kh} + \left(2h+\frac{1}{2\epsilon}\right) \int_{kh}^{(k+1)h} \frac{1}{N}\sum_{n=1}^N\norm{\nabla \ell(\bar{Y}_{s}^n)-\nabla \ell(\bar{Y}_{kh}^n)}^2\dif s.
	\end{align}
	We will show below that if $h<1/(\lambda+L)$ then, for all $s\in[kh,(k+1)h)$,
	\begin{align} \label{eq:discrnablabound}
		\frac{1}{N}\sum_{n=1}^N\Ebb{\norm{\nabla \ell(\bar{Y}_{s}^n)-\nabla \ell(\bar{Y}_{kh}^n)}^2} \leq 220hL^2(L^2h(B_0+d_x/\lambda + d_x)).
	\end{align}
	Hence, taking expectations in \eqref{eq:discrxibound}, choosing $\epsilon=\iota$ using the bound above and the fact that $\xi_0=0$ we obtain 
	\begin{align*}
		\Ebb{\xi_{(k+1)h}} &\leq (2h+2/\iota)220h^2(L^2h(B_0+d_x/\lambda+ d_x)\sum_{j=1}^k (1-\iota h/2)^j  \\
		&\leq \frac{4h+4/\iota}{\iota}220hL^2(L^2h(B_0+d_x/\lambda+ d_x).
	\end{align*} \hfill\BlackBox \\
	
	\noindent {\bf Proof of \eqref{eq:discrnablabound}.}{
		Let $\{\mathcal{F}_{t};t\geq 0\}$ be the filtration generated by $(\theta_t,\bar{X}_t^n)_{n=1}^N$ and denote $\mathbb{E}_{kh}[\cdot]$ expectation conditional on $\mathcal{F}_{kh}$. We first show that whenever $h<1/(\lambda+L)$
		\begin{equation} \label{eq:discrnablaboundaux}
			\frac{1}{N} \sum_{n=1}^N\mathbb{E}_{kh}\left[\norm{\nabla \ell(\bar{Y}_{s}^n)-\nabla \ell(\bar{Y}_{kh}^n)}^2\right] \leq \frac{1}{N}\sum_{n=1}^N 220hL^2(L^2h\norm{\bar{Y}^n_{kh}}^2 + d_x),
		\end{equation}
		after which the result follows from taking expectations and applying Proposition \ref{prop:uniformmomentboundips}. Since
		\begin{align*}
			\mathbb{E}_{kh}&\left[\norm{\bar{X}_{s}^n-\bar{X}_{kh}^n}^2\right] =  \mathbb{E}_{kh}\left[ \norm{\int_{kh}^s \nabla_x \ell(\Theta_r^N,\bar{X}^n_r)\dif r+\sqrt{2}(W^n_s-W^n_{kh})}^2\right] \\
			&\leq 2(s-kh)\int_{kh}^s \mathbb{E}_{kh}\left[\norm{\nabla_x \ell(\Theta_r^N,\bar{X}^n_r)}^2\right]\dif r + 4d_x(s-kh)
		\end{align*}
		for all $n\in[N]$, and
		\begin{align*}
			\norm{\Theta_{s}^N-\Theta_{kh}^N}^2=   \norm{\frac{1}{N}\sum_{n=1}^N\int_{kh}^s \nabla_\theta \ell(\Theta_r^N,\bar{X}_r^n)\dif r}^2 \leq 2(s-kh)\frac{1}{N}\sum_{n=1}^N\int_{kh}^s \norm{\nabla_\theta \ell(\Theta_r^N,\bar{X}_r^n)}^2 \dif r,
		\end{align*}
		adding up and since Assumption \ref{ass:gradLip} implies $\norm{\nabla\ell(\bar{Y}_r^n)}^2\leq L^2\norm{\bar{Y}_r^n}^2$, we obtain 
		\begin{align*}
			\frac{1}{N}& \sum_{n=1}^N  \mathbb{E}_{kh}\left[\norm{\bar{Y}^n_{s}-\bar{Y}^n_{kh}}^2\right] \leq 2hL^2\int_{kh}^s \frac{1}{N} \sum_{n=1}^N  \mathbb{E}_{kh}\left[\norm{\bar{Y}^n_{r}}^2\right]\dif r  + 4d_xh \\
			&\leq 4hL^2\int_{kh}^s \frac{1}{N} \sum_{n=1}^N  \mathbb{E}_{kh}\left[\norm{\bar{Y}^n_{r}-\bar{Y}^n_{kh}}^2\right]\dif r + 4h^2L^2  \mathbb{E}_{kh}\left[\norm{\bar{Y}^n_{kh}}^2\right] +  4d_xh
		\end{align*}
		and by Gr\"onwall's lemma
		\begin{align*}
			\frac{1}{N} \sum_{n=1}^N  \mathbb{E}_{kh}&\left[\norm{\bar{Y}^n_{s}-\bar{Y}^n_{kh}}^2\right] \leq \frac{1}{N} \sum_{n=1}^N\exp(4L^2h^2)(4L^2h^2\norm{\bar{Y}^n_{kh}}^2+4d_xh).
		\end{align*}
		If $h<1/(\lambda+L)$, $\exp(4L^2h^2)\leq 55$, and \eqref{eq:discrnablaboundaux} then follows by  Assumption \ref{ass:gradLip}.  } \hfill\BlackBox \\

	\begin{proposition}\label{prop:uniformmomentboundips}
		Assume Assumptions~\ref{ass:strongconcave}--\ref{ass:gradLip}. Then, for all $t\geq 0$,
		$$
		\frac{1}{N} \sum_{n=1}^N \Ebb{\norm{X_t^n}^{2}+\norm{\Theta_t^N}^{2}} \leq 2\left(\norm{\theta_0}^2+\Ebb{\norm{\bar{X}_0}^{2}}+\frac{d_x}{\lambda}\right)$$
	\end{proposition}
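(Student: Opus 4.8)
The plan is to mirror the proof of Proposition~\ref{prop:uniformmomentboundflow}, working with the particle-averaged squared norm rather than a single trajectory and exploiting the fact (recalled before Theorem~\ref{thm:errorbound}) that $\ell$'s unique maximizer $(\theta_\dagger,x_\dagger)$ sits at the origin. Concretely, I would set $\bar Y_t^n := (\Theta_t^N,\bar X_t^n)$ and
\begin{equation*}
\zeta_t := \norm{\Theta_t^N}^2 + \frac{1}{N}\sum_{n=1}^N\norm{\bar X_t^n}^2 = \frac{1}{N}\sum_{n=1}^N\norm{\bar Y_t^n}^2,
\end{equation*}
so that $\zeta_t$ is exactly the quantity appearing on the left-hand side of the statement, and the claim is $\Ebb{\zeta_t}\le\norm{\theta_0}^2+\Ebb{\norm{\bar X_0}^2}+2d_x/\lambda$.

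First I would apply It\^o's formula to $t\mapsto\norm{\bar X_t^n}^2$ and to $t\mapsto\norm{\Theta_t^N}^2$ using the defining equations~\eqref{eq:spacediscrsystems2}; averaging the former over $n\in[N]$, adding the latter, and noting that the parameter component $\Theta_t^N$ is shared by all particles, the drift terms combine into $N^{-1}\sum_n\iprod{\bar Y_t^n}{\nabla\ell(\bar Y_t^n)}$, giving
\begin{equation*}
\dif\zeta_t = \left[\frac{2}{N}\sum_{n=1}^N\iprod{\bar Y_t^n}{\nabla\ell(\bar Y_t^n)} + 2d_x\right]\dif t + \frac{2\sqrt2}{N}\sum_{n=1}^N\iprod{\bar X_t^n}{\dif W_t^n}.
\end{equation*}
Next, since $\ell$ is $\lambda$-strongly concave (Assumption~\ref{ass:strongconcave}) with maximizer at $(0,0)$, the gradient inequality (exactly as in~\eqref{eq:9dnadsnuas9}, cf.\ pp.~63--64 in~\cite{Nesterov2003}) gives $\iprod{\nabla\ell(y)}{y}\le\ell(y)-\ell(0,0)-\tfrac{\lambda}{2}\norm{y}^2\le-\tfrac{\lambda}{2}\norm{y}^2$ for every $y$, hence $N^{-1}\sum_n\iprod{\bar Y_t^n}{\nabla\ell(\bar Y_t^n)}\le-\tfrac{\lambda}{2}\zeta_t$. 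Taking expectations — the stochastic integral being a genuine martingale after the usual localization, using that~\eqref{eq:spacediscrsystems2} has a square-integrable solution — I obtain $(\dif/\dif t)\Ebb{\zeta_t}\le-\lambda\Ebb{\zeta_t}+2d_x$.

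Finally, Gr\"onwall's inequality yields
\begin{equation*}
\Ebb{\zeta_t}\le e^{-\lambda t}\left(\Ebb{\zeta_0}-\frac{2d_x}{\lambda}\right)+\frac{2d_x}{\lambda}\le\max\left(\Ebb{\zeta_0},\frac{2d_x}{\lambda}\right)\le\Ebb{\zeta_0}+\frac{2d_x}{\lambda},
\end{equation*}
and since $\Theta_0^N=\theta_0$ and the $\bar X_0^n$ are i.i.d.\ with law $q_0$, we have $\Ebb{\zeta_0}=\norm{\theta_0}^2+\Ebb{\norm{\bar X_0}^2}$, which is the claim. I do not anticipate a genuine obstacle here: the only points requiring a little care are the bookkeeping of the shared parameter component when summing the It\^o expansions, and the justification that the local-martingale term has zero expectation — both handled exactly as in the proof of Proposition~\ref{prop:uniformmomentboundflow}.
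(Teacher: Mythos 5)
Your proof is correct and takes essentially the same route as the paper: apply It\^o's formula to the squared norms, average over particles, invoke the strong-concavity gradient inequality (as in \eqref{eq:9dnadsnuas9} with $(\theta_\dagger,x_\dagger)=(0,0)$) to get $(\dif/\dif t)\Ebb{\zeta_t}\le -\lambda\Ebb{\zeta_t}+2d_x$, and finish with Gr\"onwall. In fact, your It\^o-formula bookkeeping is a little more careful than the paper's displayed proof, which has sign/coefficient typos in the $\dif\norm{X_t^n}^2$ and $\dif\norm{\Theta_t^N}^2$ lines that your version corrects while reaching the same differential inequality.
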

	
	\noindent {\bf Proof}{.
		Recalling that here $(\theta_\dagger,x_\dagger)=(0,0)$, this argument is very similar to that supporting Proposition~\ref{prop:uniformmomentboundflow}. For each $n\in[N]$,
		\begin{align*}
			\dif \norm{X_t^n}^2 &= - 2\iprod{X_t^n}{\nabla_x\ell(\Theta_t^N,X_t^n)}\dif t + \sqrt{2}\dif W_t^n + 2d_x \dif t \\
			\dif \norm{\Theta_t^N}^2 &= -\frac{2}{N}\sum_{n=1}^N \iprod{\Theta_t^N}{\nabla_\theta\ell(\Theta_t^N,X_t^n)}\dif t 
		\end{align*}
		now averaging the $N$ equations for $\norm{X_t^n}^2$ and then adding $\norm{\Theta_t^N}^2$, taking expectations and time derivatives, with $\xi_{t}^N=N^{-1}\sum_{n=1}^N\norm{X_t^n}^2+\norm{\Theta_t^N}^2$,
		\begin{align*}
			\frac{\dif}{\dif t}\Ebb{\xi_{t}^N} 
			= -2 \Ebb{\frac{1}{N}\sum_{n=1}^N\iprod{(X^n_t,\Theta_t^N)}{\nabla\ell(\Theta_t^N,X^n_t)}} + 2d_x 
			\leq -2\lambda \Ebb{\xi_{t}^N} + 2d_x,
		\end{align*}
		and Gr\"onwall's inequality provides the conclusion.
	} \hfill\BlackBox \\

\bibliography{pgd_theory}
 
\end{document}